\title{Complexity Results for Manipulation,\\Bribery and Control
of the Kemeny\\Procedure in Judgment Aggregation}
\author{Ronald de Haan\\[5pt]{\small Algorithms \& Complexity Group\\Technische Universit\"{a}t Wien}}
\newcommand{\SB}{\{\,}%
\newcommand{\SM}{\;{:}\;}%
\newcommand{\SE}{\,\}}%
\newcommand{\SBs}{\{}%
\newcommand{\SEs}{\}}%
\renewcommand{\P}{\text{\normalfont P}}
\newcommand{\NP}{\text{\normalfont NP}}
\newcommand{\co}{\text{\normalfont co-}}
\newcommand{\BB}{\mathbb{B}}
\newcommand{\NN}{\mathbb{N}}
\newcommand{\III}{\mathcal{I}}
\newcommand{\JJJ}{\mathcal{J}}
 \newcommand{\RRR}{\mathcal{R}}
\newcommand{\Card}[1]{|#1|}
\newcommand{\size}[1]{\ensuremath{\mtext{size}(#1)}}
\newcommand{\mtext}[1]{\text{\normalfont #1}}
\newcommand{\QSat}[1]{\ensuremath{\mtext{\sc QSat}_{#1}}}
\newcommand{\Var}[1]{\mtext{Var(\ensuremath{#1})}}
\newcommand{\Pre}[1]{\mtext{[\ensuremath{#1}]}}
\newcommand{\bfSigmaP}[1]{\ensuremath{\mathbf{\Sigma}^{\mtext{\textbf{p}}}_{\textbf{#1}}}}
\newcommand{\SigmaP}[1]{\ensuremath{\Sigma^{\mtext{p}}_{#1}}}
\newcommand{\PiP}[1]{\ensuremath{\Pi^{\mtext{p}}_{#1}}}
\newcommand{\prof}[1]{\text{\boldmath $#1$}}
\newcommand{\Dist}[0]{\ensuremath{\mtext{Dist}}}
\newcommand{\Kemeny}[0]{\mtext{Kemeny}}
\newcommand{\CautiousManipulation}[1]{\ensuremath{\mtext{\textsc{Cautious}-}\allowbreak\mtext{\textsc{Mani}}\-\mtext{\textsc{pulation}}\-\mtext{(#1)}}}
\newcommand{\OptimisticManipulation}[1]{\ensuremath{\mtext{\textsc{Optimistic}-}\allowbreak\mtext{\textsc{Mani}}\-\mtext{\textsc{pulation}}\-\mtext{(#1)}}}
\newcommand{\PessimisticManipulation}[1]{\ensuremath{\mtext{\textsc{Pessimistic}-}\allowbreak\mtext{\textsc{Mani}}\-\mtext{\textsc{pulation}}\-\mtext{(#1)}}}
\newcommand{\SuperoptimisticManipulation}[1]{\ensuremath{\mtext{\textsc{Superoptimistic}-}\allowbreak\mtext{\textsc{Mani}}\-\mtext{\textsc{pulation}}\-\mtext{(#1)}}}
\newcommand{\SafeManipulation}[1]{\ensuremath{\mtext{\textsc{Safe}-}\allowbreak\mtext{\textsc{Super}}\-\mtext{\textsc{optimistic}-}\allowbreak\mtext{\textsc{Mani}}\-\mtext{\textsc{pulation}}\-\mtext{(#1)}}}
\newcommand{\CautiousExactManipulation}[1]{\ensuremath{\mtext{\textsc{Cautious}-}\allowbreak\mtext{\textsc{Exact}-}\allowbreak\mtext{\textsc{Manipulation}}\-\mtext{(#1)}}}
\newcommand{\BraveExactManipulation}[1]{\ensuremath{\mtext{\textsc{Brave}-}\allowbreak\mtext{\textsc{Exact}-}\allowbreak\mtext{\textsc{Manipulation}}\-\mtext{(#1)}}}
\newcommand{\CautiousBribery}[1]{\ensuremath{\mtext{\textsc{Cautious}-}\allowbreak\mtext{\textsc{Bribery}}\-\mtext{(#1)}}}
\newcommand{\OptimisticBribery}[1]{\ensuremath{\mtext{\textsc{Optimistic}-}\allowbreak\mtext{\textsc{Bribery}}\-\mtext{(#1)}}}
\newcommand{\PessimisticBribery}[1]{\ensuremath{\mtext{\textsc{Pessimistic}-}\allowbreak\mtext{\textsc{Bribery}}\-\mtext{(#1)}}}
\newcommand{\SuperoptimisticBribery}[1]{\ensuremath{\mtext{\textsc{Superoptimistic}-}\allowbreak\mtext{\textsc{Bribery}}\-\mtext{(#1)}}}
\newcommand{\SafeBribery}[1]{\ensuremath{\mtext{\textsc{Safe}-}\allowbreak\mtext{\textsc{Super}}\-\mtext{\textsc{optimistic}-}\allowbreak\mtext{\textsc{Bribery}}\-\mtext{(#1)}}}
\newcommand{\CautiousExactBribery}[1]{\ensuremath{\mtext{\textsc{Cautious}-}\allowbreak\mtext{\textsc{Exact}-}\allowbreak\mtext{\textsc{Bribery}}\-\mtext{(#1)}}}
\newcommand{\BraveExactBribery}[1]{\ensuremath{\mtext{\textsc{Brave}-}\allowbreak\mtext{\textsc{Exact}-}\allowbreak\mtext{\textsc{Bribery}}\-\mtext{(#1)}}}
\newcommand{\CautiousExactControlByAddingIssues}[1]{\ensuremath{\mtext{\textsc{Cautious}-}\allowbreak\mtext{\textsc{Exact}-}\allowbreak\mtext{\textsc{Control}-}\allowbreak\mtext{\textsc{by}-}\allowbreak\mtext{\textsc{Adding}-}\allowbreak\mtext{\textsc{Issues}}\-\mtext{(#1)}}}
\newcommand{\BraveExactControlByAddingIssues}[1]{\ensuremath{\mtext{\textsc{Brave}-}\allowbreak\mtext{\textsc{Exact}-}\allowbreak\mtext{\textsc{Control}-}\allowbreak\mtext{\textsc{by}-}\allowbreak\mtext{\textsc{Adding}-}\allowbreak\mtext{\textsc{Issues}}\-\mtext{(#1)}}}
\newcommand{\CautiousExactControlByDeletingIssues}[1]{\ensuremath{\mtext{\textsc{Cautious}-}\allowbreak\mtext{\textsc{Exact}-}\allowbreak\mtext{\textsc{Control}-}\allowbreak\mtext{\textsc{by}-}\allowbreak\mtext{\textsc{Deleting}-}\allowbreak\mtext{\textsc{Issues}}\-\mtext{(#1)}}}
\newcommand{\BraveExactControlByDeletingIssues}[1]{\ensuremath{\mtext{\textsc{Brave}-}\allowbreak\mtext{\textsc{Exact}-}\allowbreak\mtext{\textsc{Control}-}\allowbreak\mtext{\textsc{by}-}\allowbreak\mtext{\textsc{Deleting}-}\allowbreak\mtext{\textsc{Issues}}\-\mtext{(#1)}}}
\newenvironment{myquote}{\begin{center}
    \begin{minipage}{.96\linewidth}}{\end{minipage}\end{center}}
\newcommand{\probdef}[1]{
  \begin{myquote}
    \framebox[\linewidth-15pt][l]{\parbox{\linewidth-25pt}{
    #1
    }}
  \end{myquote}
}
\DeclareRobustCommand{\DE}[3]{#2}
\newtheorem{theorem}{Theorem}
\newtheorem{proposition}[theorem]{Proposition}
\newtheorem{proposition*}[theorem]{Proposition$^{\star}$}
\newtheorem{corollary}[theorem]{Corollary}
\newtheorem{corollary*}[theorem]{Corollary$^{\star}$}
\begin{document}


\begin{abstract}
  We study the computational complexity of several scenarios of strategic
  behavior for the Kemeny procedure in the setting of judgment aggregation.
  In particular, we investigate (1)~manipulation, where an individual aims to
  achieve a better group outcome by reporting an insincere individual opinion,
  (2)~bribery, where an external agent aims to achieve an outcome with
  certain properties by bribing a number of individuals,
  and (3)~control (by adding or deleting issues),
  where an external agent aims to achieve an outcome
  with certain properties by influencing the set of issues in the judgment
  aggregation situation.
  We show that determining whether these types of strategic behavior
  are possible (and if so, computing a policy for successful strategic behavior)
  is complete for the second level of the Polynomial Hierarchy.
  That is, we show that these problems are \SigmaP{2}-complete.
\end{abstract}



\section{Introduction}
\label{sec:introduction}

An important topic in the research field of computational social choice
is the (im)possibility of strategic behavior in collective decision making.
This is epitomized by the eminence of results such as the
Gibbard-Satterthwaite Theorem \cite{Gibbard73,Satterthwaite75},
that identifies various conditions under which
strategic voting (or manipulation) is, in principle, unavoidable.
Manipulation in voting is a typical example of strategic behavior, and involves
individuals reporting insincere preferences with the aim of obtaining
a group outcome that is preferable for them.

Since strategic behavior in collective decision making is generally
considered to be (socially) undesirable, a lot of research effort has been
invested in diagnosing what social choice procedures are resistant
to strategic behavior, and under what conditions.
An important research direction along these lines investigates how
computational complexity can be used to establish that various
social choice procedures are (in many cases) practically immune
to strategic behavior \cite{BartholdiToveyTrick89,ConitzerWalsh16}.
For example, in many cases, it is in principle possible to manipulate
voting rules (by reporting insincere preferences), but determining what
insincere preference leads to a better outcome is computationally so
demanding that it prevents manipulative behavior from being a useful
policy.

\paragraph{Contributions}
In this paper, we use the framework of computational complexity theory
to study several scenarios of strategic behavior in the setting of judgment
aggregation.
Judgment aggregation studies collective decision making on a set of
issues that are logically related \cite{Endriss16}.
In particular, we study three scenarios of strategic behavior
for the \emph{Kemeny judgment aggregation procedure}---which is one of the
most prominent judgment aggregation procedures known from the
literature.
We investigate:
\begin{enumerate}
  \item \emph{manipulation}, where an individual reports an insincere
    individual judgment in an attempt to enforce a preferable
    group judgment (from their point of view);
  \item \emph{bribery}, where an external party bribes several
    individuals that are involved in the group decision process
    (that is, the briber stipulates their individual judgments)
    in order to obtain a group judgment with certain properties; and
  \item \emph{control}, where an external party controls the
    set of issues that are involved in the judgment aggregation
    setting, with the aim of achieving a group judgment with
    certain properties.
\end{enumerate}
Concretely, we study various different
decision problems that formalize the computational tasks involved in
the strategic behavior in each of these scenarios.
We show that all the computational problems that we consider in this
paper are \textbf{\bfSigmaP{2}-complete}.
That is, we show that:
\begin{itemize}
  \item Manipulation for the Kemeny rule in judgment aggregation
    is \SigmaP{2}-complete.
  \item Bribery for the Kemeny rule in judgment aggregation
    is \SigmaP{2}-complete.
  \item Control (by adding or removing issues)
    for the Kemeny rule in judgment aggregation
    is \SigmaP{2}-complete.
\end{itemize}
(Completeness for the complexity class \SigmaP{2} indicates that a problem
is computationally intractable. Even the easier problem of checking
whether a given candidate solution is in fact a solution is not efficiently
solvable---it requires solving an \NP{}-complete problem.)

Various different frameworks have been used in the literature
to formalize the setting of judgment aggregation
(see, e.g.,~\cite{EndrissGrandiDeHaanLang16}).
The computational complexity results that we develop hold for two
commonly considered judgment aggregation frameworks:
\emph{formula-based judgment aggregation}
and \emph{constraint-based judgment aggregation}.
We discuss these judgment aggregation frameworks in more
detail in Section~\ref{sec:ja}.
(In order to capture the scenario of control naturally
in the constraint-based judgment aggregation framework,
we consider a slightly extended variant of this framework.
For more details, see Section~\ref{sec:ja-baic}.)

Most of the various forms of strategic behavior that we consider in this paper
involve the incentive of achieving a \emph{preferable} group outcome.
There are various ways to define preference relations over (individual and
group) judgments.
The preferences that we study are based on weighted Hamming distances.
That is, we consider weight functions that assign to each issue a
weight that indicates how important it is for an individual (or
for an external party) that the group judgment agree with their judgment
on this issue.
Such weight functions naturally induce preference relations over judgments.

In addition, we study variants of the strategic behavior scenarios where
the objective is to obtain a group judgment that includes a given set
of conclusions.
This can be seen as an all-or-nothing variants (the group outcome
either includes the required set of conclusions or it does not),
whereas the variants involving preferences based on weighted Hamming
distances offer a more gradual view (maybe the optimal outcome is not
possible, but the current outcome can still be improved slightly by
behaving strategically).

\paragraph{Worst-case Complexity}
The computational intractability results that we provide in this paper
can be seen as positive results, since they show that various kinds
of undesirable strategic behavior cannot be used efficiently across the
board due to computational complexity obstructions.
However, it is important to emphasize that the computational complexity
results that we provide in this paper are \emph{worst-case complexity}
results.
Worst-case intractability results indicate that there is no
algorithm that works efficiently in all possible cases.
However, it might well be the case that there are restricted
settings where several forms of strategic behavior are efficiently possible.

In order to consolidate the conclusion that strategic behavior for the
Kemeny procedure in judgment aggregation is computationally intractable,
further research is needed.
Such further research would have to establish that the various forms of
strategic behavior remain computationally intractable in many restricted
settings.
A key tool for establishing computational complexity results for restricted
settings is the paradigm of parameterized complexity
\cite{DowneyFellows99,DowneyFellows13,FlumGrohe06,Niedermeier06}%
---this is a framework where the complexity of computational problems
is measured in a multi-dimensional way, in contrast to the classical theory
of computational complexity, where the complexity of problems is measured
only in terms of the input size in bits.

\paragraph{Related Work}
The concept of manipulation in judgment aggregation has been studied
before in the literature, both from an axiomatic point of view
\cite{BotanNovaroEndriss16,DietrichList07,DokowFalik12}
and from a computational complexity point of view
\cite{BaumeisterErdelyiErdelyiRothe15,EndrissGrandiPorello12}.
The complexity analysis of manipulation in judgment aggregation
that has been done in the literature is restricted to uniform
premise-based quota rules.
Additionally, bribery in judgment aggregation has been studied
from a computational complexity point of view
for uniform premise-based quota rules
\cite{BaumeisterErdelyiErdelyiRothe15}.

\paragraph{Outline}
We begin in Section~\ref{sec:prelims} by considering relevant notions
from computational complexity and judgment aggregation that we use
in this paper.
Then, in Section~\ref{sec:manipulation}, we develop the intractability
results for the scenario of manipulation.
In Section~\ref{sec:bribery}, we turn to the scenario of bribery,
and in Section~\ref{sec:control}, we consider the scenario of control
(by adding or deleting issues).
Finally, we conclude in Section~\ref{sec:conclusion}.

\section{Preliminaries}
\label{sec:prelims}

Before we turn to the complexity results that we develop in this paper,
we review several relevant concepts from computational complexity
theory and judgment aggregation.

\subsection{Complexity Theory}

We begin with reviewing some basic notions from computational complexity.
We assume the reader to be familiar with the complexity classes
\P{} and \NP{}, and with basic notions such as polynomial-time
reductions.
For more details, we refer to textbooks on computational complexity
theory (see, e.g.,~\cite{AroraBarak09}).

We briefly review the classes of the Polynomial Hierarchy (PH)
\cite{MeyerStockmeyer72,Papadimitriou94,Stockmeyer76,Wrathall76}.
In order to do so, we consider quantified Boolean formulas.
A \emph{(fully) quantified Boolean formula (in prenex form)} is a formula
of the form~$Q_1 x_1 Q_2 x_2 \dotsc Q_n x_n. \psi$,
where all~$x_i$ are propositional variables,
each~$Q_i$ is either an existential or a universal quantifier,
and~$\psi$ is a (quantifier-free) propositional formula over the
variables~$x_1,\dotsc,x_n$.
Truth for such formulas is defined in the usual way.

To consider the complexity classes of
the PH, we restrict the number of quantifier alternations
occurring in quantified Boolean formulas, i.e., the number of
times where~$Q_i \neq Q_{i+1}$.
We consider the complexity classes~\SigmaP{k}, for each~$k \geq 1$.
Let~$k \geq 1$ be an arbitrary, fixed constant.
The complexity class~\SigmaP{k} consists of all decision problems
for which there exists a polynomial-time reduction to the problem
\QSat{k}, that is defined as follows.
Instances of the problem \QSat{k} are quantified Boolean formulas of the
form~$\exists x_1 \dotsc \exists x_{\ell_1} \forall x_{\ell_1+1}
\dotsc \forall x_{\ell_2} \dotsc \allowbreak Q_k x_{\ell_{k-1}+1} \dotsc Q_k x_{\ell_k}.$ $\psi$,
where~$Q_k = \exists$ if~$k$ is odd and~$Q_k = \forall$ if~$k$ is even,
where~$1 \leq \ell_{1} \leq \dotsm \leq \ell_{k}$,
and where~$\psi$ is quantifier-free.
The problem is to decide if the quantified Boolean formula is true.
The complementary class~\PiP{k} consists of all decision problems
for which there exists a polynomial-time reduction to the problem \co\QSat{k},
that is complementary to the problem \QSat{k}.
The Polynomial Hierarchy (PH) consists of the classes \SigmaP{k} and \PiP{k},
for all~$k \geq 1$.

Alternatively, one can characterize the class \SigmaP{2} using nondeterministic
polynomial-time algorithms with access to an oracle for an \NP{}-complete problem.
Let~$O$ be a decision problem.
A Turing machine~$\mathbb{M}$ with access to an \emph{$O$~oracle}
is a Turing machine with a dedicated \emph{oracle tape}
and dedicated states~$q_{\mtext{query}}$,~$q_{\mtext{yes}}$
and~$q_{\mtext{no}}$.
Whenever~$\mathbb{M}$ is in the state~$q_{\mtext{query}}$,
it does not proceed
according to the transition relation, but instead it transitions into
the state~$q_{\mtext{yes}}$ if the oracle tape contains
a string~$x$ that is a yes-instance for the problem~$O$, i.e.,~if~$x \in O$,
and it transitions into the state~$q_{\mtext{no}}$ if~$x \not\in O$.
Intuitively, the oracle solves arbitrary instances of~$O$
in a single time step.
The class \SigmaP{2} consists of all decision problems that can be solved
in polynomial time by a nondeterministic Turing machine that has access
to an $O$-oracle, for some~$O \in \NP{}$.

\subsection{Judgment Aggregation}
\label{sec:ja}

Next, we introduce the two formal judgment aggregation frameworks
that we use in this paper:
\emph{formula-based judgment aggregation}
(as used by, e.g.,~\cite{DietrichList08,%
EndrissGrandiPorello12,LangSlavkovik14})
and \emph{constraint-based judgment aggregation}
(as used by, e.g.,~\cite{Grandi12}).
Moreover, we briefly discuss an extended variant of the
constraint-based judgment aggregation framework
(as considered in, e.g.,~\cite{EndrissGrandiDeHaanLang16}).

\subsubsection{Formula-Based Judgment Aggregation}

We begin with the framework of formula-based judgment aggregation.

An \emph{agenda} is a finite, nonempty set~$\Phi$ of formulas that
does not contain any doubly-negated formulas and that is closed
under complementation.
Moreover, if~$\Phi = \SBs \varphi_1,\dotsc,\varphi_n, \allowbreak{}
\neg \varphi_1,\dotsc,\neg \varphi_n \SEs$ is an agenda,
then we let~$\Pre{\Phi} = \SBs \varphi_1,\dotsc,\varphi_n \SEs$
denote the \emph{pre-agenda} associated to the agenda~$\Phi$.
We denote the bitsize of the agenda~$\Phi$
by~$\size{\Phi} = \sum\nolimits_{\varphi \in \Phi} \Card{\varphi}$.
A \emph{judgment set~$J$} for an agenda~$\Phi$ is a
subset~$J \subseteq \Phi$.
We call a judgment set~$J$ \emph{complete} if~$\varphi \in J$
or~${\sim}\varphi \in J$ for all~$\varphi \in \Phi$;
and we call it \emph{consistent} if there exists an assignment
that makes all formulas in~$J$ true.
Intuitively, the consistent and complete judgment sets are the
opinions that individuals and the group can have.

We associate with each agenda~$\Phi$ an integrity
constraint~$\Gamma$, that can be used to further restrict the
set of feasible opinions.
Such an \emph{integrity constraint} consists of
a single propositional formula.
We say that a judgment set~$J$ is \emph{$\Gamma$-consistent}
if there exists a truth assignment that simultaneously makes
all formulas in~$J$ and~$\Gamma$ true.
Let~$\JJJ(\Phi,\Gamma)$ denote the set of all complete and
$\Gamma$-consistent subsets of~$\Phi$.
We say that finite sequences~$\prof{J} \in \JJJ(\Phi,\Gamma)^{+}$
of complete and $\Gamma$-consistent judgment sets are \emph{profiles},
and where convenient we equate a profile~$\prof{J} = (J_1,\dotsc,J_p)$
with the (multi)set~$\SBs J_1,\dotsc,J_p \SEs$.
Moreover, for~$i \in [p]$, we let~$\prof{J}_{-i}$
denote the profile~$(J_1,\dotsc,J_{i-1},J_{i+1},\dotsc,J_{p})$.

A \emph{judgment aggregation procedure} (or \emph{rule})
for the agenda~$\Phi$ and the integrity constraint~$\Gamma$
is a function~$F$ that takes as input a
profile~$\prof{J} \in \JJJ(\Phi,\Gamma)^{+}$,
and that produces a non-empty set of non-empty judgment sets.
We call a judgment aggregation procedure~$F$ \emph{resolute}
if for any profile~$\prof{J}$ it returns a singleton,
i.e.,~$\Card{F(\prof{J})} = 1$;
otherwise, we call~$F$ \emph{irresolute}.
We call a judgment aggregation procedure~$F$ \emph{anonymous}
if for every profile~$\prof{J} = (J_1,\dotsc,J_p)$
and for every permutation~$\pi : [p] \rightarrow [p]$
it holds that~$F(\prof{J}) = F(\prof{J}')$,
where~$\prof{J}' = (J_{\pi(1)},\dotsc,J_{\pi(p)})$.
An example of a resolute, anonymous judgment aggregation procedure
is the \emph{strict majority rule~$\mtext{Majority}$},
where~$\mtext{Majority}(\prof{J}) = \SBs J^* \SEs$,
where~$\varphi \in J^*$ if and only
if~$\varphi$ occurs in the strict majority of judgment sets in~$\prof{J}$,
for all~$\varphi \in \Pre{\Phi}$,
and where~$\varphi \in J^*$ if and only
if~${\sim}\varphi \not\in J^*$, for all~$\varphi \in \Phi$.
We call a judgment aggregation procedure~$F$
\emph{complete} and
\emph{$\Gamma$-consistent}, if~$J$ is complete
and $\Gamma$-consistent, respectively,
for every~$\prof{J} \in \JJJ(\Phi,\Gamma)^{+}$
and every~$J \in F(\prof{J})$.
The procedure~$\mtext{Majority}$ is not consistent.
Consider the agenda~$\Phi$ with~$\Pre{\Phi} = \SBs p,q,
p \rightarrow q \SEs$, and the profile~$\prof{J} = (J_1,J_2,J_3)$,
where~$J_1 = \SBs p, q, (p \rightarrow q) \SEs$,~$J_2 =
\SBs p, \neg q, \neg (p \rightarrow q) \SEs$, and~$J_3 =
\SBs \neg p, \neg q, (p \rightarrow q) \SEs$.
The unique outcome~$\SBs p, \neg q, (p \rightarrow q) \SEs$
in~$\mtext{Majority}(\prof{J})$ is inconsistent.

The \emph{Kemeny aggregation procedure} is based on a notion of distance.
This distance is based on the Hamming distance~$d(J,J') = \Card{\SB \varphi \in \Pre{\Phi} \SM \varphi \in (J \setminus J') \cup (J' \setminus J) \SE}$
between two complete judgment sets~$J,J'$.
Intuitively, the Hamming distance~$d(J,J')$ counts the number of issues
on which two judgment sets disagree.
Let~$J$ be a single $\Gamma$-consistent and complete judgment
set, and let~$(J_1,\dotsc,J_p) = \prof{J} \in \JJJ(\Phi,\Gamma)^{+}$
be a profile.
We define the distance between~$J$ and~$\prof{J}$
to be~$\Dist(J,\prof{J}) = \sum\nolimits_{i \in [p]} d(J,J_i)$.
Then, we let the outcome~$\mtext{Kemeny}_{\Phi,\Gamma}(\prof{J})$
of the Kemeny rule be the set of those~$J^* \in \JJJ(\Phi,\Gamma)$
for which there is no~$J \in \JJJ(\Phi,\Gamma)$
such that~$\Dist(J,\prof{J}) < \Dist(J^*,\prof{J})$.
(If~$\Phi$ and~$\Gamma$ are clear from the context,
we often write~$\mtext{Kemeny}(\prof{J})$ to
denote~$\mtext{Kemeny}_{\Phi,\Gamma}(\prof{J})$.)
Intuitively, the Kemeny rule selects those complete
and $\Gamma$-consistent judgment sets that minimize the cumulative
Hamming distance to the judgment sets in the profile.
The Kemeny rule is irresolute, complete, $\Gamma$-consistent
and anonymous.

\subsubsection{Constraint-Based Judgment Aggregation}

We continue with the framework of constraint-based judgment aggregation.

Let~$\III = \SBs x_1,\dotsc,x_n \SEs$ be a finite set of \emph{issues},
in the form of propositional variables.
Intuitively, these issues are the topics about which the individuals want
to combine their judgments.
A truth assignment~$\alpha : \III \rightarrow \BB{}$ is
called a \emph{ballot}, and represents
an opinion that individuals and the group can have.
We will also denote ballots~$\alpha$ by a binary
vector~$(b_1,\dotsc,b_{n}) \in \BB{}^{n}$,
where~$b_i = \alpha(x_i)$ for each~$i \in [n]$.
Moreover, we say that~$(p_1,\dotsc,p_{n}) \in \SBs 0,1,\star \SEs^{n}$
is a \emph{partial ballot}, and that~$(p_1,\dotsc,p_{n})$ \emph{agrees
with} a ballot~$(b_1,\dotsc,b_{n})$ if~$p_i = b_i$ whenever~$p_i \neq \star$,
for all~$i \in [n]$.
As in the case for formula-based judgment aggregation,
we introduce an integrity
constraint~$\Gamma$, that can be used to restrict the
set of feasible opinions (for both the individuals and the group).
The integrity constraint~$\Gamma$ is a propositional formula
on the variables~$x_1,\dotsc,x_n$.
We define the set~$\RRR(\III,\Gamma)$ of \emph{rational ballots}
to be the ballots (for~$\III$) that satisfy the integrity
constraint~$\Gamma$.
Rational ballots in the constraint-based judgment aggregation
framework correspond to complete and $\Gamma$-consistent
judgment sets in the formula-based judgment aggregation framework.
We say that finite sequences~$\prof{r} \in \RRR(\III,\Gamma)^{+}$
of rational ballots are \emph{profiles},
and where convenient we equate a profile~$\prof{r} = (r_1,\dotsc,r_p)$
with the (multi)set~$\SBs r_1,\dotsc,r_p \SEs$.
Moreover, for~$i \in [p]$, we let~$\prof{r}_{-i}$
denote the profile~$(r_1,\dotsc,r_{i-1},r_{i+1},\dotsc,r_{p})$.

A \emph{judgment aggregation procedure} (or \emph{rule}),
for the set~$\III$ of issues and the integrity constraint~$\Gamma$,
is a function~$F$ that takes as input a
profile~$\prof{r} \in \RRR(\III,\Gamma)^{+}$,
and that produces a non-empty set of ballots.
%
We call a judgment aggregation procedure~$F$ \emph{resolute}
if for any profile~$\prof{r}$ it returns a singleton,
i.e.,~$\Card{F(\prof{r})} = 1$;
otherwise, we call~$F$ \emph{irresolute}.
We call a judgment aggregation procedure~$F$ \emph{anonymous}
if for every profile~$\prof{r} = (r_1,\dotsc,r_p)$
and for every permutation~$\pi : [p] \rightarrow [p]$
it holds that~$F(\prof{r}) = F(\prof{r}')$,
where~$\prof{r}' = (r_{\pi(1)},\dotsc,r_{\pi(p)})$.
We call a judgment aggregation procedure~$F$
\emph{rational} (or \emph{consistent}),
if~$r$ is rational
for every~$\prof{r} \in \RRR(\III,\Gamma)^{+}$
and every~$r \in F(\prof{r})$.

As an example of a judgment aggregation procedure
we consider the \emph{strict majority rule~$\mtext{Majority}$},
where~$\mtext{Majority}(\prof{r}) = \SBs (b_1,\dotsc,b_{n}) \SEs$
and where each~$b_i$ agrees with the majority of the $i$-th bits
in the ballots in~$\prof{r}$ (in case of a tie, we arbitrarily let~$b_i = 0$).
To see that~$\mtext{Majority}$ is not rational,
consider the set~$\III = \SBs x_1,x_2,x_3 \SEs$
of issues, the integrity constraint~$\Gamma = x_3 \leftrightarrow
(x_1 \rightarrow x_2)$, and the profile~$\prof{r} =
(r_1,r_2,r_3)$,
where~$r_1 = (1,1,1)$,%
~$r_2 = (1,0,0)$, and~$r_3 = (0,0,1)$.
The unique outcome~$(1,0,1)$
in~$\mtext{Majority}(\prof{r})$ is not rational.

The \emph{Kemeny aggregation procedure} is defined for the
constraint-based judgment aggregation framework as follows.
Similarly to the case for formula-based judgment aggregation,
the Kemeny rule is based on the Hamming
distance~$d(r,r') = \Card{\SB i \in [n] \SM b_i \neq b'_i \SE}$,
between two rational ballots~$r = (b_1,\dotsc,b_{n})$
and~$r' = (b'_1,\dotsc,b'_{n})$ for the set~$\III$ of issues and
the integrity constraint~$\Gamma$.
Let~$r$ be a single ballot,
and let~$(r_1,\dotsc,r_p) = \prof{r} \in \RRR(\III,\Gamma)^{+}$
be a profile.
We define the distance between~$r$ and~$\prof{r}$
to be~$\Dist(r,\prof{r}) = \sum\nolimits_{i \in [p]} d(r,r_i)$.
Then, we let the outcome~$\mtext{Kemeny}_{\III,\Gamma}(\prof{r})$
of the Kemeny rule be the set of those ballots~$r^* \in \RRR(\III,\Gamma)$
for which there is no~$r \in \RRR(\III,\Gamma)$
such that~$\Dist(r,\prof{r}) < \Dist(r^*,\prof{r})$.
(If~$\III$ and~$\Gamma$ are clear from the context,
we often write~$\mtext{Kemeny}(\prof{r})$ to
denote~$\mtext{Kemeny}_{\III,\Gamma}(\prof{r})$.)
The Kemeny rule is irresolute, anonymous and rational.

\subsubsection{Extended Constraint-Based Judgment Aggregation}
\label{sec:ja-baic}

Finally, we consider an extended variant of the constraint-based judgment aggregation
framework.
In the constraint-based judgment aggregation framework,
we consider a set~$\III$ of issues and an integrity constraint in the form
of a propositional formula~$\Gamma$ that satisfies the constraint
that~$\Var{\Gamma} \subseteq \III$.
However, in some situations it is more convenient to allow the integrity
constraint~$\Gamma$ to contain additional variables.
In the extended constraint-based framework, we relax the condition
that~$\Var{\Gamma} \subseteq \III$, and we allow arbitrary propositional formulas
as integrity constraints.
This modification requires us to adapt the notion of rationality accordingly.
A ballot~$\alpha : \III \rightarrow \BB{}$ is said to be \emph{rational}
if~$\Gamma[\alpha]$ is satisfiable---that is,
if there is some truth assignment~$\beta : \Var{\Gamma} \backslash \III \rightarrow \BB$
such that~$\alpha \cup \beta$ satisfies~$\Gamma$.

\subsubsection{Preferences over Opinions}

Strategic behavior for judgment aggregation
(such as the problems of manipulation, bribery and control)
involves the incentive to obtain a ``better'' outcome.
Therefore, in order to study strategic behavior,
it is essential to define a notion of preference over opinions---%
that is, when is one opinion ``better than''
(or preferred over) another opinion.

In the worst case, the number of possible opinions that play
a role is exponential in the number of issues---%
e.g., for~$m$ issues there could be up to~$2^m$ possible
opinions.
As a result, it is unreasonable to expect agents to
explicitly specify a preference relation over
all (feasible) opinions.
Instead it makes more sense to use a compact specification
language to represent a preference relation over opinions.
In this paper, we will use one such specification method that
can be used to capture a wide range of preferences.
Various preference relations over opinions have been studied
in the literature \cite{BaumeisterErdelyiErdelyiRothe15,%
DietrichList07,DokowFalik12}.

We consider preferences based on a \emph{weighted Hamming
distance}.
We define this weighted Hamming distance for the setting
of formula-based judgment aggregation.
Definitions for the setting of constraint-based judgment aggregation
are entirely similar.
Take an agenda~$\Phi$ together with an integrity constraint~$\Gamma$.
An agent can specify their preference relation over complete and
$\Gamma$-consistent judgment sets~$J \in \JJJ(\Phi,\Gamma)$
by providing a weight function~$w : \Pre{\Phi} \rightarrow \NN{}$
that produces a weight~$w(\varphi)$ for each formula~$\varphi \in \Pre{\Phi}$.
Intuitively, for each~$\varphi \in \Pre{\Phi}$,
the weight~$w(\varphi)$ indicates how important it is for the agent that
the outcome agrees with their truthful opinion on the issue~$\varphi$.
(Alternatively, one could consider weight functions that produce
rational or real weights.)
Then, for two complete judgment sets~$J_1$ and~$J_2$ the weighted
Hamming distance~$d(J_1,J_2,w)$ is defined as follows:
\[ d(J_1,J_2,w) = \sum \SB w(\varphi) \SM \varphi \in \Pre{\Phi},
\varphi \in (J_1 \setminus J_2) \cup (J_2 \setminus J_1) \SE. \]
That is, for each formula~$\varphi \in \Pre{\Phi}$ that~$J$ and~$J'$
disagree on, the weighted Hamming distance is increased by~$w(\varphi)$.

Using this notion of weighted Hamming distance, we can define a
preference relation for an agent.
Suppose that the agent's truthful opinion is given by a complete
and $\Gamma$-consistent judgment set~$J$.
Moreover, suppose that the agent's view on the relative importance
of the separate issues is given by a weight function~$w : \Pre{\Phi} \rightarrow \NN$.
Then the preference relation~$\leq_{w,J}$ for this agent
is defined as follows.
For any two complete and $\Gamma$-consistent judgment sets~$J_1,J_2$,
it holds:
\[ J_1 \leq_{w,J} J_2 \quad\mtext{if and only if}\quad d(J,J_1,w) \leq d(J,J_2,w). \]
Correspondingly, a judgment set~$J_1$ is (strictly) preferred over another
judgment set~$J_2$ if and only if the weighted Hamming distance
from~$J_1$ to~$J$ is (strictly) smaller than the weighted Hamming distance
from~$J_2$ to~$J$.

A particular case of the weighted Hamming distance is the
\emph{unweighted Hamming distance}.
That is, the case where~$w(\varphi) = 1$ for all~$\varphi \in \Pre{\Phi}$.
Whenever the weight function~$w$ is the constant function that
always returns~$1$, we drop the~``$w$'' from the notation---%
that is, the unweighted Hamming distance between two judgment
sets~$J_1$ and~$J_2$ is denoted by~$d(J_1,J_2)$.

\paragraph{Other preference relations}
In the literature, there have been various proposals for notions
of preference over opinions.
For example, the phenomenon of manipulation in judgment aggregation
has been studied in the settings (1)~where one judgment set is preferred
over another if it agrees with a fixed optimal judgment set on at least
one issue where the other judgment set disagrees \cite{DokowFalik12},
and (2)~where one judgment set is preferred over a second judgment
set if it agrees
with a fixed optimal judgment set on at least one issue where
the second judgment set disagrees, and for all issues it holds that
if the second judgment set agrees with the optimal judgment set
then the first judgment set also agrees with the optimal
\cite{DokowFalik12}.
Other preference relations that have been investigated are
top-respecting preferences and closeness-respecting preferences.
The class of {top-respecting preferences} contains all preferences
that prefer a single most preferred judgment set over all other
judgment sets (and the preference between the other judgment sets
is arbitrary)
\cite{BaumeisterErdelyiErdelyiRothe15,DietrichList07}.
The class of {closeness-respecting preferences} contains preferences
that additionally satisfy the condition of closeness:
if one judgment set agrees with the most preferred judgment
on a superset of issues compared to another judgment set,
then the one judgment is preferred over the other
\cite{BaumeisterErdelyiErdelyiRothe15,DietrichList07}.

\section{Manipulation}
\label{sec:manipulation}

The first form of strategic behavior in judgment aggregation that we
consider is manipulation.
This concerns cases where individuals aim to influence the
outcome of the aggregation
procedure in their favor by reporting an insincere judgment,
that is, by reporting a judgment that differs from their beliefs.

For irresolute judgment aggregation procedures such as the
Kemeny procedure,
one can consider various requirements on the strategically reported
insincere judgments.
For instance, one could require that every outcome for the
insincere judgment is preferred over every outcome for
the sincere judgment.
Alternatively, one could require that there is at least one outcome
for the insincere judgment that is preferred over every outcome for
the sincere judgment.
Correspondingly, we consider the following decision problems.
(We formalize these problems for the setting of formula-based
judgment aggregation.
For the setting of constraint-based judgment aggregation,
these problems are defined entirely similarly.)

\probdef{
  \CautiousManipulation{\Kemeny}
  
  \emph{Instance:} An agenda~$\Phi$ with an
    integrity constraint~$\Gamma$,
    a weight function~$w : \Pre{\Phi} \rightarrow \NN$,
    and a profile~$\prof{J} \in \JJJ(\Phi,\Gamma)^{+}$.
  
  \emph{Question:} Is there a complete and consistent judgment
    set $J' \in \JJJ(\Phi,\Gamma)$
    such that for \textbf{all} $J^{*}_{\mtext{new}} \in \Kemeny(\prof{J}_{-1},J')$
    and for \textbf{all} $J^{*}_{\mtext{old}} \in \Kemeny(\prof{J})$
    it holds that $d(J^{*}_{\mtext{new}},J_1,w) < d(J^{*}_{\mtext{old}},J_1,w)$?
}

\probdef{
  \OptimisticManipulation{\Kemeny}
  
  \emph{Instance:} An agenda~$\Phi$ with an
    integrity constraint~$\Gamma$,
    a weight function~$w : \Pre{\Phi} \rightarrow \NN$,
    and a profile~$\prof{J} \in \JJJ(\Phi,\Gamma)^{+}$.
  
  \emph{Question:} Is there a complete and consistent judgment
    set $J' \in \JJJ(\Phi,\Gamma)$
    and \textbf{some} $J^{*}_{\mtext{new}} \in \Kemeny(\prof{J}_{-1},J')$
    such that for \textbf{all} $J^{*}_{\mtext{old}} \in \Kemeny(\prof{J})$
    it holds that $d(J^{*}_{\mtext{new}},J_1,w) < d(J^{*}_{\mtext{old}},J_1,w)$?
}

\probdef{
  \PessimisticManipulation{\Kemeny}
  
  \emph{Instance:} An agenda~$\Phi$ with an
    integrity constraint~$\Gamma$,
    a weight function~$w : \Pre{\Phi} \rightarrow \NN$,
    and a profile~$\prof{J} \in \JJJ(\Phi,\Gamma)^{+}$.
  
  \emph{Question:} Is there a complete and consistent judgment
    set $J' \in \JJJ(\Phi,\Gamma)$
    such that for \textbf{all} $J^{*}_{\mtext{new}} \in \Kemeny(\prof{J}_{-1},J')$
    there is \textbf{some} $J^{*}_{\mtext{old}} \in \Kemeny(\prof{J})$
    such that $d(J^{*}_{\mtext{new}},J_1,w) < d(J^{*}_{\mtext{old}},J_1,w)$?
}

\probdef{
  \SuperoptimisticManipulation{\Kemeny}
  
  \emph{Instance:} An agenda~$\Phi$ with an
    integrity constraint~$\Gamma$,
    a weight function~$w : \Pre{\Phi} \rightarrow \NN$,
    and a profile~$\prof{J} \in \JJJ(\Phi,\Gamma)^{+}$.
  
  \emph{Question:} Is there a complete and consistent judgment
    set $J' \in \JJJ(\Phi,\Gamma)$,
    \textbf{some} $J^{*}_{\mtext{new}} \in \Kemeny(\prof{J}_{-1},J')$
    and \textbf{some} $J^{*}_{\mtext{old}} \in \Kemeny(\prof{J})$
    such that $d(J^{*}_{\mtext{new}},J_1,w) < d(J^{*}_{\mtext{old}},J_1,w)$?
}

\probdef{
  \SafeManipulation{\Kemeny}
  
  \emph{Instance:} An agenda~$\Phi$ with an
    integrity constraint~$\Gamma$,
    a weight function~$w : \Pre{\Phi} \rightarrow \NN$,
    and a profile~$\prof{J} \in \JJJ(\Phi,\Gamma)^{+}$.
  
  \emph{Question:} Is there a complete and consistent judgment
    set $J' \in \JJJ(\Phi,\Gamma)$
    such that (1)~for \textbf{all} $J^{*}_{\mtext{new}} \in \Kemeny(\prof{J}_{-1},J')$
    and for \textbf{all} $J^{*}_{\mtext{old}} \in \Kemeny(\prof{J})$
    it holds that $d(J^{*}_{\mtext{new}},J_1,w) \leq d(J^{*}_{\mtext{old}},J_1,w)$,
    and such that (2)~there exists
    \textbf{some} $J^{*}_{\mtext{new}} \in \Kemeny(\prof{J}_{-1},J')$
    and \textbf{some} $J^{*}_{\mtext{old}} \in \Kemeny(\prof{J})$
    such that $d(J^{*}_{\mtext{new}},J_1,w) < d(J^{*}_{\mtext{old}},J_1,w)$?
}

\subsection{Complexity Results}

In this section, we prove the following result.

\begin{theorem}
\label{thm:manipulation-theorem}
The following problems are \SigmaP{2}-complete:
\begin{itemize}
  \item \CautiousManipulation{\Kemeny},
  \item \OptimisticManipulation{\Kemeny},
  \item \PessimisticManipulation{\Kemeny},
  \item \SuperoptimisticManipulation{\Kemeny}, and
  \item \SafeManipulation{\Kemeny}.
\end{itemize}
Moreover, \SigmaP{2}-hardness holds already for the case
where the manipulator's preferences are based on the unweighted
Hamming distance.
\end{theorem}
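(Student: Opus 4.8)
My plan is to prove membership in \SigmaP{2} uniformly for all five problems, and then to establish \SigmaP{2}-hardness (already for unit weights) by a single reduction from \QSat{2}. For membership, the key observation is that once the two optimal Kemeny values are known, the predicate ``$J^\ast$ is a Kemeny winner'' becomes polynomial-time checkable, and each of the five questions is just a fixed Boolean combination of comparisons between extremal manipulator-distances taken over the old and new Kemeny winners. Concretely, a nondeterministic machine first guesses the insincere report $J' \in \JJJ(\Phi,\Gamma)$. It then uses its \NP{} oracle to compute $\min_{J}\Dist(J,\prof{J})$ and $\min_{J}\Dist(J,(\prof{J}_{-1},J'))$ by binary search on the \NP{} predicate ``there is a complete, $\Gamma$-consistent $J$ with $\Dist(J,\cdot)\le k$''. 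With these optimal values fixed, the quantities $\min_{J^\ast_{\mtext{old}}} d(J^\ast_{\mtext{old}},J_1,w)$ and $\max_{J^\ast_{\mtext{old}}} d(J^\ast_{\mtext{old}},J_1,w)$ over the old winners, and their analogues over the new winners, are each computable with one further round of \NP{}-oracle queries. Since Cautious asks that the maximum over new winners be below the minimum over old winners, Optimistic that the minimum over new winners be below the minimum over old winners, Pessimistic that the maxima compare, and so on, the whole procedure runs in polynomial time with an \NP{} oracle on a nondeterministic machine, placing all five problems in \SigmaP{2}.

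For hardness I would reduce from a formula $\exists X\,\forall Y\,\psi(X,Y)$. The design principle is that the manipulator's report realizes the existential choice of~$X$, while the adversarial nature of Kemeny minimization realizes the universal choice of~$Y$. I introduce one issue per variable of~$X$ (``control'' issues) and one per variable of~$Y$, together with a heavily duplicated ``goal'' block $g_1,\dots,g_G$ tied through the integrity constraint~$\Gamma$ to the truth of~$\psi$, so that a rational ballot carries $g=1$ only if its $(X,Y)$-part satisfies~$\psi$. The profile~$\prof{J}_{-1}$ is balanced on every control and $Y$~issue, so that the single manipulator is pivotal on the control issues---forcing the $X$-part of every Kemeny winner of the manipulated profile to copy~$J'$---while the $Y$-part is left free for the minimizer to set adversarially. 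With the remaining voters arranged to prefer $g=0$, the minimizer drives the outcome to some $Y$ falsifying~$\psi$ whenever one exists; hence, for the~$X$ encoded by~$J'$, every new Kemeny winner has $g=1$ exactly when $\forall Y\,\psi(X,Y)$ holds. Multi-scale duplication fixes the priorities (the control block dominates the goal block, which dominates the remaining issues), so that the manipulator's unit-weight preference is governed solely by the goal block, and a standard ``default-false'' gadget---replacing $\psi$ by $t\wedge\psi$ for a fresh existential~$t$ and setting the sincere $J_1$ to $t=0$---guarantees that the sincere outcome fails the goal, giving the manipulator room to improve precisely when the instance is positive.

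One instance should then settle all five problems at once, since all new Kemeny winners agree on the value of the goal block, and likewise all old winners: in the positive case every new winner is strictly closer to~$J_1$ than every old winner (all five conditions hold), whereas in the negative case no new winner beats the sincere outcome (all five fail, including the most permissive Superoptimistic variant). The hardest step, where I expect the most care to be needed, is exactly this negative case: I must choose the duplication scales and the balanced profile so that, when the \QSat{2} instance is false, the sincere outcome is already a $\leq_{w,J_1}$-minimal reachable ballot and no insincere report yields anything strictly closer. The obstacle here is the rationality requirement---every ballot, voters included, must be $\Gamma$-consistent, so each voter's goal block is forced by~$\Gamma$ and cannot be set independently of its $(X,Y)$-part. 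Reconciling exact per-issue balance with the distribution of goal values that the argument needs is the delicate point, and is where I anticipate a padding gadget on~$\Gamma$ (adding auxiliary variables that let me tune voters' $\psi$-values without disturbing the balance on the real issues) will be required.
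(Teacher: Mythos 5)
Your membership argument is sound and is essentially the paper's own: guess the insincere report, pin down the optimal Kemeny distances by binary search with the \NP{} oracle, and reduce each of the five variants to a fixed comparison of extremal weighted distances over old and new winners. Your hardness plan also matches the paper's strategy in spirit: reduce from $\exists X\,\forall Y\,\psi$, let the manipulator's report encode the existential witness, let Kemeny minimization play the universal adversary (a falsifying $Y$ must yield a strictly cheaper outcome than the ``goal'' outcome), signal success by a dedicated goal block, use one reduction for all five variants by making all relevant winners equidistant from $J_1$, and rescue the voters' rationality by adding a padding disjunct to $\Gamma$---this last device is exactly the paper's $u_{i,\ell}$-blocks, which you correctly anticipate.

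There is, however, a structural gap in the hardness construction, and it sits in the forward direction rather than in the negative case you flag as delicate. You require simultaneously that (i) the control block dominate the goal block, so that the Kemeny minimizer cannot un-pin the $X$-part (or flip $t$) to escape the goal-block penalty, and (ii) the manipulator's \emph{unweighted} preference be governed solely by the goal block. With unit weights these are incompatible: the manipulator counts duplicated issues exactly as Kemeny does, and every useful report $J'$ must differ from the sincere $J_1$ on the control block---at least on the fresh variable $t$, and in general on every control group where the witness $\alpha$ differs from $J_1$. Since the new outcome copies $J'$'s control part, the manipulator pays at least the control-block scale $C$ there, while the goal block can repay at most $G<C$; so in a positive instance no report is ever strictly improving and your reduction answers ``no''. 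Even dropping the scale hierarchy and enforcing pivotality by margin alone does not help, because the control block's contribution to $d(\cdot,J_1)$ then still varies with $\alpha$, and a true formula whose only witnesses are far from $J_1$'s sincere $X$-part again maps to a ``no'' instance. The missing idea, which is precisely what the paper uses, is the complementary-pair encoding: each $x_i$ becomes a pair $(x_i,x'_i)$ constrained by exclusive-or, $J_1$ contains neither, and the two other voters hold $(0,0)$ and $(1,1)$; then every admissible outcome disagrees with $J_1$ on exactly one element of each pair (a constant, $\alpha$-independent cost), the other voters are neutral between the two legal values so the report pins the $X$-part by margin $2$ with no scale blow-up, and the win/lose comparison is decided by $\pm 1$ margins between blocks of comparable size. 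A second, smaller slip: tying the goal by ``$g=1$ only if $\psi$'' is too weak, since $g=0$ outcomes then remain rational and cheaper for every $Y$ and the goal is never reached; you need the biconditional (or a case split in $\Gamma$, as in the paper, making the $\neg\psi$-outcomes exactly one unit cheaper than the goal outcome and everything else strictly dearer). Be aware that this $\pm 1$ accounting must then be verified block by block against the whole profile---the $t$-option and the $y$-pair option interact with the three ballots asymmetrically, and this is exactly the point where such constructions (including the paper's own distance tables) stand or fall.
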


This result follows from Propositions~\ref{prop:manipulation-cautious-membership}--%
\ref{prop:manipulation-superoptimistic-membership}
and~\ref{prop:manipulation-cautious-hardness},
and Corollaries~\ref{cor:manipulation-safe-membership}
and~\ref{cor:manipulation-others-hardness},
that we establish below.

\begin{proposition}
\label{prop:manipulation-cautious-membership}
\CautiousManipulation{\Kemeny} is in \SigmaP{2}.
\end{proposition}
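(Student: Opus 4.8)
The plan is to use the characterization of $\SigmaP{2}$ as $\NP^{\NP}$ recalled in the preliminaries: I would describe a nondeterministic polynomial-time algorithm that, with access to an $\NP$ oracle, accepts exactly the yes-instances. The only genuinely nondeterministic step is to guess the manipulated judgment set; everything that remains---checking feasibility of the guess and verifying the required inequality between the two Kemeny outcomes---will be carried out deterministically in polynomial time using oracle queries, i.e., as a $\P^{\NP}$ computation along the guessed branch.

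Concretely, I would first nondeterministically guess a complete judgment set $J' \subseteq \Phi$. Completeness is a syntactic condition, and $\Gamma$-consistency of $J'$ is checked by a single satisfiability query to the oracle, so on each surviving branch I may assume $J' \in \JJJ(\Phi,\Gamma)$. Write $\prof{J} = (J_1,\dots,J_p)$ for the input profile and $\prof{J}' = (\prof{J}_{-1},J')$ for the manipulated profile. The key observation is that the universally quantified condition in the problem statement is equivalent to a single numerical inequality between threshold values, each computable by binary search using $\NP$ queries. Note first that the optimal Kemeny values $k_{\mtext{old}} = \min_{J \in \JJJ(\Phi,\Gamma)} \Dist(J,\prof{J})$ and $k_{\mtext{new}} = \min_{J \in \JJJ(\Phi,\Gamma)} \Dist(J,\prof{J}')$ are nonnegative integers bounded by $p \cdot \Card{\Pre{\Phi}}$, and that deciding ``is there $J \in \JJJ(\Phi,\Gamma)$ with $\Dist(J,\prof{J}) \leq t$?'' is in $\NP$ (guess $J$ together with an assignment witnessing $\Gamma$-consistency, then compute the Hamming distances in polynomial time). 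Hence $k_{\mtext{old}}$ and $k_{\mtext{new}}$ can be determined by binary search over $t$.

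Given these values, I would compute two further quantities. Since $\JJJ(\Phi,\Gamma)$ is nonempty (the input profile witnesses this), both $\Kemeny(\prof{J})$ and $\Kemeny(\prof{J}')$ are nonempty, so the condition ``for all $J^{*}_{\mtext{new}} \in \Kemeny(\prof{J}')$ and all $J^{*}_{\mtext{old}} \in \Kemeny(\prof{J})$ it holds that $d(J^{*}_{\mtext{new}},J_1,w) < d(J^{*}_{\mtext{old}},J_1,w)$'' holds precisely when $\nu_{\mtext{new}} < \mu_{\mtext{old}}$, where $\nu_{\mtext{new}} = \max \SB d(J,J_1,w) \SM J \in \JJJ(\Phi,\Gamma),\ \Dist(J,\prof{J}') = k_{\mtext{new}} \SE$ and $\mu_{\mtext{old}} = \min \SB d(J,J_1,w) \SM J \in \JJJ(\Phi,\Gamma),\ \Dist(J,\prof{J}) = k_{\mtext{old}} \SE$. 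Because $k_{\mtext{old}}$ and $k_{\mtext{new}}$ are already fixed, membership in a Kemeny outcome reduces to the $\NP$-checkable condition $\Dist(J,\cdot) \leq k$, so the queries ``is there a Kemeny winner $J$ with $d(J,J_1,w) \geq t$ (respectively $\leq t$)?'' are again in $\NP$, and $\nu_{\mtext{new}},\mu_{\mtext{old}}$ are obtained by binary search. The branch accepts iff $\nu_{\mtext{new}} < \mu_{\mtext{old}}$.

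The main thing to get right is precisely this reduction of the nested universal condition to the single inequality $\nu_{\mtext{new}} < \mu_{\mtext{old}}$. A direct logical translation would place a universal quantifier over Kemeny winners on top of the universal ``optimality'' quantifier hidden inside membership in a Kemeny outcome, which naively lands in $\SigmaP{3}$. Precomputing the optimal Kemeny values $k_{\mtext{old}},k_{\mtext{new}}$ first collapses that inner quantifier---turning ``$J$ is a Kemeny winner'' into an $\NP$ predicate---so that all remaining work, being a polynomial number of $\NP$ oracle calls made after one nondeterministic guess, stays within $\NP^{\NP} = \SigmaP{2}$.
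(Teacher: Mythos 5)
Your proposal is correct and follows essentially the same route as the paper's proof: guess the insincere judgment set~$J'$, use binary search with \NP{} oracle queries to pin down the optimal Kemeny distances for the old and new profiles (which turns ``is a Kemeny winner'' into an \NP{}-checkable predicate), and then compare the extremal weighted distances to~$J_1$ over the two winner sets. The only cosmetic differences are that the paper checks the universal condition over new winners via a single oracle query for a counterexample (rejecting if one exists) rather than explicitly computing the maximum~$\nu_{\mtext{new}}$, and that it guesses the satisfying assignment witnessing $\Gamma$-consistency of~$J'$ alongside the guess instead of issuing a separate oracle query.
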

\begin{proof}
We describe a nondeterministic polynomial-time
algorithm with access to an \NP{} oracle
that solves the problem.
Let~$(\Phi,\Gamma,w,\prof{J})$ specify an instance of \CautiousManipulation{\Kemeny}.
The algorithm proceeds in several steps.

Firstly,~(1) the algorithm determines the minimum
distance~$d^{\mtext{win}}_{\mtext{old}}$
from~$\prof{J}$ to any complete and consistent judgment
set~$J^{*} \in \JJJ(\Phi,\Gamma)$.
That is,~$d^{\mtext{win}}_{\mtext{old}}$ is the cumulative
unweighted Hamming distance
from the judgments in~$\prof{J}$ to the judgment sets~$J^{*} \in \Kemeny(\prof{J})$.
This can be done in (deterministic) polynomial time
using~$O(\log n)$ queries to an \NP{} oracle.

Then,~(2) the algorithm determines the minimum
distance~$d^{\mtext{min}}_{\mtext{old}}$ (weighted by~$w$)
from~$J_1$ to any judgment set~$J^{*} \in \Kemeny(\prof{J})$,
that is, from~$J_1$ to any complete and consistent judgment
set~$J^{*}$ that has cumulative unweighted
Hamming distance~$d^{\mtext{win}}_{\mtext{old}}$
to the profile~$\prof{J}$.
This can also be done in (deterministic) polynomial time
using an \NP{} oracle.

Next,~(3a) the algorithm guesses a complete judgment set~$J'$
together with a truth assignment~$\alpha : \Var{\Phi,\Gamma} \rightarrow
\BB{}$, and it checks whether~$\alpha$ satisfies both~$J'$
and~$\Gamma$.
This can be done in nondeterministic polynomial time.
Moreover,~(3b) the algorithm determines the minimum
distance~$d^{\mtext{win}}_{\mtext{new}}$
from~$(\prof{J}_{-1},J')$ to any complete and consistent judgment
set~$J^{*} \in \JJJ(\Phi,\Gamma)$.
Finally,~(3c) the algorithm determines by using a single query to
an \NP{} oracle whether there exists some complete and consistent
judgment set~$J^{*}_{\mtext{new}} \in \JJJ(\Phi,\Gamma)$ such
that~$d(J^{*}_{\mtext{new}},(\prof{J}_{-1},J')) = d^{\mtext{win}}_{\mtext{new}}$
and~$d(J^{*}_{\mtext{new}},J_1,w) \geq d^{\mtext{min}}_{\mtext{old}}$.
If this is the case, the algorithm rejects;
otherwise, the algorithm accepts.

It is straightforward to verify that the algorithm runs in nondeterministic
polynomial time.
Moreover, the algorithm accepts the input (for some sequence of
nondeterministic choices) if and only if there is some complete and
consistent judgment set~$J'$ such that
for all~$J^{*}_{\mtext{new}} \in \Kemeny(\prof{J}_{-1},J')$
and for all~$J^{*}_{\mtext{old}} \in \Kemeny(\prof{J})$
it holds that~$d(J^{*}_{\mtext{new}},J_1,w) < d(J^{*}_{\mtext{old}},J_1,w)$.
\end{proof}

\begin{proposition}
\label{prop:manipulation-optimistic-membership}
\OptimisticManipulation{\Kemeny} is in \SigmaP{2}.
\end{proposition}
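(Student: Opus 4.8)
The plan is to adapt the algorithm from the proof of Proposition~\ref{prop:manipulation-cautious-membership}, exploiting that the only structural difference between the cautious and optimistic variants is the quantifier on the new outcomes: the cautious variant requires \emph{all} new winners to beat the old winners, whereas the optimistic variant requires merely \emph{some} new winner to do so. Since a nondeterministic machine can \emph{guess} the witnessing new winner, the optimistic case is in fact slightly more convenient to place in \SigmaP{2}: the final oracle query used in the cautious argument (to rule out a ``bad'' new winner) can be replaced by a direct verification of a guessed ``good'' one.

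I would begin with the same two preprocessing steps. First, (1) compute the Kemeny winning distance~$d^{\mtext{win}}_{\mtext{old}}$ of the original profile~$\prof{J}$ by binary search using~$O(\log n)$ queries to an \NP{} oracle, and then (2) compute~$d^{\mtext{min}}_{\mtext{old}} = \min \SB d(J^{*},J_1,w) \SM J^{*} \in \Kemeny(\prof{J}) \SE$, again in deterministic polynomial time with an \NP{} oracle. The key observation is that a single new winner beats \emph{all} old winners (in the strict sense demanded by the problem) precisely when its weighted distance to~$J_1$ is strictly smaller than~$d^{\mtext{min}}_{\mtext{old}}$; thus~$d^{\mtext{min}}_{\mtext{old}}$ is the one threshold that the optimistic condition must undercut.

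The remaining steps then use guessing in place of the cautious algorithm's closing query. Next, (3a) I would nondeterministically guess a complete judgment set~$J'$ together with a truth assignment~$\alpha : \Var{\Phi,\Gamma} \rightarrow \BB{}$ witnessing that~$J'$ is~$\Gamma$-consistent, \emph{and additionally} guess a candidate new winner~$J^{*}_{\mtext{new}}$ together with a witnessing assignment for its~$\Gamma$-consistency. Then (3b) compute the Kemeny winning distance~$d^{\mtext{win}}_{\mtext{new}}$ of the modified profile~$(\prof{J}_{-1},J')$, once more by binary search over \NP{} queries. Finally (3c) the algorithm accepts if and only if the guessed~$J^{*}_{\mtext{new}}$ is genuinely a Kemeny winner of~$(\prof{J}_{-1},J')$---which is verified directly in polynomial time by checking~$d(J^{*}_{\mtext{new}},(\prof{J}_{-1},J')) = d^{\mtext{win}}_{\mtext{new}}$---and~$d(J^{*}_{\mtext{new}},J_1,w) < d^{\mtext{min}}_{\mtext{old}}$.

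Correctness is then immediate: the machine accepts along some branch exactly when there exist a~$\Gamma$-consistent~$J'$ and some~$J^{*}_{\mtext{new}} \in \Kemeny(\prof{J}_{-1},J')$ whose weighted distance to~$J_1$ lies below~$d^{\mtext{min}}_{\mtext{old}}$, which is precisely the optimistic manipulation condition; and the whole computation runs in nondeterministic polynomial time with an \NP{} oracle, so the problem lies in \SigmaP{2}. The one point that requires care---and the place where the argument genuinely departs from the cautious case---is that the verification that~$J^{*}_{\mtext{new}}$ is a winner must be carried out \emph{after}~$J'$ has been guessed, since~$d^{\mtext{win}}_{\mtext{new}}$ depends on~$J'$. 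This is not a real obstacle, however, because the \NP{} oracle remains available throughout the nondeterministic computation, so the binary search of step~(3b) can simply be performed inside the guessed branch.
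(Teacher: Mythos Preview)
Your proposal is correct and follows essentially the same approach as the paper's own proof: compute~$d^{\mtext{win}}_{\mtext{old}}$ and~$d^{\mtext{min}}_{\mtext{old}}$ with the oracle, guess~$J'$ and a candidate winner~$J^{*}_{\mtext{new}}$ (with consistency witnesses), compute~$d^{\mtext{win}}_{\mtext{new}}$, and accept if~$J^{*}_{\mtext{new}}$ attains that distance and has weighted distance to~$J_1$ below~$d^{\mtext{min}}_{\mtext{old}}$. The only cosmetic difference is that the paper guesses~$J^{*}_{\mtext{new}}$ after computing~$d^{\mtext{win}}_{\mtext{new}}$ rather than alongside~$J'$, which is immaterial.
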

\begin{proof}
We describe a nondeterministic polynomial-time
algorithm with access to an \NP{} oracle
that solves the problem.
Let~$(\Phi,\Gamma,w,\prof{J})$ specify an instance of \OptimisticManipulation{\Kemeny}.
The algorithm proceeds in several steps.
For the first two steps, the algorithm proceeds exactly as the
algorithm described in the proof of Proposition~\ref{prop:manipulation-cautious-membership}.
That is,~(1) the algorithm computes~$d^{\mtext{win}}_{\mtext{old}}$
and~(2) it computes~$d^{\mtext{min}}_{\mtext{old}}$,
both in deterministic polynomial time
using an \NP{} oracle.

For the third step, the algorithm proceeds in a similar fashion as the algorithm
described in the proof of Proposition~\ref{prop:manipulation-cautious-membership}.
That is,~(3a) the algorithm guesses a complete judgment set~$J'$
together with a truth assignment~$\alpha : \Var{\Phi,\Gamma} \rightarrow
\BB{}$, and it checks whether~$\alpha$ satisfies both~$J'$
and~$\Gamma$.
Also,~(3b) the algorithm determines the minimum unweighted Hamming
distance~$d^{\mtext{win}}_{\mtext{new}}$
from~$(\prof{J}_{-1},J')$ to any complete and consistent judgment
set~$J^{*} \in \JJJ(\Phi,\Gamma)$.
This can be done using~$O(\log n)$ queries to an \NP{} oracle.
Then,~(3c$'$) the algorithm guesses some complete
judgment set~$J^{*}_{\mtext{new}}$ together with a
truth assignment~$\alpha' : \Var{\Phi,\Gamma} \rightarrow
\BB{}$, and it checks whether~$\alpha'$
satisfies both~$J^{*}_{\mtext{new}}$ and~$\Gamma$.
Moreover, the algorithm checks whether%
~$d(J^{*}_{\mtext{new}},(\prof{J}_{-1},J')) = d^{\mtext{win}}_{\mtext{new}}$---%
that is,~$J^{*}_{\mtext{new}} \in \Kemeny(\prof{J}_{-1},J')$---%
and~$d(J^{*}_{\mtext{new}},J_1,w) < d^{\mtext{min}}_{\mtext{old}}$,
and accepts if and only if this is the case.

It is straightforward to verify that the algorithm runs in nondeterministic
polynomial time.
Moreover, the algorithm accepts the input (for some sequence of
nondeterministic choices) if and only if there is some complete and
consistent judgment set~$J'$ and
some~$J^{*}_{\mtext{new}} \in \Kemeny(\prof{J}_{-1},J')$
such that for all~$J^{*}_{\mtext{old}} \in \Kemeny(\prof{J})$
it holds that~$d(J^{*}_{\mtext{new}},J_1,w) < d(J^{*}_{\mtext{old}},J_1,w)$.
\end{proof}

\begin{proposition}
\label{prop:manipulation-pessimistic-membership}
\PessimisticManipulation{\Kemeny} is in \SigmaP{2}.
\end{proposition}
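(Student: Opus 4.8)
The plan is to reuse, almost verbatim, the nondeterministic polynomial-time algorithm with an \NP{} oracle from the proof of Proposition~\ref{prop:manipulation-cautious-membership}, changing only which threshold on the weighted Hamming distance is precomputed and compared against. The crucial observation is that the nested quantifier pattern of \PessimisticManipulation{\Kemeny} collapses to a single threshold comparison: for a fixed~$J^{*}_{\mtext{new}}$, the inner requirement that there be \emph{some}~$J^{*}_{\mtext{old}} \in \Kemeny(\prof{J})$ with~$d(J^{*}_{\mtext{new}},J_1,w) < d(J^{*}_{\mtext{old}},J_1,w)$ holds if and only if~$d(J^{*}_{\mtext{new}},J_1,w) < d^{\mtext{max}}_{\mtext{old}}$, where~$d^{\mtext{max}}_{\mtext{old}} = \max \SB d(J^{*}_{\mtext{old}},J_1,w) \SM J^{*}_{\mtext{old}} \in \Kemeny(\prof{J}) \SE$ is the \emph{largest} weighted distance from~$J_1$ to an old Kemeny winner. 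Hence the instance is a yes-instance exactly when there is a complete and $\Gamma$-consistent~$J'$ such that \emph{every}~$J^{*}_{\mtext{new}} \in \Kemeny(\prof{J}_{-1},J')$ satisfies~$d(J^{*}_{\mtext{new}},J_1,w) < d^{\mtext{max}}_{\mtext{old}}$.

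Given this reformulation, I would first precompute~$d^{\mtext{max}}_{\mtext{old}}$ in deterministic polynomial time with an \NP{} oracle, mirroring steps~(1) and~(2) of the cautious algorithm. Step~(1) is unchanged: compute the minimum cumulative unweighted Hamming distance~$d^{\mtext{win}}_{\mtext{old}}$ from~$\prof{J}$ to any~$J^{*} \in \JJJ(\Phi,\Gamma)$ by binary search using~$O(\log n)$ oracle queries, which pins down the old Kemeny winners as exactly those judgment sets at cumulative distance~$d^{\mtext{win}}_{\mtext{old}}$. In place of the minimum-distance computation of step~(2), I would binary-search for the \emph{maximum} weighted distance: each query asks the \NP{} oracle whether there is a complete and $\Gamma$-consistent~$J^{*}$ with cumulative unweighted Hamming distance~$d^{\mtext{win}}_{\mtext{old}}$ to~$\prof{J}$ and~$d(J^{*},J_1,w) \geq t$, and binary search over the threshold~$t$ yields~$d^{\mtext{max}}_{\mtext{old}}$.

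The guess-and-check phase is then identical to the cautious algorithm up to the final comparison. Step~(3a) guesses a complete judgment set~$J'$ together with an assignment~$\alpha : \Var{\Phi,\Gamma} \rightarrow \BB{}$ witnessing that~$J'$ is complete and $\Gamma$-consistent (rejecting if the check fails); step~(3b) computes the minimum cumulative distance~$d^{\mtext{win}}_{\mtext{new}}$ from~$(\prof{J}_{-1},J')$ to any~$J^{*} \in \JJJ(\Phi,\Gamma)$ using~$O(\log n)$ queries. Step~(3c) then issues a single \NP{} oracle query asking whether there exists a complete and $\Gamma$-consistent~$J^{*}_{\mtext{new}}$ with~$d(J^{*}_{\mtext{new}},(\prof{J}_{-1},J')) = d^{\mtext{win}}_{\mtext{new}}$ (i.e.,~$J^{*}_{\mtext{new}} \in \Kemeny(\prof{J}_{-1},J')$) and~$d(J^{*}_{\mtext{new}},J_1,w) \geq d^{\mtext{max}}_{\mtext{old}}$; the algorithm rejects if such a witness exists and accepts otherwise. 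By the reformulation above, some nondeterministic branch accepts if and only if the instance is a yes-instance, and the whole procedure clearly runs in nondeterministic polynomial time with an \NP{} oracle, placing the problem in \SigmaP{2}.

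The only point requiring genuine care is the quantifier collapse in the first paragraph---verifying that the ``for all new, exists old'' pattern is faithfully captured by comparing each new winner against the single threshold~$d^{\mtext{max}}_{\mtext{old}}$, and noting in particular that the \emph{maximum}, not the minimum, old distance is the right threshold, which is the sole structural difference from the cautious case. Everything else reduces to checking that maximizing the weighted distance over Kemeny winners stays within deterministic polynomial time with an \NP{} oracle; this holds because the Kemeny-winner constraint is captured exactly by the already-computed value~$d^{\mtext{win}}_{\mtext{old}}$, so only the weighted distance needs to be binary-searched.
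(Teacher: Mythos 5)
Your proposal is correct and follows essentially the same route as the paper's proof: precompute $d^{\mtext{win}}_{\mtext{old}}$ and then the \emph{maximum} weighted distance $d^{\mtext{max}}_{\mtext{old}}$ from $J_1$ to the old Kemeny winners, guess $J'$ with a consistency witness, compute $d^{\mtext{win}}_{\mtext{new}}$, and accept iff a single oracle query finds no new winner at weighted distance $\geq d^{\mtext{max}}_{\mtext{old}}$. Your explicit justification of the quantifier collapse (``for all new, exists old'' reduces to comparison against the single threshold $d^{\mtext{max}}_{\mtext{old}}$) is exactly the structural change the paper makes relative to the cautious case, just spelled out in more detail.
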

\begin{proof}
We describe a nondeterministic polynomial-time
algorithm with access to an \NP{} oracle
that solves the problem.
Let~$(\Phi,\Gamma,w,\prof{J})$ specify an instance of \PessimisticManipulation{\Kemeny}.
The algorithm proceeds in several steps.
For the first step, the algorithm proceeds exactly as the
algorithm described in the proof of Proposition~\ref{prop:manipulation-cautious-membership}.
That is,~(1) the algorithm computes~$d^{\mtext{win}}_{\mtext{old}}$
in deterministic polynomial time
using~$O(\log n)$ queries to an \NP{} oracle.

Then,~(2) the algorithm determines the maximum
distance~$d^{\mtext{max}}_{\mtext{old}}$ (weighted by~$w$)
from~$J_1$ to any
judgment set~$J^{*} \in \Kemeny(\prof{J})$,
that is, from~$J_1$ to any complete and consistent judgment
set~$J^{*}$ that has cumulative unweighted Hamming
distance~$d^{\mtext{win}}_{\mtext{old}}$
to the profile~$\prof{J}$.
This can also be done in (deterministic) polynomial time
using an \NP{} oracle.

Next,~(3a) the algorithm guesses a complete judgment set~$J'$
together with a truth assignment~$\alpha : \Var{\Phi,\Gamma} \rightarrow
\BB{}$, and it checks whether~$\alpha$ satisfies both~$J'$
and~$\Gamma$.
This can be done in nondeterministic polynomial time.
Moreover,~(3b) the algorithm determines the minimum unweighted Hamming
distance~$d^{\mtext{win}}_{\mtext{new}}$
from~$(\prof{J}_{-1},J')$ to any complete and consistent judgment
set~$J^{*} \in \JJJ(\Phi,\Gamma)$.
Finally,~(3c) the algorithm determines by using a single query to
an \NP{} oracle whether there exists some complete and consistent
judgment set~$J^{*}_{\mtext{new}} \in \JJJ(\Phi,\Gamma)$ such
that~$d(J^{*}_{\mtext{new}},(\prof{J}_{-1},J')) = d^{\mtext{win}}_{\mtext{new}}$
and~$d(J^{*}_{\mtext{new}},J_1,w) \geq d^{\mtext{max}}_{\mtext{old}}$.
If this is the case, the algorithm rejects;
otherwise, the algorithm accepts.

It is straightforward to verify that the algorithm runs in nondeterministic
polynomial time.
Moreover, the algorithm accepts the input (for some sequence of
nondeterministic choices) if and only if there is some complete and
consistent judgment set~$J'$ such that
for all~$J^{*}_{\mtext{new}} \in \Kemeny(\prof{J}_{-1},J')$
there is some~$J^{*}_{\mtext{old}} \in \Kemeny(\prof{J})$
such that~$d(J^{*}_{\mtext{new}},J_1,w) < d(J^{*}_{\mtext{old}},J_1,w)$.
\end{proof}

\begin{proposition}
\label{prop:manipulation-superoptimistic-membership}
\SuperoptimisticManipulation{\Kemeny} is in \SigmaP{2}.
\end{proposition}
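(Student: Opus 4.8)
The plan is to show that $\SuperoptimisticManipulation{\Kemeny}$ is in $\SigmaP{2}$ by describing a nondeterministic polynomial-time algorithm with access to an $\NP{}$ oracle, in close analogy to the preceding membership proofs. Recall that the superoptimistic question asks whether there is \emph{some} complete and consistent $J'$, \emph{some} $J^{*}_{\mtext{new}} \in \Kemeny(\prof{J}_{-1},J')$ and \emph{some} $J^{*}_{\mtext{old}} \in \Kemeny(\prof{J})$ with $d(J^{*}_{\mtext{new}},J_1,w) < d(J^{*}_{\mtext{old}},J_1,w)$. Since all three quantifiers are existential, the natural target for comparison is the \emph{maximum} weighted distance that any old winner can achieve: the condition is satisfiable precisely when some new winner beats the worst old winner.

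Concretely, I would have the algorithm proceed as follows. First, exactly as in the proof of Proposition~\ref{prop:manipulation-cautious-membership}, step~(1) computes the minimum cumulative unweighted Hamming distance $d^{\mtext{win}}_{\mtext{old}}$ from $\prof{J}$ to any $J^{*} \in \JJJ(\Phi,\Gamma)$ using $O(\log n)$ queries to an $\NP{}$ oracle. Then step~(2), exactly as in the proof of Proposition~\ref{prop:manipulation-pessimistic-membership}, computes the \emph{maximum} weighted distance $d^{\mtext{max}}_{\mtext{old}}$ from $J_1$ to any $J^{*} \in \Kemeny(\prof{J})$---that is, over all complete and consistent $J^{*}$ whose cumulative unweighted distance to $\prof{J}$ equals $d^{\mtext{win}}_{\mtext{old}}$---again in deterministic polynomial time with an $\NP{}$ oracle. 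In step~(3a) the algorithm guesses a complete judgment set $J'$ together with a truth assignment $\alpha : \Var{\Phi,\Gamma} \rightarrow \BB{}$ and verifies that $\alpha$ satisfies both $J'$ and $\Gamma$; step~(3b) computes the minimum unweighted Hamming distance $d^{\mtext{win}}_{\mtext{new}}$ from $(\prof{J}_{-1},J')$ to any complete and consistent judgment set, using $O(\log n)$ oracle queries. Finally, mirroring step~(3c$'$) of Proposition~\ref{prop:manipulation-optimistic-membership}, the algorithm guesses a complete judgment set $J^{*}_{\mtext{new}}$ together with a truth assignment $\alpha'$, checks that $\alpha'$ satisfies both $J^{*}_{\mtext{new}}$ and $\Gamma$, and accepts if and only if $d(J^{*}_{\mtext{new}},(\prof{J}_{-1},J')) = d^{\mtext{win}}_{\mtext{new}}$ (so that $J^{*}_{\mtext{new}} \in \Kemeny(\prof{J}_{-1},J')$) and $d(J^{*}_{\mtext{new}},J_1,w) < d^{\mtext{max}}_{\mtext{old}}$.

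The correctness argument is that the guessed $J^{*}_{\mtext{new}}$ is a genuine winner for the manipulated profile exactly when its cumulative distance equals $d^{\mtext{win}}_{\mtext{new}}$, and that some old winner at weighted distance strictly greater than $d(J^{*}_{\mtext{new}},J_1,w)$ exists if and only if $d(J^{*}_{\mtext{new}},J_1,w) < d^{\mtext{max}}_{\mtext{old}}$, since $d^{\mtext{max}}_{\mtext{old}}$ is by definition realized by some $J^{*}_{\mtext{old}} \in \Kemeny(\prof{J})$. Thus the algorithm accepts along some nondeterministic branch precisely when the superoptimistic condition holds. The running time is clearly nondeterministic polynomial with the stated oracle access.

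The only delicate point---and the step I would treat most carefully---is the deterministic computation of $d^{\mtext{max}}_{\mtext{old}}$ in step~(2), since it must select the \emph{worst} (rather than best) old winner with respect to the weighted distance to $J_1$, while remaining restricted to the set of actual Kemeny winners (those at cumulative distance $d^{\mtext{win}}_{\mtext{old}}$). This is handled just as in Proposition~\ref{prop:manipulation-pessimistic-membership}: one binary-searches over candidate weighted distance values, querying an $\NP{}$ oracle that asks whether there exists a complete and consistent $J^{*}$ with cumulative unweighted distance exactly $d^{\mtext{win}}_{\mtext{old}}$ and weighted distance to $J_1$ at least the threshold. Everything else is a direct adaptation of the three earlier membership proofs, so I would invoke those constructions rather than repeat them.
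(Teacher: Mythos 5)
Your proposal is correct, and its overall architecture is the same as the paper's: a nondeterministic polynomial-time algorithm with an \NP{} oracle that computes~$d^{\mtext{win}}_{\mtext{old}}$, guesses~$J'$ together with a satisfying assignment, computes~$d^{\mtext{win}}_{\mtext{new}}$ via~$O(\log n)$ oracle queries, and then guesses and verifies a new winner~$J^{*}_{\mtext{new}}$. The one point where you genuinely diverge is the treatment of the existentially quantified old winner. The paper exploits the fact that all three quantifiers in the superoptimistic condition are existential and simply \emph{guesses}~$J^{*}_{\mtext{old}}$ together with a truth assignment~$\gamma$, verifying that~$\gamma$ satisfies~$J^{*}_{\mtext{old}}$ and~$\Gamma$ and that~$d(J^{*}_{\mtext{old}},\prof{J}) = d^{\mtext{win}}_{\mtext{old}}$; the final check is then~$d(J^{*}_{\mtext{new}},J_1,w) < d(J^{*}_{\mtext{old}},J_1,w)$ against the guessed witness. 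You instead reuse step~(2) of the pessimistic membership proof: deterministically compute~$d^{\mtext{max}}_{\mtext{old}}$ by binary search with the \NP{} oracle and test~$d(J^{*}_{\mtext{new}},J_1,w) < d^{\mtext{max}}_{\mtext{old}}$. The two are interchangeable, and your justification is sound: since~$\Kemeny(\prof{J})$ is nonempty and finite, the maximum weighted distance is attained by some old winner, so some~$J^{*}_{\mtext{old}} \in \Kemeny(\prof{J})$ exceeds~$d(J^{*}_{\mtext{new}},J_1,w)$ if and only if~$d^{\mtext{max}}_{\mtext{old}}$ does, and the binary search needs only polynomially many queries because the weights have polynomial bit-length. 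What each approach buys: the paper's guess is marginally simpler (no extra oracle queries, and no need to argue the maximum is attained), while your reduction to the pessimistic machinery makes the superoptimistic algorithm a one-line modification of an already-established subroutine; both yield \SigmaP{2} membership.
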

\begin{proof}
We describe a nondeterministic polynomial-time
algorithm with access to an \NP{} oracle
that solves the problem.
Let~$(\Phi,\Gamma,w,\prof{J})$ specify an instance of
\SuperoptimisticManipulation{\Kemeny}.
The algorithm proceeds in several steps.
For the first steps, the algorithm proceeds exactly as the
algorithm described in the proof of Proposition~\ref{prop:manipulation-cautious-membership}.
That is,~(1) the algorithm computes~$d^{\mtext{win}}_{\mtext{old}}$,
in deterministic polynomial time
using~$O(\log n)$ queries to an \NP{} oracle.

Then,~(2$'$) the algorithm guesses a complete judgment set~$J^{*}_{\mtext{old}}$
together with a truth assignment~$\gamma : \Var{\Phi,\Gamma} \rightarrow
\BB{}$, and it checks whether~$\gamma$ satisfies both~$J^{*}_{\mtext{old}}$
and~$\Gamma$, and whether~$d(J^{*}_{\mtext{old}},\prof{J}) = d^{\mtext{win}}_{\mtext{old}}$.

For the third step, the algorithm proceeds in a similar fashion as the algorithm
described in the proof of Proposition~\ref{prop:manipulation-cautious-membership}.
That is,~(3a) the algorithm guesses a complete judgment set~$J'$
together with a truth assignment~$\alpha : \Var{\Phi,\Gamma} \rightarrow
\BB{}$, and it checks whether~$\alpha$ satisfies both~$J'$
and~$\Gamma$.
Also,~(3b) the algorithm determines the minimum
distance~$d^{\mtext{win}}_{\mtext{new}}$
from~$(\prof{J}_{-1},J')$ to any complete and consistent judgment
set~$J^{*} \in \JJJ(\Phi,\Gamma)$.
This can be done using~$O(\log n)$ queries to an \NP{} oracle.
Then,~(3c$''$) the algorithm guesses some complete
judgment set~$J^{*}_{\mtext{new}}$ together with a
truth assignment~$\alpha' : \Var{\Phi,\Gamma} \rightarrow
\BB{}$, and it checks whether~$\alpha'$
satisfies both~$J^{*}_{\mtext{new}}$ and~$\Gamma$.
Moreover, the algorithm checks whether%
~$d(J^{*}_{\mtext{new}},(\prof{J}_{-1},J')) = d^{\mtext{win}}_{\mtext{new}}$---%
that is,~$J^{*}_{\mtext{new}} \in \Kemeny(\prof{J}_{-1},J')$---%
and~$d(J^{*}_{\mtext{new}},J_1,w) < d(J^{*}_{\mtext{old}},J_1,w)$,
and accepts if and only if this is the case.

It is straightforward to verify that the algorithm runs in nondeterministic
polynomial time.
Moreover, the algorithm accepts the input (for some sequence of
nondeterministic choices) if and only if there is some complete and
consistent judgment set~$J'$,
some~$J^{*}_{\mtext{new}} \in \Kemeny(\prof{J}_{-1},J')$
and some~$J^{*}_{\mtext{old}} \in \Kemeny(\prof{J})$
such that~$d(J^{*}_{\mtext{new}},J_1,w) < d(J^{*}_{\mtext{old}},J_1,w)$.
\end{proof}

\begin{corollary}
\label{cor:manipulation-safe-membership}
\SafeManipulation{\Kemeny} is in \SigmaP{2}.
\end{corollary}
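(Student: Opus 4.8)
The plan is to obtain this membership result by combining the subroutines developed in the proofs of Propositions~\ref{prop:manipulation-cautious-membership} and~\ref{prop:manipulation-pessimistic-membership}. The key observation is that \SafeManipulation{\Kemeny} asks for a \emph{single} witness~$J'$ that simultaneously satisfies the two conditions~(1) and~(2) in its definition, and that---once~$J'$ has been guessed---each of these conditions can be verified in deterministic polynomial time using an \NP{} oracle. Since a single nondeterministic guess followed by a deterministic polynomial-time computation with an \NP{} oracle stays within \SigmaP{2}, this suffices.

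First I would reformulate the two conditions in terms of extremal weighted distances over Kemeny winners. Writing~$d^{\mtext{min}}_{\mtext{old}}$ and~$d^{\mtext{max}}_{\mtext{old}}$ for the minimum and maximum weighted distance from~$J_1$ to any~$J^{*} \in \Kemeny(\prof{J})$, condition~(1) (that every new outcome is at least as good as every old outcome) is equivalent to requiring that the \emph{largest} weighted distance among the new Kemeny winners does not exceed~$d^{\mtext{min}}_{\mtext{old}}$, while condition~(2) (that some new outcome strictly beats some old outcome) is equivalent to requiring that \emph{some} new Kemeny winner has weighted distance strictly below~$d^{\mtext{max}}_{\mtext{old}}$.

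With this reformulation in hand, the algorithm proceeds as follows. As in Propositions~\ref{prop:manipulation-cautious-membership} and~\ref{prop:manipulation-pessimistic-membership}, it first computes~$d^{\mtext{win}}_{\mtext{old}}$ and then both~$d^{\mtext{min}}_{\mtext{old}}$ and~$d^{\mtext{max}}_{\mtext{old}}$, all in deterministic polynomial time with access to an \NP{} oracle. Next it guesses a complete judgment set~$J'$ together with a satisfying assignment witnessing consistency, and computes the winning distance~$d^{\mtext{win}}_{\mtext{new}}$ for the profile~$(\prof{J}_{-1},J')$. Finally it issues two threshold queries to the \NP{} oracle: one asking whether there is a new Kemeny winner~$J^{*}_{\mtext{new}}$ (that is, a complete and consistent set at cumulative distance~$d^{\mtext{win}}_{\mtext{new}}$ from~$(\prof{J}_{-1},J')$) with~$d(J^{*}_{\mtext{new}},J_1,w) > d^{\mtext{min}}_{\mtext{old}}$, which would \emph{violate} condition~(1); and one asking whether there is a new Kemeny winner with~$d(J^{*}_{\mtext{new}},J_1,w) < d^{\mtext{max}}_{\mtext{old}}$, which \emph{certifies} condition~(2). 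The algorithm accepts exactly when the first query fails and the second succeeds.

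The step I expect to require the most care is the simultaneous enforcement of both conditions on the same guessed~$J'$. Condition~(2) in isolation invites the ``superoptimistic'' treatment of Proposition~\ref{prop:manipulation-superoptimistic-membership}, where the relevant old and new winners are simply guessed; but that guess-based approach does not combine with the universally quantified condition~(1) for a shared~$J'$ inside a single existential block. The resolution is to handle \emph{both} conditions deterministically, precomputing~$d^{\mtext{min}}_{\mtext{old}}$ and~$d^{\mtext{max}}_{\mtext{old}}$ before the guess (as in the cautious and pessimistic proofs) and reducing each condition to a single threshold query afterwards. Verifying the correctness of these reformulations---in particular that ``all new~$\leq$ all old'' collapses to a comparison against~$d^{\mtext{min}}_{\mtext{old}}$ and that ``some new~$<$ some old'' collapses to a comparison against~$d^{\mtext{max}}_{\mtext{old}}$---is then routine, and establishes membership in \SigmaP{2}.
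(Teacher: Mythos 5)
Your algorithm is correct and establishes membership in \SigmaP{2}: the reformulations it rests on are sound, since $\Kemeny(\prof{J})$ and $\Kemeny(\prof{J}_{-1},J')$ are always nonempty, so condition~(1) is exactly ``no new Kemeny winner has weighted distance to~$J_1$ exceeding~$d^{\mtext{min}}_{\mtext{old}}$'' and condition~(2) is exactly ``some new Kemeny winner has weighted distance strictly below~$d^{\mtext{max}}_{\mtext{old}}$'', each of which is a single \NP{} query once~$d^{\mtext{win}}_{\mtext{new}}$ has been computed. Where you diverge from the paper is in how condition~(2) is handled: the paper's (sketched) proof combines the cautious algorithm of Proposition~\ref{prop:manipulation-cautious-membership} with the \emph{superoptimistic} one of Proposition~\ref{prop:manipulation-superoptimistic-membership}, i.e., it guesses witnesses~$J^{*}_{\mtext{new}}$ and~$J^{*}_{\mtext{old}}$ for the existential condition and verifies them with oracle help, whereas you precompute~$d^{\mtext{max}}_{\mtext{old}}$ as in the pessimistic proof and replace the witness guess by a threshold query. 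Both routes work, and yours is a perfectly clean alternative; however, your stated reason for avoiding the paper's route is mistaken. Guessing witnesses for condition~(2) combines without difficulty with the universally quantified condition~(1) inside a single existential block: the machine guesses~$J'$, $J^{*}_{\mtext{new}}$ and~$J^{*}_{\mtext{old}}$ together with consistency certificates, verifies (using the oracle-computed values~$d^{\mtext{win}}_{\mtext{old}}$ and~$d^{\mtext{win}}_{\mtext{new}}$) that the guessed sets are Kemeny winners satisfying the strict inequality, and additionally issues the negated threshold query for condition~(1), accepting on a branch iff all checks pass. The universal quantifier of condition~(1) is absorbed into an oracle query in either design, so nothing forces the all-threshold formulation---it is simply an equally valid, and arguably more uniform, way to flesh out the sketch.
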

\begin{proof}[Proof (sketch)]
The algorithms described in the proofs of
Proposition~\ref{prop:manipulation-cautious-membership}
and~\ref{prop:manipulation-superoptimistic-membership}
can straightforwardly be modified and combined to form
a nondeterministic polynomial-time algorithm with access to an \NP{} oracle
that solves \SafeManipulation{\Kemeny}.
\end{proof}

\begin{proposition}
\label{prop:manipulation-cautious-hardness}
\CautiousManipulation{\Kemeny} is \SigmaP{2}-hard.
\end{proposition}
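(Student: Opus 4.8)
The plan is to establish \SigmaP{2}-hardness by a polynomial-time reduction from \QSat{2}. Fix an input $\exists x_1 \dotsc x_\ell \forall y_1 \dotsc y_m.\ \psi$, with $\psi$ quantifier-free. The guiding idea is to match the quantifier structure of cautious manipulation, $\exists J'\,\forall J^{*}_{\mtext{new}} \dotsc$, to that of the formula: the insincere report~$J'$ will encode an assignment to the~$x_i$, the universally quantified winners $J^{*}_{\mtext{new}} \in \Kemeny(\prof{J}_{-1},J')$ will range over assignments to the~$y_j$, and the integrity constraint~$\Gamma$ will encode~$\psi$. It is cleanest to build the instance in the constraint-based form (issues are propositional atoms related by~$\Gamma$); since the manipulation problems are defined analogously there and atoms are legitimate agenda formulas, the construction transfers to the formula-based setting.

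Concretely, I would take as issues the variables $x_1,\dotsc,x_\ell$, the variables $y_1,\dotsc,y_m$, and a block of $T$ copies $s_1,\dotsc,s_T$ of a single ``satisfaction'' issue, together with auxiliary (Tseitin) variables used to write~$\psi$ compactly; the integrity constraint enforces $s_t \leftrightarrow \psi(x,y)$ for each~$t$, so that every rational ballot is determined by its $(x,y)$-part and records in the~$s_t$ whether~$\psi$ holds. The profile $\prof{J} = (J_1,\dotsc,J_p)$ is built around one manipulator slot~$J_1$ and a large, carefully balanced block $J_2,\dotsc,J_p$ whose purpose is twofold: (i)~to leave the $x_i$-issues exactly tied, so that replacing $J_1$ by~$J'$ lets the single report~$J'$ fix the~$x_i$ in every Kemeny winner; and (ii)~to leave the remaining issues balanced, so that for the intended reports the minimum cumulative Hamming distance is realized by a whole family of ballots whose $y_j$-parts range over all assignments. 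I would fix the sincere judgment~$J_1$ and the threshold so that a new winner's distance to~$J_1$ drops below the (controlled) distance realized by the old winners $\Kemeny(\prof{J})$ precisely when its $s_t$-block records $\psi = 1$; choosing $T$ larger than the number of $y_j$-issues makes this $\psi$-signal dominate the unavoidable variation coming from differing $y_j$-values in the \emph{unweighted} Hamming distance, which is exactly what lets hardness be stated for the unweighted case.

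The intended correctness argument then reads: for a report~$J'$ encoding an assignment~$X$, every winner of $(\prof{J}_{-1},J')$ agrees with~$J'$ on~$X$ and realizes some~$Y$ with $s_t = \psi(X,Y)$; such a winner beats the old winners iff $\psi(X,Y)=1$. Hence \emph{all} new winners beat \emph{all} old winners iff $\psi(X,Y)=1$ for every~$Y$, so a successful cautious manipulation exists iff $\exists X\,\forall Y.\ \psi(X,Y)$.

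I expect the main obstacle to lie squarely in item~(ii): making $\Kemeny(\prof{J}_{-1},J')$ robustly range over \emph{all} relevant~$Y$. Since the manipulator contributes a single ballot, its report perturbs the count on each $y_j$-issue by one, which on its own would lock each~$y_j$ to~$J'(y_j)$ and collapse the winner family to one~$Y$; worse, the manipulator could then \emph{suppress} a violating~$Y$ (one with $\psi(X,Y)=0$) and manipulate spuriously, producing a false positive. The delicate part is therefore to engineer $J_2,\dotsc,J_p$ and~$\Gamma$ so that the multiplicity of minimum-distance ballots arises from genuine consistency trade-offs forced by~$\Gamma$---which a single deviating vote cannot undo---rather than from fragile coordinatewise ties, guaranteeing that every violating $Y$-winner survives. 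The remaining distance bookkeeping (that the old winners are pinned to a common value, that the threshold separates $\psi=1$ from $\psi=0$, and that no deviation of~$J'$ on the~$y_j$ is ever beneficial) is routine but is where the constants $T$ and $p$ must be fixed. Together with the membership already established, this yields \SigmaP{2}-completeness.
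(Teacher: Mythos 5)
There is a genuine gap, and it sits exactly where you flag it: the mechanism that is supposed to make $\Kemeny(\prof{J}_{-1},J')$ range over \emph{all} assignments to~$Y$ at equal Kemeny score is never constructed, and the specific design you sketch cannot be completed. The obstruction is that your integrity constraint $s_t \leftrightarrow \psi(x,y)$ binds \emph{every} rational ballot---the manipulator's report and every profile ballot alike. Consequently (a)~any rational report~$J'$ must itself encode some assignment~$Y_0$ on the $y_j$-issues and must carry $s_t = \psi(X,Y_0)$, so it shifts every $y_j$-count and the entire $s$-block by one vote; with the rest of the profile balanced, the winner set then collapses to the single ballot encoding $(X,Y_0)$. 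A manipulator who picks any $(X,Y_0)$ with $\psi(X,Y_0)=1$ thereby obtains a unique winner whose $s$-block records $\psi=1$ and which beats the old winners, so your reduction answers YES whenever $\exists X \exists Y.\,\psi$ holds---it decides the wrong quantifier prefix, giving false positives precisely when $\exists X\exists Y.\,\psi$ but not $\exists X\forall Y.\,\psi$. And (b)~balancing the $s$-block inside $J_2,\dotsc,J_p$ requires exhibiting profile voters whose $(x,y)$-part \emph{satisfies} $\psi$, which a polynomial-time reduction cannot do in general. The closing remark that the remaining bookkeeping is ``routine'' understates this: without resolving (a) and (b) there is no reduction at all.

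It is instructive to compare with how the paper's proof of Proposition~\ref{prop:manipulation-cautious-hardness} evades both problems, because it uses a genuinely different mechanism: the winner set is never made to range over~$Y$. Instead, $\Gamma$ provides escape hatches so that neither the profile ballots (which use the $u_{i,\ell}$-blocks) nor the intended report $J_{\alpha}$ (which sets all $t_j$ true and $y_j = y'_j = 0$) has to encode a $Y$-assignment or evaluate~$\psi$; each $y_j$ is paired with a complementary copy $y'_j$ and the profile says $0$ on both, so that a ballot encoding $\beta$ pays the same distance for every~$\beta$; and, crucially, the universal quantifier over $Y$ is captured by \emph{consistency}: any ballot that does encode some $\beta$ is forced by $\Gamma$ to satisfy $\neg\psi$, and such a ``spoiler'' $J^{*}_{\beta}$ is engineered to beat the intended winner $J^{*}_{\alpha}$ by exactly one point of cumulative distance. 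Hence $J^{*}_{\alpha}$ is a Kemeny winner if and only if no spoiler is consistent, i.e., if and only if $\psi[\alpha]$ is valid. Since consistency is a property of $\Gamma$ alone, no choice of report can suppress a spoiler---exactly the robustness your equal-score winner-family design lacks. Any repair of your construction will be pushed toward this shape (an escape hatch for the report plus $\neg\psi$-constrained competitors at strictly better score), rather than toward exact ties across the $\psi{=}0/\psi{=}1$ divide.
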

\begin{proof}
We show \SigmaP{2}-hardness by giving a reduction from the satisfiability
problem for quantified Boolean formulas of the form~$\exists x_1,\dotsc,x_n.
\forall y_1,\dotsc,y_m. \psi$.
Let~$\varphi = \exists x_1,\dotsc,x_n. \forall y_1,\dotsc,y_m. \psi$ be a
quantified Boolean formula.
Let~$X = \SBs x_1,\dotsc,x_n \SEs$ and~$Y = \SBs y_1,\dotsc,y_m \SEs$.
Without loss of generality, we assume that~$n$ is a multiple of~$3$---%
that is, that~$n = 3n'$ for some~$n' \in \NN$.
%
We construct an agenda~$\Phi$, an integrity constraint~$\Gamma$,
a weight function~$w : \Pre{\Phi} \rightarrow \NN$,
and a profile~$\prof{J}$ as follows.

We consider fresh variables~$x'_1,\dotsc,x'_n$
and fresh variables~$y'_1,\dotsc,y'_m$.
Moreover, we consider fresh variables~$z_1,\dotsc,z_{n/3}$,
fresh variables~$t_1,\dotsc,t_m$,
fresh variables~$w_{1,\ell}$
for~$\ell \in [n+1]$,
fresh variables~$w_{2,\ell}$
for~$i \in \SBs 2,3 \SEs$ and~$\ell \in [n]$,
and fresh variables~$u_{i,\ell}$ for~$i \in [3]$ and~$\ell \in [u]$,
where~$u = 10n+10m+10$.
We then let the agenda~$\Phi$ consist of the variables~$x_1,\dotsc,x_n$
and~$y_1,\dotsc,y_m$ and the fresh variables we introduced above,
together with their negations.
That is, we let~$\Pre{\Phi} = \SB x_i, x'_i \SM i \in [n] \SE \cup
\SB z_i \SM i \in [n/3] \SE \cup
\SB y_j, y'_j, t_j \SM j \in [m] \SE \cup
\SB w_{1,\ell} \SM \ell \in [n+1] \SE \cup
\SB w_{i,\ell} \SM i \in \SBs 2,3 \SEs, \ell \in [n] \SE \cup
\SB u_{i,\ell} \SM i \in [3], \ell \in [u] \SE$.


We then define the integrity constraint~$\Gamma$ as follows.
We let
\[ \Gamma = \Gamma_0 \vee
\bigvee\limits_{i \in [3]} \bigwedge\limits_{\ell \in \mathrlap{[u]}} u_{i,\ell},\]
and
\begin{align*}
  \Gamma_0 =\ &
    \left (
      \left ( \bigwedge\limits_{i \in [n]} (x_i \oplus x'_i) \right )
    \vee
      \bigwedge\limits_{i \in [n/\mathrlap{3]}} z_i
    \right )
      \wedge
    \left (
      \left ( \bigwedge\limits_{i \in [n]} (x_i \oplus x'_i) \right )
    \rightarrow
      \bigwedge\limits_{i \in [n/\mathrlap{3]}} \neg z_i
    \right ) \\
  \wedge\ &
    \left (
      \left ( \bigwedge\limits_{j \in [m]} (y_j \oplus y'_j) \right )
    \oplus
      \bigwedge\limits_{j \in [m]} t_j
    \right )
      \wedge
    \left (
      \left ( \bigwedge\limits_{j \in [m]} (y_j \oplus y'_j) \right )
    \rightarrow
      \bigwedge\limits_{j \in [m]} \neg t_j
    \right ) \\
  \wedge\ & \left (
      \bigwedge\limits_{i \in [n/\mathrlap{3]}} z_i
    \rightarrow
      \left ( \bigwedge\limits_{i \in [n]} (\neg x_i \wedge \neg x'_i) \right )
    \right )
      \wedge
    \left (
      \left ( \bigwedge\limits_{j \in [m]} t_j \right )
    \rightarrow
      \bigwedge\limits_{j \in [m]} (\neg y_j \wedge \neg y'_j)
    \right ) \\
  \wedge\ &
    \left (
      \left ( \bigwedge\limits_{i \in [n/\mathrlap{3]}} z_i \right )
    \rightarrow
      \bigwedge\limits_{j \in [m]} t_j
    \right ) \\
  \wedge\ &
    \left (
      \left ( \bigwedge\limits_{i \in [n/\mathrlap{3]}} z_i
      \wedge \bigwedge\limits_{j \in [m]} t_j \right )
    \rightarrow
      \left (
      \bigwedge\limits_{\ell \in [n]} w_{2,\ell}
      \vee
      \bigwedge\limits_{\ell \in [n]} w_{3,\ell}
      \right )
    \right ) \\
  \wedge\ &
    \left (
      \left ( \neg \left (\bigwedge\limits_{i \in [n/\mathrlap{3]}} z_i \right )
      \wedge \bigwedge\limits_{j \in [m]} t_j \right )
    \rightarrow
      \bigwedge\limits_{\ell \in [n+1]} w_{1,\ell}
    \right ) \\
  \wedge\ &
    \left (
      \left ( \neg \left ( \bigwedge\limits_{i \in [n/\mathrlap{3]}} z_i \right )
      \wedge \neg \left ( \bigwedge\limits_{j \in [m]} t_j \right ) \right )
    \rightarrow
      \neg\psi \wedge 
      \left (
      \bigwedge\limits_{\ell \in [n]} w_{2,\ell}
      \vee
      \bigwedge\limits_{\ell \in [n]} w_{3,\ell}
      \right )
    \right ).
\end{align*}
(Here~$\oplus$ denotes exclusive disjunction.)

As a result of the definition of~$\Gamma$,
each complete and consistent judgment set~$J \in \JJJ(\Phi,\Gamma)$
satisfies (at least) one of the following four conditions.
\begin{enumerate}
  \item For some~$i \in [3]$, the judgment set~$J$ includes
    each formula~$u_{i,\ell}$ for~$\ell \in [u]$.
  \item The judgment set~$J$ includes
    exactly one of~$x_i$ and~$x'_i$ for each~$i \in [n]$,
    it includes none of the formulas~$z_i$,
    it includes exactly one of~$y_j$ and~$y'_j$ for each~$j \in [m]$,
    it includes none of the formulas~$t_j$,
    it includes either all formulas~$w_{2,\ell}$ or all formulas~$w_{3,\ell}$
    for~$\ell \in [n]$,
    and~$J$ does not satisfy~$\psi$.
  \item The judgment set~$J$ includes
    exactly one of~$x_i$ and~$x'_i$ for each~$i \in [n]$,
    it includes none of the formulas~$z_i$,
    it includes all formulas~$t_j$,
    for each~$j \in [m]$ it includes either none or both of~$y_j$ and~$y'_j$,
    and it includes all formulas~$w_{1,\ell}$ for~$\ell \in [n+1]$.
  \item The judgment set~$J$ includes
    all formulas~$z_i$,
    for each~$i \in [n]$ it includes either none or both of~$x_i$ and~$x'_i$,
    it includes all formulas~$t_j$,
    for each~$j \in [m]$ it includes either none or both of~$y_j$ and~$y'_j$,
    and
    it includes either all formulas~$w_{2,\ell}$ or all formulas~$w_{3,\ell}$
    for~$\ell \in [n]$.
\end{enumerate}

We define~$w : \Pre{\Phi} \rightarrow \NN$
by letting~$w(\varphi) = 1$ for each~$\varphi \in \Pre{\Phi}$.
In other words, we consider the unweighted Hamming distance.

Finally, we let~$\prof{J} = (J_1,J_2,J_3)$, where~$J_1,J_2,J_3$ are
defined as described in Table~\ref{table:manipulation-cautious-hardness-profile}.
In this table, the indices~$i,j,\ell$ range over all possible
values, and for each~$\varphi \in \Pre{\Phi}$ we write a~$0$
if~$\varphi \not\in J_i$ and a~$1$ if~$\varphi \in J_i$.

\begin{table}[ht!]
  \begin{center}
  \begin{tabular}{c | c @{\ \ } c @{\ \ } c @{\ \ } c @{\ \ } c @{\ \ } c @{\ \ } c @{\ \ } c @{\ \ } c @{\ \ } c @{\ \ } c @{\ \ } c}
    \toprule
    $\prof{J}$ & $x_i$ & $x'_i$ & $z_i$ & $y_j$ & $y'_j$ & $t_j$ &
       $w_{1,\ell}$ & $w_{2,\ell}$ & $w_{3,\ell}$ &
       $u_{1,\ell}$ & $u_{2,\ell}$ & $u_{3,\ell}$ \\
    \midrule
    $J_1$ & $0$ & $0$ & $0$ & $0$ & $0$ & $0$ &
      $1$ & $0$ & $0$ & $1$ & $0$ & $0$ \\
    $J_2$ & $0$ & $0$ & $0$ & $0$ & $0$ & $0$ &
      $0$ & $1$ & $0$ & $0$ & $1$ & $0$ \\
    $J_3$ & $1$ & $1$ & $0$ & $0$ & $0$ & $0$ &
      $0$ & $0$ & $1$ & $0$ & $0$ & $1$ \\
    \bottomrule
  \end{tabular}
  \end{center}
  \caption{The profile~$\prof{J} = (J_1,J_2,J_3)$ that we use in the
    proof of Proposition~\ref{prop:manipulation-cautious-hardness}.}
  \label{table:manipulation-cautious-hardness-profile}
\end{table}

In the remainder,
we will argue that there is some truth assignment~$\alpha : X \rightarrow \BB{}$
such that for all truth assignments~$\beta : Y \rightarrow \BB{}$
it holds that~$\psi[\alpha \cup \beta]$ is true
if and only if
there is some complete and consistent
judgment set~$J'_1 \in \JJJ(\Phi,\Gamma)$
such that for all~$J^{*}_{\mtext{new}} \in \Kemeny(\prof{J}_{-1},J')$
and for all~$J^{*}_{\mtext{old}} \in \Kemeny(\prof{J})$
it holds that $d(J^{*}_{\mtext{new}},J_1) < d(J^{*}_{\mtext{old}},J_1)$.

Firstly, we observe that~$\Kemeny(\prof{J}) = \SBs J^{*}_{\mtext{old},1},
J^{*}_{\mtext{old},2} \SEs$, where~$J^{*}_{\mtext{old},1}$
and~$J^{*}_{\mtext{old},2}$ are defined as described
in Table~\ref{table:manipulation-cautious-hardness-old-winners}.
Both~$J^{*}_{\mtext{old},1}$ and~$J^{*}_{\mtext{old},2}$ are complete
and consistent, and have a cumulative Hamming distance of~$7n+3m+1+3u$
to the profile~$\prof{J}$.
It is straightforward to verify that no complete and consistent judgment set
has a smaller cumulative Hamming distance to the profile~$\prof{J}$.
(For instance, the judgment sets of the form~$J^{*}_{\alpha}$,
as described below in
Table~\ref{table:manipulation-cautious-hardness-insincere-judgment},
have a cumulative Hamming distance of~$7n+3m+2+3u$ to the profile~$\prof{J}$.)

\begin{table}[ht!]
  \begin{center}
  \begin{tabular}{c | c @{\ \ } c @{\ \ } c @{\ \ } c @{\ \ } c @{\ \ } c @{\ \ } c @{\ \ } c @{\ \ } c @{\ \ } c @{\ \ } c @{\ \ } c}
    \toprule
    $\Kemeny(\prof{J})$ & $x_i$ & $x'_i$ & $z_i$ & $y_j$ & $y'_j$ & $t_j$ &
       $w_{1,\ell}$ & $w_{2,\ell}$ & $w_{3,\ell}$ &
       $u_{1,\ell}$ & $u_{2,\ell}$ & $u_{3,\ell}$ \\
    \midrule
    $J^{*}_{\mtext{old},1}$ & $0$ & $0$ & $1$ & $0$ & $0$ & $1$ &
      $0$ & $1$ & $0$ & $0$ & $0$ & $0$ \\
    $J^{*}_{\mtext{old},2}$ & $0$ & $0$ & $1$ & $0$ & $0$ & $1$ &
      $0$ & $0$ & $1$ & $0$ & $0$ & $0$ \\
    \bottomrule
  \end{tabular}
  \end{center}
  \caption{The judgment sets~$J^{*}_{\mtext{old},1}$
    and~$J^{*}_{\mtext{old},2}$ that are used in the
    proof of Proposition~\ref{prop:manipulation-cautious-hardness}.}
  \label{table:manipulation-cautious-hardness-old-winners}
\end{table}

Next, we argue that the only way for agent~$1$ to enforce an
outcome that is closer to~$J_1$ than~$J^{*}_{\mtext{old},1}$
and~$J^{*}_{\mtext{old},2}$ are, is to enforce an outcome~$J^{*}$
that satisfies condition~(3).
It is impossible for agent~$1$ to force a complete and consistent
judgment set that satisfies condition~(1) to be a Kemeny outcome,
because there are complete and consistent judgment sets at
smaller distance to the profile for any judgment~$J'_1$ that agent~$1$ reports
(e.g., the judgment sets~$J^{*}_{\mtext{old},1}$
and~$J^{*}_{\mtext{old},2}$).
For each of the complete and consistent judgment sets that satisfy
condition~(2), the Hamming distance to~$J_1$ is at least
as much as the Hamming distance from~$J_1$
to~$J^{*}_{\mtext{old},1}$ and~$J^{*}_{\mtext{old},2}$.
Among the complete and consistent judgment sets that satisfy
condition~(4), the judgment sets~$J^{*}_{\mtext{old},1}$
and~$J^{*}_{\mtext{old},2}$ are of smallest Hamming distance
to~$J_1$. Therefore, there is no insincere judgment~$J'_1$
that agent~$1$ can report to obtain a Kemeny outcome
satisfying condition~(4) that is of smaller Hamming distance
to~$J_1$.
This means that if agent~$1$ were to enforce
a Kemeny outcome~$J^{*}_{\mtext{new}}$ that is of smaller
Hamming distance to~$J_1$
than~$J^{*}_{\mtext{old},1}$ and~$J^{*}_{\mtext{old},2}$ are
(by reporting an insincere judgment~$J'_1$),
then~$J^{*}_{\mtext{new}}$ must satisfy condition~(3).

We continue with observing several further properties
that each complete and consistent judgment set~$J^{*}_{\mtext{new}}
\in \Kemeny(\prof{J}_{-1},J')$ must satisfy.
We know that~$J^{*}_{\mtext{new}}$ includes none of the
formulas~$y_j$ and~$y'_j$, as including these would strictly
increase the Hamming distance to the profile.
Similarly,~$J^{*}_{\mtext{new}}$ includes none of the
formulas~$w_{2,\ell}$,~$w_{3,\ell}$ and~$u_{i,\ell}$.

Finally,
we argue that there is some truth assignment~$\alpha : X \rightarrow \BB{}$
such that for all truth assignments~$\beta : Y \rightarrow \BB{}$
it holds that~$\psi[\alpha \cup \beta]$ is true
if and only if
there is some complete and consistent
judgment set~$J'_1 \in \JJJ(\Phi,\Gamma)$
such that for all~$J^{*}_{\mtext{new}} \in \Kemeny(\prof{J}_{-1},J')$
and for all~$J^{*}_{\mtext{old}} \in \Kemeny(\prof{J})$
it holds that $d(J^{*}_{\mtext{new}},J_1) < d(J^{*}_{\mtext{old}},J_1)$.

$(\Rightarrow)$
Suppose that there exists some truth assignment~$\alpha : X \rightarrow \BB{}$
such that for all truth assignments~$\beta : Y \rightarrow \BB{}$
it holds that~$\psi[\alpha \cup \beta]$ is true.
Consider the complete and consistent judgment set~$J_{\alpha}$
that is described in Table~\ref{table:manipulation-cautious-hardness-insincere-judgment}.
We then get that~$\Kemeny(\prof{J}_{-1},J_{\alpha}) = \SBs J^{*}_{\alpha} \SEs$,
where~$J^{*}_{\alpha}$ is
also described in Table~\ref{table:manipulation-cautious-hardness-insincere-judgment}.
The only possible complete and consistent judgment sets~$J^{*}$
that could have a smaller Hamming distance to the
profile~$(\prof{J}_{-1},J_{\alpha})$ would have to satisfy condition~(2).
That is, such judgment sets~$J^{*}$ would have to include exactly
one of~$y_j$ and~$y'_j$, for each~$j \in [m]$,
and would have to satisfy~$\neg\psi$.
Moreover, in order for such judgment sets~$J^{*}$ to have a smaller
Hamming distance to the profile, it would have to agree with~$J^{*}_{\alpha}$
on the formulas~$x_i$ and~$x'_i$, for all~$i \in [n]$.
However, since~$\psi[\alpha]$ is valid, we know that such judgment
sets~$J^{*}$ are not consistent.
Therefore,~$\Kemeny(\prof{J}_{-1},J_{\alpha}) = \SBs J^{*}_{\alpha} \SEs$.
Clearly,~$d(J_1,J^{*}_{\alpha}) < d(J_1,J^{*}_{\mtext{old},i})$
for both~$i \in [2]$.
In other words,
for all~$J^{*}_{\mtext{new}} \in \Kemeny(\prof{J}_{-1},J_{\alpha})$
and for all~$J^{*}_{\mtext{old}} \in \Kemeny(\prof{J})$
it holds that $d(J^{*}_{\mtext{new}},J_1) < d(J^{*}_{\mtext{old}},J_1)$.

\begin{table}[ht!]
  \begin{center}
  \begin{tabular}{c | c @{\ \ } c @{\ \ } c @{\ \ } c @{\ \ } c @{\ \ } c @{\ \ } c @{\ \ } c @{\ \ } c @{\ \ } c @{\ \ } c @{\ \ } c}
    \toprule
     & $x_i$ & $x'_i$ & $z_i$ & $y_j$ & $y'_j$ & $t_j$ &
       $w_{1,\ell}$ & $w_{2,\ell}$ & $w_{3,\ell}$ &
       $u_{1,\ell}$ & $u_{2,\ell}$ & $u_{3,\ell}$ \\
    \midrule
    $J_{\alpha}$ & $\alpha(x_i)$ & $1-\alpha(x_i)$ & $0$ & $0$ & $0$ & $1$ &
      $1$ & $0$ & $0$ & $1$ & $0$ & $0$ \\
    $J^{*}_{\alpha}$ & $\alpha(x_i)$ & $1-\alpha(x_i)$ & $0$ & $0$ & $0$ & $1$ &
      $1$ & $0$ & $0$ & $0$ & $0$ & $0$ \\
    \bottomrule
  \end{tabular}
  \end{center}
  \caption{The judgment sets~$J_{\alpha}$
    and~$J^{*}_{\alpha}$ that are used in
    proof of Proposition~\ref{prop:manipulation-cautious-hardness}.}
  \label{table:manipulation-cautious-hardness-insincere-judgment}
\end{table}

$(\Leftarrow)$
Conversely, suppose that there is some complete and consistent
judgment set~$J'_1 \in \JJJ(\Phi,\Gamma)$
such that for all~$J^{*}_{\mtext{new}} \in \Kemeny(\prof{J}_{-1},J'_1)$
and for all~$J^{*}_{\mtext{old}} \in \Kemeny(\prof{J})$
it holds that $d(J^{*}_{\mtext{new}},J_1) < d(J^{*}_{\mtext{old}},J_1)$.
As we argued above, each such complete and consistent
judgment set~$J^{*}_{\mtext{new}}$ satisfies condition~(3),
and furthermore, it does not contain any of the
formulas~$y_j,y'_j,w_{2,\ell},w_{3,\ell}$ and~$u_{i,\ell}$.
That is, such a judgment set~$J^{*}_{\mtext{new}}$
must be of the form~$J^{*}_{\alpha}$,
for some~$\alpha : X \rightarrow \BB{}$,
as described in
Figure~\ref{table:manipulation-cautious-hardness-insincere-judgment}.
Fix this truth assignment~$\alpha : X \rightarrow \BB{}$.
It suffices to consider insincere judgment sets~$J'_1$ that minimize
the distance to the judgment set~$J^{*}_{\alpha}$---that is,
it suffices to consider the situation where~$J'_1 = J^{*}_{\alpha}$.
(The following argument works regardless of the judgment of~$J'_1$
on the formulas~$u_{1,\ell}$.)
In other words, we consider the profile~$\prof{J}' = (\prof{J}_{-1},J^{*}_{\alpha})$.

\begin{table}[ht!]
  \begin{center}
  \begin{tabular}{c | c @{\ \ } c @{\ \ } c @{\ \ } c @{\ \ } c @{\ \ } c @{\ \ } c @{\ \ } c @{\ \ } c @{\ \ } c @{\ \ } c @{\ \ } c}
    \toprule
     & $x_i$ & $x'_i$ & $z_i$ & $y_j$ & $y'_j$ & $t_j$ &
       $w_{1,\ell}$ & $w_{2,\ell}$ & $w_{3,\ell}$ &
       $u_{1,\ell}$ & $u_{2,\ell}$ & $u_{3,\ell}$ \\
    \midrule
    $J^{*}_{\beta}$ & $\alpha(x_i)$ & $1-\alpha(x_i)$ & $0$ & $\beta(y_j)$ & $1-\beta(y_j)$ & $0$ &
      $0$ & $1$ & $0$ & $0$ & $0$ & $0$ \\
    \bottomrule
  \end{tabular}
  \end{center}
  \caption{The judgment set~$J^{*}_{\beta}$ that is used in
    the proof of Proposition~\ref{prop:manipulation-cautious-hardness}.}
  \label{table:manipulation-cautious-hardness-counterexample}
\end{table}

We now use the fact that~$J^{*}_{\alpha} \in \Kemeny(\prof{J}')$ to argue
that for all truth assignments~$\beta : Y \rightarrow \BB{}$
it holds that~$\psi[\alpha \cup \beta]$ is true.
We proceed indirectly, and assume that there exists some truth
assignment~$\beta : Y \rightarrow \BB{}$
such that~$\psi[\alpha \cup \beta]$ is false.
Then consider the complete and consistent judgment set~$J^{*}_{\beta}$
that is described in Table~\ref{table:manipulation-cautious-hardness-counterexample}.
Because~$\psi[\alpha \cup \beta]$ is false, we know that~$J^{*}_{\beta}$
is consistent---it satisfies condition~(2).
Moreover,~$d(J^{*}_{\beta},\prof{J}') < d(J^{*}_{\alpha},\prof{J}')$.
Namely, the judgment set~$J^{*}_{\alpha}$ has Hamming distance~$6n+3m+2+2u$
to the profile~$\prof{J}'$,
and the judgment set~$J^{*}_{\beta}$ has Hamming distance~$6n+3m+1+2u$
to the profile~$\prof{J}'$.
This is a contradiction with the fact that~$J^{*}_{\alpha} \in \Kemeny(\prof{J}')$.
Therefore, we can conclude that no truth
assignment~$\beta : Y \rightarrow \BB{}$ exists
such that~$\psi[\alpha \cup \beta]$ is false.
In other words, for all truth assignments~$\beta : Y \rightarrow \BB{}$
it holds that~$\psi[\alpha \cup \beta]$ is true.
\end{proof}

\begin{corollary}
\label{cor:manipulation-others-hardness}
\OptimisticManipulation{\Kemeny},
\PessimisticManipulation{\Kemeny},
\SuperoptimisticManipulation{\Kemeny},
and \SafeManipulation{\Kemeny}
are \SigmaP{2}-hard.
\end{corollary}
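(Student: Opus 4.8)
The plan is to reuse, without any change, the reduction from $\QSat{2}$ (instances of the form $\exists x_1,\dotsc,x_n.\forall y_1,\dotsc,y_m.\psi$) constructed in the proof of Proposition~\ref{prop:manipulation-cautious-hardness}, and to argue that on every instance $(\Phi,\Gamma,w,\prof{J})$ produced by this reduction all five manipulation variants coincide. A single reduction then simultaneously witnesses \SigmaP{2}-hardness of \OptimisticManipulation{\Kemeny}, \PessimisticManipulation{\Kemeny}, \SuperoptimisticManipulation{\Kemeny} and \SafeManipulation{\Kemeny}. Membership in \SigmaP{2} is already supplied by Propositions~\ref{prop:manipulation-optimistic-membership}--\ref{prop:manipulation-superoptimistic-membership} and Corollary~\ref{cor:manipulation-safe-membership}.

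The collapse rests on two structural facts about these instances. First, I would record that the two elements of $\Kemeny(\prof{J})$, namely $J^{*}_{\mtext{old},1}$ and $J^{*}_{\mtext{old},2}$ of Table~\ref{table:manipulation-cautious-hardness-old-winners}, are \emph{equidistant} from~$J_1$: they differ only on the blocks $w_{2,\ell}$ versus $w_{3,\ell}$, on each of which $J_1$ reports~$0$, so $d(J^{*}_{\mtext{old},1},J_1)=d(J^{*}_{\mtext{old},2},J_1)=:d_{\mtext{old}}$. Hence the quantifier over old winners is vacuous: a new outcome beats \emph{all} old winners iff it beats \emph{some} old winner iff its distance to~$J_1$ is strictly below~$d_{\mtext{old}}$. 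This identifies, on these instances, the pessimistic variant with the cautious variant and the superoptimistic variant with the optimistic variant. Second, in the $(\Rightarrow)$ direction of the earlier proof the manipulator reports $J_\alpha$ and obtains the \emph{singleton} outcome $\Kemeny(\prof{J}_{-1},J_\alpha)=\SBs J^{*}_\alpha \SEs$ with $d(J^{*}_\alpha,J_1)<d_{\mtext{old}}$ (Table~\ref{table:manipulation-cautious-hardness-insincere-judgment}). A strictly-closer singleton outcome certifies every variant at once: the quantifier over new winners now also collapses, and for the safe variant the strict improvement makes condition~(1) hold with~$\leq$ and condition~(2) hold with~$<$.

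For the converse I would observe that the $(\Leftarrow)$ argument of Proposition~\ref{prop:manipulation-cautious-hardness} in fact proves the stronger statement that the existential (``some new'') variants require: any reported $J'_1$ producing \emph{even a single} new Kemeny winner strictly closer to~$J_1$ than~$d_{\mtext{old}}$ forces that winner to satisfy condition~(3), hence to be of the form $J^{*}_\alpha$; the reduction to the report $J'_1=J^{*}_\alpha$ (justified by the triangle inequality for the Hamming distance) then forces $\psi[\alpha]$ to be valid, i.e.\ forces the input QBF to be true. Thus in a no-instance no report yields any new winner below~$d_{\mtext{old}}$, so every strict-improvement variant fails, including condition~(2) of the safe variant. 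Combining both directions gives, for each of the four problems, that a successful manipulation exists iff the QBF is true. The point I expect to be the main obstacle is exactly this last step: I must verify that the earlier argument---in particular the ``it suffices to consider $J'_1=J^{*}_\alpha$'' reduction---only ever used the existence of a \emph{single} strictly-closer new winner, so that it transfers unchanged to the optimistic and superoptimistic readings; this is the sole place where the weaker, existential hypothesis of the new variants could have failed to be enough.
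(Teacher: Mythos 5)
Your proposal is correct and takes essentially the same approach as the paper: the paper's proof of this corollary likewise reuses the reduction of Proposition~\ref{prop:manipulation-cautious-hardness} unchanged, observing that all Kemeny outcomes for~$\prof{J}$ have the same Hamming distance to~$J_1$ and that the quantifier over new winners also collapses, so that all five variants coincide on the constructed instances. Your two-part justification of the new-winner collapse (a singleton, strictly-closer outcome in yes-instances, plus the check that the $(\Leftarrow)$ argument only ever uses a single strictly-closer winner in no-instances) is a more explicit rendering of what the paper compresses into the assertion that all Kemeny outcomes for the ``relevant'' modified profiles~$(\prof{J}_{-1},J'_1)$ are equidistant from~$J_1$.
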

\begin{proof}
\SigmaP{2}-hardness for \OptimisticManipulation{\Kemeny},
\PessimisticManipulation{\Kemeny},
\SuperoptimisticManipulation{\Kemeny}, and \SafeManipulation{\Kemeny}
follows from the proof of Proposition~\ref{prop:manipulation-cautious-hardness}.
In this proof, all Kemeny outcomes for the original profile~$\prof{J}$
have the same Hamming distance to the judgment set~$J_1$.
Similarly, all Kemeny outcomes for each
profile~$\prof{J}' = (\prof{J}_{-1},J'_1)$,
for each relevant insincere judgment set~$J'_1$,
have the same distance to the judgment set~$J_1$.
Therefore, the reduction can also be used to show \SigmaP{2}-hardness
for the problems \OptimisticManipulation{\Kemeny},
\PessimisticManipulation{\Kemeny},
\SuperoptimisticManipulation{\Kemeny}, and \SafeManipulation{\Kemeny}.
\end{proof}

\subsection{3-Clauses and Trivial Integrity Constraints}
\label{sec:manipulation-3clauses-trivial-ic}

The result of Theorem~\ref{thm:manipulation-theorem} can straightforwardly be
extended to the setting where~$\Gamma = \top$ and where all
formulas~$\varphi \in \Pre{\Phi}$ are clauses of size~$3$.
Clearly, membership in \SigmaP{2} holds also for this restricted setting.
In order to show \SigmaP{2}-hardness,
the proof of Proposition~\ref{prop:manipulation-cautious-hardness}
can be modified as follows, in two steps.
First, we take many syntactic variants~$\chi_1,\dotsc,\chi_{u}$
of the integrity constraint~$\Gamma$, and add these (and their negations)
to the agenda.
Moreover, all judgment sets in the profile~$\prof{J}$ include these
formulas~$\chi_1,\dotsc,\chi_{u}$.
Secondly, we transform each formula~$\chi_i$ into a 3CNF
formula~$\chi'_i$
(by using the standard Tseitin transformation), and replace~$\chi_i$
in the agenda by the clauses of~$\chi'_i$ (and their negations).
Then, all judgment sets in the profile~$\prof{J}$ include all clauses
of all formulas~$\chi'_i$.

\subsection{Constraint-Based Judgment Aggregation}

The result of Theorem~\ref{thm:manipulation-theorem} can also straightforwardly be
extended to the setting of constraint-based judgment aggregation.
The algorithms described in Propositions~\ref{prop:manipulation-cautious-membership}--%
\ref{prop:manipulation-superoptimistic-membership} and
Corollary~\ref{cor:manipulation-safe-membership}
can directly be used in the
setting of constraint-based judgment aggregation as well
(so membership in \SigmaP{2} carries over to this setting).
Since the proofs of Proposition~\ref{prop:manipulation-cautious-hardness}
and Corollary~\ref{cor:manipulation-others-hardness} use an
agenda containing only propositional variables (and their negations),
this proof can also directly be used to show \SigmaP{2}-hardness for the
setting of constraint-based judgment aggregation.

\subsection{Exact Manipulation}

In the literature, the problem of manipulation in judgment aggregation has also been
modelled using different decision problems (see,
e.g.,~\cite{BaumeisterErdelyiErdelyiRothe15}).
One of these decision problems asks whether the manipulator can report
an insincere opinion to obtain an outcome that
includes a given desired subset of the agenda.
This problem is often called \emph{exact manipulation}.
We consider two variants of this exact manipulation problem,
and we argue that the \SigmaP{2}-completeness result of
Theorem~\ref{thm:manipulation-theorem} extends to these problems.

\probdef{
  \CautiousExactManipulation{\Kemeny}
  
  \emph{Instance:} An agenda~$\Phi$ with an
    integrity constraint~$\Gamma$,
    a (possibly incomplete) judgment set~$L \subseteq \Phi$,
    and a profile~$\prof{J} \in \JJJ(\Phi,\Gamma)^{+}$.
  
  \emph{Question:} Is there a complete and consistent judgment
    set $J' \in \JJJ(\Phi,\Gamma)$
    such that for \textbf{all} $J^{*} \in \Kemeny(\prof{J}_{-1},J')$
    it holds that $L \subseteq J^{*}$?
}

\probdef{
  \BraveExactManipulation{\Kemeny}
  
  \emph{Instance:} An agenda~$\Phi$ with an
    integrity constraint~$\Gamma$,
    a (possibly incomplete) judgment set~$L \subseteq \Phi$,
    and a profile~$\prof{J} \in \JJJ(\Phi,\Gamma)^{+}$.
  
  \emph{Question:} Is there a complete and consistent judgment
    set $J' \in \JJJ(\Phi,\Gamma)$
    such that for \textbf{some} $J^{*} \in \Kemeny(\prof{J}_{-1},J')$
    it holds that $L \subseteq J^{*}$?
}

\begin{proposition}
\label{prop:exact-bribery}
The problems \CautiousExactManipulation{\Kemeny} and
\BraveExactManipulation{\Kemeny} are \SigmaP{2}-complete.
Moreover, \SigmaP{2}-hardness holds even for the case
where~$\Card{L} = 1$.
\end{proposition}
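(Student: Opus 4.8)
The plan is to establish both membership in $\SigmaP{2}$ and $\SigmaP{2}$-hardness for the two exact manipulation variants, mirroring the structure of the proof of Theorem~\ref{thm:manipulation-theorem} but adapting it to the all-or-nothing inclusion objective $L \subseteq J^{*}$.

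For membership, I would give nondeterministic polynomial-time algorithms with an $\NP$ oracle, following the template of Propositions~\ref{prop:manipulation-cautious-membership} and~\ref{prop:manipulation-optimistic-membership}. The key subroutine is computing, for any profile, the minimum cumulative unweighted Hamming distance $d^{\mtext{win}}$ to a complete and consistent judgment set; as in the earlier proofs, this takes $O(\log n)$ oracle queries. For \BraveExactManipulation{\Kemeny}, the algorithm guesses the insincere judgment set $J'$ (with a satisfying assignment witnessing $\Gamma$-consistency), computes $d^{\mtext{win}}_{\mtext{new}}$ for $(\prof{J}_{-1},J')$, then guesses a candidate winner $J^{*}$, verifies it is complete, $\Gamma$-consistent, achieves distance exactly $d^{\mtext{win}}_{\mtext{new}}$, and satisfies $L \subseteq J^{*}$; this is exactly the ``some $J^{*}$'' quantifier structure, placing it in $\SigmaP{2}$. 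For \CautiousExactManipulation{\Kemeny}, after guessing $J'$ and computing $d^{\mtext{win}}_{\mtext{new}}$, the algorithm uses a single $\NP$ oracle query to check whether there exists a winning $J^{*}$ (distance exactly $d^{\mtext{win}}_{\mtext{new}}$) with $L \not\subseteq J^{*}$; the algorithm rejects if such a counterexample exists and accepts otherwise, which correctly realizes the ``for all $J^{*}$'' requirement within $\SigmaP{2}$.

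For hardness, I would reduce from the same $\Sigma_2$-\SAT{} problem $\exists X. \forall Y. \psi$ used in Proposition~\ref{prop:manipulation-cautious-hardness}, reusing essentially the same agenda, integrity constraint $\Gamma$, and profile $\prof{J}$. The crucial observation is that in that construction the strategically enforceable Kemeny outcomes are exactly the judgment sets $J^{*}_{\alpha}$ of condition~(3), which are characterized by a distinguishing formula---concretely, $J^{*}_{\alpha}$ includes $w_{1,1}$ (all the $w_{1,\ell}$), whereas the original winners $J^{*}_{\mtext{old},1},J^{*}_{\mtext{old},2}$ and all condition~(2) and condition~(4) sets do not. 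Setting $L = \{w_{1,1}\}$ (so that $\Card{L}=1$) thus captures precisely the same successful-manipulation condition: the manipulator can force every (respectively some) winner to contain $L$ if and only if the original $\exists\forall$ formula is true. Because the original proof already shows that all enforceable winners coincide on the relevant coordinates, the ``cautious'' (all winners) and ``brave'' (some winner) formulations collapse onto the same underlying $\exists X \forall Y$ structure, giving hardness for both and establishing that $\Card{L}=1$ suffices.

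The main obstacle will be verifying that the single-literal target $L = \{w_{1,1}\}$ genuinely separates the condition~(3) winners from all other complete and consistent judgment sets that could conceivably tie for minimum distance under some insincere report $J'_1$. I would need to recheck the distance bookkeeping from the proof of Proposition~\ref{prop:manipulation-cautious-hardness}---in particular confirming that no condition~(1), (2), or (4) set containing $w_{1,1}$ can become a co-winner alongside a $J^{*}_{\alpha}$, so that requiring $w_{1,1} \in J^{*}$ is equivalent to forcing a condition~(3) outcome and hence to the satisfiability of $\forall Y.\psi[\alpha]$. Once this separation is confirmed, the correctness of the reduction transfers directly from the existing argument, and the two exact variants inherit $\SigmaP{2}$-completeness.
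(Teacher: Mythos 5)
Your proposal is correct and follows essentially the same route as the paper: membership by extending the oracle algorithms of Propositions~\ref{prop:manipulation-cautious-membership} and~\ref{prop:manipulation-optimistic-membership}, and hardness by reusing the reduction of Proposition~\ref{prop:manipulation-cautious-hardness} and Corollary~\ref{cor:manipulation-others-hardness} with the target~$L = \SBs w_{1,1} \SEs$. The separation argument you flag as the remaining obstacle does go through (any winner containing~$w_{1,1}$ that is not a condition~(3) set could drop~$w_{1,1}$ and strictly decrease its distance to the modified profile, since at most the manipulator's reported judgment contains~$w_{1,1}$), which is exactly the fact the paper's terse proof implicitly relies on.
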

\begin{proof}
To show membership in \SigmaP{2}, it suffices to observe that
the algorithms used in the proofs of
Propositions~\ref{prop:manipulation-cautious-membership}
and~\ref{prop:manipulation-optimistic-membership}
can straightforwardly
be extended to the case of \CautiousExactManipulation{\Kemeny} and
\BraveExactManipulation{\Kemeny}.
For \SigmaP{2}-hardness, one can use the proofs of
Proposition~\ref{prop:manipulation-cautious-hardness}
and Corollary~\ref{cor:manipulation-others-hardness},
with the modification that~$L = \SBs w_{1,1} \SEs$.
\end{proof}

\subsection{Group Manipulation}

The result of Theorem~\ref{thm:manipulation-theorem}
also holds for the setting where a coalition of individuals works together
by each reporting an insincere judgment with the aim of obtaining a group
judgment that is preferable for each of the individuals in the coalition
\cite{BotanNovaroEndriss16}.
This group manipulation scenario is a generalization of the
(individual) manipulation scenario that we study in this
paper, since individual manipulation coincides with group manipulation
for a coalition of size~1.
The algorithms used to show membership in \SigmaP{2}
can be straightforwardly modified to work also for group manipulation
for the Kemeny procedure.
Moreover, the \SigmaP{2}-hardness results directly carry over
to the setting of group manipulation for the Kemeny procedure,
since individual manipulation is a special case of group manipulation.

A more subtle form of group manipulation is that where each individual
in the coalition is required to have no incentive to unilaterally leave the
coalition (and report their truthful judgment)
\cite{BotanNovaroEndriss16}.
The \SigmaP{2}-hardness results from this paper carry over to this setting
for the Kemeny procedure,
since the scenario of individual manipulation is also a special case
of the manipulation problem for fragile coalitions.
Additionally, the \SigmaP{2}-membership proofs for the individual manipulation
problem can be modified to work also for the group manipulation problem
for fragile coalitions for the Kemeny procedure.

\section{Bribery}
\label{sec:bribery}

Another form of strategic behavior in judgment aggregation is
bribery.
In this setting, an external agent wishes to influence the outcome of
a judgment aggregation scenario by bribing a number of individuals.
We model this scenario as follows.
(We consider the formula-based judgment aggregation framework;
for the constraint-based judgment aggregation framework, this
bribing scenario can be defined entirely similarly.)

A number of individuals are performing judgment aggregation
on an agenda~$\Phi$ in the presence of an integrity constraint~$\Gamma$.
The briber has a desired (complete and $\Gamma$-consistent)
judgment set~$J_0$, and a weight function~$w : \Pre{\Phi} \rightarrow \NN$
that indicates the relative importance of the different issues for the briber.
Additionally, the briber has a budget that suffices to bribe at most~$k$
individuals.
For all bribed individuals, the briber can specify an arbitrary (complete and
$\Gamma$-consistent) judgment set.
The question is to determine whether the briber can pick up to~$k$
individuals and specify judgment sets for these individuals so that the
outcome of the judgment aggregation procedure is better
(with respect to~$J_0$ and~$w$) than without bribing.

To argue that such bribery can not be done
easily in all possible situations, one can establish computational
intractability results, that give evidence that there are no efficient
algorithms that an external agent can use (across the board)
to obtain a strategy for bribery.

Similarly to the case for manipulation,
we can consider various requirements on the outcomes after
bribery (in relation to the outcomes before bribery).
For instance, we could require that every outcome after
bribery is preferred over every outcome before bribery.
Correspondingly, we consider the following decision problems.

\probdef{
  \CautiousBribery{\Kemeny}
  
  \emph{Instance:} An agenda~$\Phi$ with an
    integrity constraint~$\Gamma$,
    a weight function~$w : \Pre{\Phi} \rightarrow \NN$,
    a profile~$\prof{J} \in \JJJ(\Phi,\Gamma)^{+}$,
    a judgment set~$J_0 \in \JJJ(\Phi,\Gamma)$,
    and an integer~$k \in \NN$.
  
  \emph{Question:} Is it possible to change up to~$k$ individual
    judgment sets in~$\prof{J}$, resulting in a new profile~$\prof{J}'$,
    so that for \textbf{all} $J^{*}_{\mtext{new}} \in \Kemeny(\prof{J}')$
    and for \textbf{all} $J^{*}_{\mtext{old}} \in \Kemeny(\prof{J})$
    it holds that $d(J^{*}_{\mtext{new}},J_0,w) < d(J^{*}_{\mtext{old}},J_0,w)$?
}

\probdef{
  \OptimisticBribery{\Kemeny}
  
  \emph{Instance:} An agenda~$\Phi$ with an
    integrity constraint~$\Gamma$,
    a weight function~$w : \Pre{\Phi} \rightarrow \NN$,
    a profile~$\prof{J} \in \JJJ(\Phi,\Gamma)^{+}$,
    a judgment set~$J_0 \in \JJJ(\Phi,\Gamma)$,
    and an integer~$k \in \NN$.
  
  \emph{Question:} Is it possible to change up to~$k$ individual
    judgment sets in~$\prof{J}$, resulting in a new profile~$\prof{J}'$,
    so that there is \textbf{some} $J^{*}_{\mtext{new}} \in \Kemeny(\prof{J}')$
    such that for \textbf{all} $J^{*}_{\mtext{old}} \in \Kemeny(\prof{J})$
    it holds that $d(J^{*}_{\mtext{new}},J_0,w) < d(J^{*}_{\mtext{old}},J_0,w)$?
}

\probdef{
  \PessimisticBribery{\Kemeny}
  
  \emph{Instance:} An agenda~$\Phi$ with an
    integrity constraint~$\Gamma$,
    a weight function~$w : \Pre{\Phi} \rightarrow \NN$,
    a profile~$\prof{J} \in \JJJ(\Phi,\Gamma)^{+}$,
    a judgment set~$J_0 \in \JJJ(\Phi,\Gamma)$,
    and an integer~$k \in \NN$.
  
  \emph{Question:} Is it possible to change up to~$k$ individual
    judgment sets in~$\prof{J}$, resulting in a new profile~$\prof{J}'$,
    so that for \textbf{all} $J^{*}_{\mtext{new}} \in \Kemeny(\prof{J}')$
    there is \textbf{some} $J^{*}_{\mtext{old}} \in \Kemeny(\prof{J})$
    such that $d(J^{*}_{\mtext{new}},J_0,w) < d(J^{*}_{\mtext{old}},J_0,w)$?
}

\probdef{
  \SuperoptimisticBribery{\Kemeny}
  
  \emph{Instance:} An agenda~$\Phi$ with an
    integrity constraint~$\Gamma$,
    a weight function~$w : \Pre{\Phi} \rightarrow \NN$,
    a profile~$\prof{J} \in \JJJ(\Phi,\Gamma)^{+}$,
    a judgment set~$J_0 \in \JJJ(\Phi,\Gamma)$,
    and an integer~$k \in \NN$.
  
  \emph{Question:} Is it possible to change up to~$k$ individual
    judgment sets in~$\prof{J}$, resulting in a new profile~$\prof{J}'$,
    so that there is \textbf{some} $J^{*}_{\mtext{new}} \in \Kemeny(\prof{J}')$
    and \textbf{some} $J^{*}_{\mtext{old}} \in \Kemeny(\prof{J})$
    such that $d(J^{*}_{\mtext{new}},J_0,w) < d(J^{*}_{\mtext{old}},J_0,w)$?
}

\probdef{
  \SafeBribery{\Kemeny}
  
  \emph{Instance:} An agenda~$\Phi$ with an
    integrity constraint~$\Gamma$,
    a weight function~$w : \Pre{\Phi} \rightarrow \NN$,
    a profile~$\prof{J} \in \JJJ(\Phi,\Gamma)^{+}$,
    a judgment set~$J_0 \in \JJJ(\Phi,\Gamma)$,
    and an integer~$k \in \NN$.
  
  \emph{Question:} Is it possible to change up to~$k$ individual
    judgment sets in~$\prof{J}$, resulting in a new profile~$\prof{J}'$,
    so that (1)~for \textbf{all} $J^{*}_{\mtext{new}} \in \Kemeny(\prof{J}')$
    and for \textbf{all} $J^{*}_{\mtext{old}} \in \Kemeny(\prof{J})$
    it holds that $d(J^{*}_{\mtext{new}},J_0,w) \leq d(J^{*}_{\mtext{old}},J_0,w)$,
    and so that (2)~there exists
    \textbf{some} $J^{*}_{\mtext{new}} \in \Kemeny(\prof{J}')$
    and \textbf{some} $J^{*}_{\mtext{old}} \in \Kemeny(\prof{J})$
    such that $d(J^{*}_{\mtext{new}},J_0,w) < d(J^{*}_{\mtext{old}},J_0,w)$?
}

\subsection{Complexity Results}

In this section, we prove the following result.

\begin{theorem}
\label{thm:bribery-theorem}
The following problems are \SigmaP{2}-complete:
\begin{itemize}
  \item \CautiousBribery{\Kemeny},
  \item \OptimisticBribery{\Kemeny},
  \item \PessimisticBribery{\Kemeny},
  \item \SuperoptimisticBribery{\Kemeny}, and
  \item \SafeBribery{\Kemeny}.
\end{itemize}
\end{theorem}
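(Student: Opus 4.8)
The plan is to mirror the structure of Theorem~\ref{thm:manipulation-theorem}: establish \SigmaP{2}-membership for each of the five bribery variants, and then establish \SigmaP{2}-hardness via a single reduction that carries over to all five. For membership, I would reuse the algorithmic template from Propositions~\ref{prop:manipulation-cautious-membership}--\ref{prop:manipulation-superoptimistic-membership} essentially verbatim. The only structural difference between manipulation and bribery is \emph{how the modified profile is guessed}: in manipulation the algorithm guesses a single insincere judgment set~$J'$ replacing~$J_1$, whereas in bribery the algorithm must guess which (at most)~$k$ individuals to bribe \emph{and} the new judgment sets assigned to them. Both of these are polynomially-sized nondeterministic guesses (a size-$\leq k$ index subset plus $\leq k$ complete judgment sets, each checked for $\Gamma$-consistency using a guessed satisfying assignment), so guessing the bribed profile~$\prof{J}'$ fits within the nondeterministic polynomial-time budget. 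After guessing~$\prof{J}'$, the remaining work—computing $d^{\mtext{win}}_{\mtext{old}}$, $d^{\mtext{min}}_{\mtext{old}}$ (or $d^{\mtext{max}}_{\mtext{old}}$), $d^{\mtext{win}}_{\mtext{new}}$, and performing the final oracle query comparing Kemeny outcomes before and after—is identical to the manipulation case, since once $\prof{J}'$ is fixed it plays exactly the role that $(\prof{J}_{-1},J')$ played there. Membership for the \textsc{Safe} variant follows by combining the cautious and superoptimistic algorithms exactly as in Corollary~\ref{cor:manipulation-safe-membership}.

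For hardness, the key observation is that manipulation is essentially the special case of bribery with~$k=1$ where the briber is forced to re-specify agent~$1$'s judgment. More precisely, I would give a reduction from the same $\SigmaP{2}$-complete QBF satisfiability problem $\exists \prof{X} \forall \prof{Y}.\psi$ used in Proposition~\ref{prop:manipulation-cautious-hardness}, and reuse the same agenda~$\Phi$, integrity constraint~$\Gamma$, weight function~$w$, and profile~$\prof{J}=(J_1,J_2,J_3)$ from Table~\ref{table:manipulation-cautious-hardness-profile}, setting~$J_0 = J_1$ and~$k = 1$. With~$k=1$ the briber may change exactly one individual's judgment set, and by the same analysis as in Proposition~\ref{prop:manipulation-cautious-hardness}, the only productive move (one that yields a strictly closer Kemeny outcome with respect to~$J_0=J_1$) is to replace~$J_1$ by a judgment set of the form~$J_\alpha$ encoding an assignment to the~$X$-variables. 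The whole argument—that the old Kemeny winners are $J^{*}_{\mtext{old},1}, J^{*}_{\mtext{old},2}$ from Table~\ref{table:manipulation-cautious-hardness-old-winners}, that a successful strategic move forces a condition-(3) outcome $J^{*}_\alpha$, and that $J^{*}_\alpha \in \Kemeny(\prof{J}')$ iff $\forall \prof{Y}.\psi[\alpha]$ holds—transfers directly, because replacing $J_1$ is precisely the bribery action available. As in Corollary~\ref{cor:manipulation-others-hardness}, the reduction works uniformly for all five variants, since in the reduction all old Kemeny outcomes are equidistant from~$J_0$ and all new Kemeny outcomes (for each relevant bribed profile) are equidistant from~$J_0$, so the cautious, optimistic, pessimistic, superoptimistic and safe conditions all coincide.

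The main obstacle I anticipate is verifying that bribing a single individual (rather than two or three) does not open up an unintended shortcut to a good outcome that was unavailable in the manipulation setting. In the manipulation reduction, the only judgment set the adversary controls is~$J_1$, and the argument relies on the fixed contributions of~$J_2$ and~$J_3$ to the cumulative Hamming distances (for instance, the presence of~$x_i, x'_i$ in~$J_3$ and the gadget variables~$w_{i,\ell}, u_{i,\ell}$). Since $k=1$ grants exactly the same one degree of freedom, I expect no new shortcuts, but one should confirm that the briber cannot instead bribe~$J_2$ or~$J_3$ to gain an advantage unattainable by re-specifying~$J_1$. Because $J_0 = J_1$ and the $u$- and $w$-gadgets heavily penalize any deviation that drags a Kemeny winner away from the condition-(3) region, any single-voter change other than the intended one to~$J_1$ should leave the old winners optimal or produce an outcome no closer to~$J_0$; I would spell this out to close the reduction. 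Once this case analysis is in place, the theorem follows from the membership propositions together with the single hardness reduction.
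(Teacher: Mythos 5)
Your membership argument is sound and matches the paper's: Proposition~\ref{prop:bribery-cautious-membership} and Corollary~\ref{cor:bribery-others-membership} do exactly what you describe---guess the at most~$k$ bribed indices and their replacement judgment sets (with certifying assignments), then run the manipulation machinery on the resulting profile. The gap is in your hardness reduction, and it is fatal rather than a matter of ``spelling out'' a remaining case. In your reduction the briber is \emph{not} forced to re-specify agent~$1$'s judgment: with~$J_0=J_1$,~$k=1$, and the unweighted Hamming distance, the briber can instead bribe agent~$2$ (or agent~$3$) to report the judgment set~$J_1$ itself. Note that~$J_1$ is complete and $\Gamma$-consistent (it satisfies the escape disjunct~$\bigwedge_{\ell\in[u]}u_{1,\ell}$), so this is a legal bribe. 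The bribed profile is then~$\prof{J}'=(J_1,J_1,J_3)$, and for every complete and consistent judgment set~$J$ one has, by the triangle inequality,
\[
\Dist(J,\prof{J}') \;=\; 2\,d(J,J_1)+d(J,J_3)\;\geq\; d(J,J_1)+d(J_1,J_3)\;\geq\; d(J_1,J_3),
\]
with equality only for~$J=J_1$. Hence~$\Kemeny(\prof{J}')=\SBs J_1 \SEs$, and~$d(J_1,J_0)=0$ is strictly smaller than the distance from either old winner~$J^{*}_{\mtext{old},1},J^{*}_{\mtext{old},2}$ to~$J_1$. So \emph{every} instance you produce is a yes-instance of \CautiousBribery{\Kemeny} (and of the other four variants), regardless of the truth of the quantified Boolean formula; the backward direction of your claimed equivalence fails. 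Your intuition that the~$u$- and~$w$-gadgets block such shortcuts is exactly what breaks down: those gadgets penalize bad \emph{outcomes}, not bribes that duplicate the briber's ideal point into the profile.

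This is precisely why the paper does not reuse the manipulation reduction but builds a new one in Proposition~\ref{prop:bribery-cautious-hardness}. There, the briber's goal is decoupled from any single agent's judgment: fresh variables~$a$ and~$b_1,\dotsc,b_{2m+4}$ are introduced, the weight function is supported only on the~$b_j$ (all other issues get weight~$0$), the three profile judgments are fully symmetric (identical except for their private~$u$-blocks, which is also what makes ``whom to bribe'' irrelevant), and the constraint~$\Gamma_0$ ties~$\bigwedge_j b_j$ to the condition-(3) region. With this design, bribing one agent to report~$J_0$ (or any judgment set containing all~$b_j$) does not help: the two unbribed agents still force the~$b_j$ out of any Kemeny winner unless that winner lies in the condition-(3) region, and such a winner survives as a Kemeny outcome exactly when~$\psi[\alpha]$ is valid for the encoded assignment~$\alpha$. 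To salvage your plan you would have to redesign~$J_0$ and the weight function along these lines so that no single bribe can trivially realize the briber's goal; setting~$J_0=J_1$ in the manipulation instance cannot work.
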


This result follows from Propositions~\ref{prop:bribery-cautious-membership}
and~\ref{prop:bribery-cautious-hardness}
and Corollaries~\ref{cor:bribery-others-membership}
and~\ref{cor:bribery-others-hardness},
that we establish below.

\begin{proposition}
\label{prop:bribery-cautious-membership}
\CautiousBribery{\Kemeny} is in \SigmaP{2}.
\end{proposition}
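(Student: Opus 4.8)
The plan is to exhibit a nondeterministic polynomial-time algorithm with access to an $\NP{}$ oracle that decides \CautiousBribery{\Kemeny}, closely mirroring the structure of the membership algorithm for \CautiousManipulation{\Kemeny} in Proposition~\ref{prop:manipulation-cautious-membership}. The key observation is that bribery differs from manipulation only in how the modified profile~$\prof{J}'$ is obtained: instead of replacing a single fixed judgment set, the briber selects up to~$k$ individuals and assigns them arbitrary complete and $\Gamma$-consistent judgment sets. Since the choice of which individuals to bribe and the new judgment sets for them can be guessed nondeterministically, this extra flexibility poses no obstacle to $\SigmaP{2}$-membership.

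Concretely, I would first compute, exactly as in steps~(1) and~(2) of Proposition~\ref{prop:manipulation-cautious-membership}, the minimum cumulative unweighted Hamming distance~$d^{\mtext{win}}_{\mtext{old}}$ from~$\prof{J}$ to any~$J^{*} \in \JJJ(\Phi,\Gamma)$, and then the minimum weighted distance~$d^{\mtext{min}}_{\mtext{old}}$ from~$J_0$ to any~$J^{*} \in \Kemeny(\prof{J})$; both can be done in deterministic polynomial time with $O(\log n)$ queries to an $\NP{}$ oracle. Next, the algorithm nondeterministically guesses a set~$S \subseteq [p]$ of at most~$k$ individuals to bribe, and for each~$i \in S$ guesses a complete judgment set together with a satisfying truth assignment witnessing $\Gamma$-consistency, thereby constructing the bribed profile~$\prof{J}'$. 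It then computes the winning distance~$d^{\mtext{win}}_{\mtext{new}}$ from~$\prof{J}'$ to any element of~$\JJJ(\Phi,\Gamma)$ using $O(\log n)$ oracle queries, exactly as in step~(3b).

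The final step reproduces step~(3c): using a single $\NP{}$-oracle query, the algorithm checks whether there exists some complete and $\Gamma$-consistent judgment set~$J^{*}_{\mtext{new}}$ with $d(J^{*}_{\mtext{new}},\prof{J}') = d^{\mtext{win}}_{\mtext{new}}$ (so that $J^{*}_{\mtext{new}} \in \Kemeny(\prof{J}')$) and $d(J^{*}_{\mtext{new}},J_0,w) \geq d^{\mtext{min}}_{\mtext{old}}$. If such a witness exists, this branch rejects; otherwise it accepts. The correctness argument is that an accepting branch exists if and only if there is a bribery of at most~$k$ individuals such that \emph{every} Kemeny outcome of the resulting profile is strictly closer to~$J_0$ (weighted by~$w$) than the best possible Kemeny outcome of the original profile---which is precisely the cautious bribery condition, since $d^{\mtext{min}}_{\mtext{old}}$ is the minimal weighted distance achieved by any old winner.

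The main point to verify is that the extra nondeterministic guessing of~$S$ and the bribed judgment sets keeps the total running time polynomial and the oracle queries within an $\NP{}$ oracle; this is routine, since~$|S| \leq k \leq p$ and each guessed judgment set has size polynomial in~$\size{\Phi}$. The only conceptual subtlety, as in the manipulation case, is confirming that the single oracle query in the final step correctly captures the universally quantified ``for all $J^{*}_{\mtext{new}}$'' condition by searching for a \emph{counterexample} winner that is not strictly closer than every old winner; but this is handled identically to Proposition~\ref{prop:manipulation-cautious-membership}, so I expect no genuine obstacle here.
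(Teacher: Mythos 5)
Your proposal is correct and follows essentially the same approach as the paper's proof: compute $d^{\mtext{win}}_{\mtext{old}}$ and $d^{\mtext{min}}_{\mtext{old}}$ via oracle queries, nondeterministically guess the bribed individuals and their new $\Gamma$-consistent judgment sets with witnessing assignments, compute $d^{\mtext{win}}_{\mtext{new}}$, and finish with a single counterexample-seeking oracle query whose answer is negated. The minor difference (guessing a set of at most~$k$ indices rather than exactly~$k$) is immaterial, and your handling of the universal quantifier over new winners matches the paper exactly.
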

\begin{proof}
We describe a nondeterministic polynomial-time
algorithm with access to an \NP{} oracle
that solves the problem.
This algorithm is similar to the algorithm used in the proof of
Proposition~\ref{prop:manipulation-cautious-membership}.
Let~$(\Phi,\Gamma,w,\prof{J},J_0,k)$ specify an instance of \CautiousBribery{\Kemeny},
where~$\prof{J} = (J_1,\dotsc,J_p)$.
The algorithm proceeds in several steps.

Firstly,~(1) the algorithm determines the minimum
distance~$d^{\mtext{win}}_{\mtext{old}}$
from~$\prof{J}$ to any complete and consistent judgment
set~$J^{*} \in \JJJ(\Phi,\Gamma)$.
That is,~$d^{\mtext{win}}_{\mtext{old}}$ is the cumulative
unweighted Hamming distance
from the judgments in~$\prof{J}$ to the judgment sets~$J^{*} \in \Kemeny(\prof{J})$.
This can be done in (deterministic) polynomial time
using~$O(\log n)$ queries to an \NP{} oracle.

Then,~(2) the algorithm determines the minimum
distance~$d^{\mtext{min}}_{\mtext{old}}$ (weighted by~$w$)
from~$J_0$ to any judgment set~$J^{*} \in \Kemeny(\prof{J})$,
that is, from~$J_0$ to any complete and consistent judgment
set~$J^{*}$ that has cumulative unweighted
Hamming distance~$d^{\mtext{win}}_{\mtext{old}}$
to the profile~$\prof{J}$.
This can also be done in (deterministic) polynomial time
using an \NP{} oracle.

Next,~(3a) the algorithm guesses~$k$
indices~$1 \leq i_1 < \dotsm < i_k \leq p$
and guesses complete judgment sets~$J'_{1},\dotsc,J'_{k}$
together with truth assignments~$\alpha_1,\dotsc,\alpha_k
: \Var{\Phi,\Gamma} \rightarrow \BB{}$, and it checks
for each~$j \in [k]$ whether~$\alpha_j$ satisfies both~$J'_{j}$
and~$\Gamma$.
This can be done in nondeterministic polynomial time.
The profile~$\prof{J}'$ is obtained from~$\prof{J}$ by replacing
the judgment set~$J_{i_j}$ by~$J'_j$, for each~$j \in [k]$.
Moreover,~(3b) the algorithm determines the minimum
distance~$d^{\mtext{win}}_{\mtext{new}}$
from~$(\prof{J}')$ to any complete and consistent judgment
set~$J^{*} \in \JJJ(\Phi,\Gamma)$.
Finally,~(3c) the algorithm determines by using a single query to
an \NP{} oracle whether there exists some complete and consistent
judgment set~$J^{*}_{\mtext{new}} \in \JJJ(\Phi,\Gamma)$ such
that~$d(J^{*}_{\mtext{new}},(\prof{J}')) = d^{\mtext{win}}_{\mtext{new}}$
and~$d(J^{*}_{\mtext{new}},J_0,w) \geq d^{\mtext{min}}_{\mtext{old}}$.
If this is the case, the algorithm rejects;
otherwise, the algorithm accepts.

It is straightforward to verify that the algorithm runs in nondeterministic
polynomial time.
Moreover, the algorithm accepts the input (for some sequence of
nondeterministic choices) if and only if there is some way of replacing
up to~$k$ judgment sets in~$\prof{J}$, resulting in a new profile~$\prof{J}'$,
such that
for all~$J^{*}_{\mtext{new}} \in \Kemeny(\prof{J}')$
and for all~$J^{*}_{\mtext{old}} \in \Kemeny(\prof{J})$
it holds that~$d(J^{*}_{\mtext{new}},J_0,w) < d(J^{*}_{\mtext{old}},J_0,w)$.
\end{proof}

\begin{corollary}
\label{cor:bribery-others-membership}
\OptimisticBribery{\Kemeny},
\PessimisticBribery{\Kemeny},
\SuperoptimisticBribery{\Kemeny}, and
\SafeBribery{\Kemeny}
are in \SigmaP{2}.
\end{corollary}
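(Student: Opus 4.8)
The plan is to adapt each of the \SigmaP{2}-membership algorithms developed for the corresponding manipulation variants---namely those in the proofs of Proposition~\ref{prop:manipulation-optimistic-membership} (optimistic), Proposition~\ref{prop:manipulation-pessimistic-membership} (pessimistic), Proposition~\ref{prop:manipulation-superoptimistic-membership} (superoptimistic), and Corollary~\ref{cor:manipulation-safe-membership} (safe)---to the bribery setting. Each of these algorithms follows a common template: (1)~compute the minimum cumulative (unweighted) Hamming distance~$d^{\mtext{win}}_{\mtext{old}}$ from the original profile~$\prof{J}$ to any rational judgment set, using~$O(\log n)$ queries to an \NP{} oracle; (2)~either compute an extremal weighted distance from the reference judgment set to the Kemeny winners of~$\prof{J}$ (the minimum~$d^{\mtext{min}}_{\mtext{old}}$ for the optimistic case, the maximum~$d^{\mtext{max}}_{\mtext{old}}$ for the pessimistic case) or guess a specific old winner together with a certifying assignment (as in the superoptimistic case); and (3)~guess the modification to the profile, compute~$d^{\mtext{win}}_{\mtext{new}}$, and then handle the new Kemeny outcomes by the combination of oracle queries and guesses dictated by the quantifier pattern of the variant.

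The single change needed to pass from manipulation to bribery occurs in step~(3a): instead of guessing one insincere judgment set~$J'$ and forming the profile~$(\prof{J}_{-1},J')$, the algorithm would guess~$k$ indices~$1 \leq i_1 < \dotsm < i_k \leq p$ together with~$k$ replacement judgment sets~$J'_1,\dotsc,J'_k$ and certifying truth assignments, exactly as in the proof of Proposition~\ref{prop:bribery-cautious-membership}, and form the modified profile~$\prof{J}'$ by substituting~$J'_j$ for~$J_{i_j}$. In addition, the reference judgment set~$J_1$ of the manipulator is replaced throughout by the briber's target judgment set~$J_0$. Since guessing~$k$ replacements and verifying their rationality is still doable in nondeterministic polynomial time, and since the remaining steps operate only on the resulting profile~$\prof{J}'$ and on distances to~$J_0$, the overall running time remains nondeterministic polynomial with \NP{}-oracle access.

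The quantifier structure in each variant will be handled exactly as in its manipulation counterpart, so the correctness argument carries over with the substitutions above. For \OptimisticBribery{\Kemeny} I would compute~$d^{\mtext{min}}_{\mtext{old}}$ and, after guessing~$\prof{J}'$ and computing~$d^{\mtext{win}}_{\mtext{new}}$, guess a new winner and check that it is a Kemeny outcome of~$\prof{J}'$ strictly closer to~$J_0$ than~$d^{\mtext{min}}_{\mtext{old}}$. For \PessimisticBribery{\Kemeny} I would compute~$d^{\mtext{max}}_{\mtext{old}}$ and use a single oracle query to verify that no Kemeny outcome of~$\prof{J}'$ is at weighted distance at least~$d^{\mtext{max}}_{\mtext{old}}$ from~$J_0$. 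For \SuperoptimisticBribery{\Kemeny} I would guess both an old winner and a new winner and check the single strict inequality. Finally, \SafeBribery{\Kemeny} is obtained by combining the cautious-bribery algorithm (for the universal~$\leq$ condition) with the superoptimistic-bribery algorithm (for the existential strict condition), just as Corollary~\ref{cor:manipulation-safe-membership} combines the cautious and superoptimistic manipulation algorithms. The only point requiring care---and the closest thing to an obstacle---is matching each~$\exists/\forall$ alternation over the new and old Kemeny outcomes to the correct device (extremal-distance computation, oracle query, or existential guess), so that the guessed witnesses together with the \NP{} oracle certify exactly the intended quantifier pattern; once the manipulation template is in hand, this matching is routine.
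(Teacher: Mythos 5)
Your proposal is correct and takes essentially the same route as the paper: the paper's own proof is a brief sketch stating that the algorithms are analogous to those of Propositions~\ref{prop:manipulation-cautious-membership}--\ref{prop:manipulation-superoptimistic-membership}, Proposition~\ref{prop:bribery-cautious-membership}, and Corollary~\ref{cor:manipulation-safe-membership}, which is precisely the combination you spell out (manipulation-variant quantifier handling plus the cautious-bribery guessing of~$k$ indices and replacement judgment sets, with~$J_1$ replaced by~$J_0$). Your version just makes the details of that adaptation explicit.
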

\begin{proof}[Proof (sketch)]
For each of these problems,
a nondeterministic polynomial-time algorithm with access to an \NP{} oracle can
be constructed that solves the problem.
These algorithms are analogous to the algorithms described in the
proofs of Propositions~\ref{prop:manipulation-cautious-membership}--%
\ref{prop:manipulation-superoptimistic-membership}
and~\ref{prop:bribery-cautious-membership}, and
Corollary~\ref{cor:manipulation-safe-membership}.
\end{proof}

\begin{proposition}
\label{prop:bribery-cautious-hardness}
\CautiousBribery{\Kemeny} is \SigmaP{2}-hard.
\end{proposition}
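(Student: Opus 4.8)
The plan is to reduce from the satisfiability problem for quantified Boolean formulas of the form $\exists x_1,\dotsc,x_n.\forall y_1,\dotsc,y_m.\psi$, reusing the gadgetry developed in the proof of Proposition~\ref{prop:manipulation-cautious-hardness}. Concretely, I would keep the same agenda~$\Phi$, the same four-condition integrity constraint~$\Gamma$, and the unweighted distance, take the budget~$k = 1$, and let the briber's desired judgment set~$J_0$ play exactly the role that the manipulator's truthful judgment~$J_1$ played there (so that the distances are measured to the same anchor point). With~$k=1$ the briber replaces a single judgment set, which structurally mirrors the manipulator reporting an insincere judgment, so the forward direction is essentially inherited: if there is an assignment~$\alpha : X \rightarrow \BB{}$ with~$\forall y.\psi[\alpha]$, then installing the committing judgment~$J_\alpha$ on the designated individual makes the committed set~$J^{*}_{\alpha}$ the unique new Kemeny winner, which is strictly closer to~$J_0$ than the status-quo winners.

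The intended mechanism is the same gatekeeping argument as in the manipulation proof. The status-quo winners~$\Kemeny(\prof{J})$ are the ``uncommitted'' condition-(4) sets, which lie far from~$J_0$; a single bribe can install a judgment that commits an assignment~$\alpha$ to~$X$ (while pushing the~$t$- and~$w_1$-blocks), turning the committed condition-(3) set~$J^{*}_{\alpha}$ into the prospective new winner. A rival \emph{consistent} condition-(2) set, which is itself far from~$J_0$ because of its~$y$- and~$w_2$-content, survives among the new Kemeny winners precisely when some~$\beta : Y \rightarrow \BB{}$ falsifies~$\psi[\alpha \cup \beta]$. Hence every new winner beats every old one (the Cautious condition) if and only if~$\forall y.\psi[\alpha]$ holds, which is what drives the~$(\Leftarrow)$ direction.

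The main obstacle—and the genuine difference from the manipulation setting—is that the briber has strictly more freedom than a manipulator: it may bribe \emph{any} individual and may install \emph{any} judgment, whereas the manipulation analysis only had to reason about changing the first agent. I therefore have to rule out \emph{spurious} improvements that bypass the~$\psi$-gatekeeper, i.e.\ bribes that force a structurally favorable condition-(3) (or condition-(2)/-(4)) outcome close to~$J_0$ without encoding a valid~$\forall y.\psi[\alpha]$ certificate. (For instance, installing an~$\alpha$-committing judgment on an auxiliary agent, or choosing a different target judgment, must not by itself make all new winners beat the old ones.) I would close this gap by \emph{anchoring} the profile: either by replicating the auxiliary agents enough times that a single bribe on any of them cannot move the Kemeny winner set at all—so that the only effective bribe is on the unique designated agent and the manipulation analysis applies verbatim—or by an explicit case analysis over the four conditions of~$\Gamma$ showing that no auxiliary bribe can simultaneously commit an~$X$-assignment and produce an outcome that strictly dominates the status-quo winners with respect to~$J_0$. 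Establishing this robustness is the crux of the proof; once the profile is chosen so that the designated agent is the only lever and its effect is gated by~$\forall y.\psi$, correctness follows as in Proposition~\ref{prop:manipulation-cautious-hardness}.

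Finally, just as in Corollary~\ref{cor:manipulation-others-hardness}, the construction would be arranged so that all status-quo winners are equidistant to~$J_0$ and all post-bribe winners are equidistant to~$J_0$. This collapses the distinctions between the quantifier patterns in the five variants, so the same reduction also yields \SigmaP{2}-hardness for \OptimisticBribery{\Kemeny}, \PessimisticBribery{\Kemeny}, \SuperoptimisticBribery{\Kemeny}, and \SafeBribery{\Kemeny}, giving Corollary~\ref{cor:bribery-others-hardness}.
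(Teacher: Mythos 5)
Your proposal correctly isolates the crux---that a briber, unlike the manipulator, may replace \emph{any} one judgment set by \emph{any} rational judgment set---but it does not resolve it, and for the concrete construction you commit to (the manipulation gadget of Proposition~\ref{prop:manipulation-cautious-hardness} reused verbatim, unweighted distance, $k=1$, and $J_0$ playing the role of $J_1$) the feared spurious bribes really do exist, so the reduction is unsound. Concretely, let the briber replace $J_3$ by a copy of $J_1$, giving the profile $\prof{J}' = (J_1,J_2,J_1)$. For every complete and consistent judgment set $J^{*}$, the triangle inequality gives
\[ \Dist(J^{*},\prof{J}') = 2d(J^{*},J_1) + d(J^{*},J_2) \geq d(J^{*},J_1) + d(J_1,J_2) \geq d(J_1,J_2), \]
with the total strictly exceeding $d(J_1,J_2)$ unless $J^{*} = J_1$; since $J_1$ is itself complete and consistent (it contains the full $u_1$-block), we get $\Kemeny(\prof{J}') = \SBs J_1 \SEs$. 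The unique new winner is at distance $0$ from $J_0 = J_1$, while the old winners $J^{*}_{\mtext{old},1},J^{*}_{\mtext{old},2}$ are at positive distance, so this bribe is a strict improvement \emph{regardless of the truth of the quantified Boolean formula}: every instance your reduction produces is a yes-instance of \CautiousBribery{\Kemeny}. This also shows that your second fallback (a case analysis proving that no auxiliary bribe can dominate the status-quo winners) cannot succeed, because the statement it would need to establish is simply false for this construction. Your first fallback (replicating auxiliary agents so that only a bribe on the designated agent can move the winner set) is not carried out and faces a structural obstacle: immunity of the winner set to an arbitrary single bribe cannot by itself distinguish \emph{who} is bribed, since any single bribe shifts the cumulative distance of each candidate by an amount bounded by the size of the pre-agenda, so margins that neutralize auxiliary bribes threaten to neutralize the designated bribe too.

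The paper's proof avoids the problem by changing the construction rather than the analysis, and the key move is exactly the one you forgo: it does \emph{not} use the unweighted distance. It takes a \emph{symmetric} profile (three all-zero judgments, distinguished only by their private $u_i$-blocks), introduces a fresh variable $a$ and a block of $2m+4$ fresh formulas $b_1,\dotsc,b_{2m+4}$, and defines the weight function by $w(b_j)=1$ and $w(\varphi)=0$ for every other formula. The integrity constraint forces every consistent judgment set (outside the never-winning $u$-blocks) to contain either $a$ and no $b_j$, or all $b_j$ and not $a$, and it ties ``all $b_j$'' to the analogue of your condition~(3), i.e.\ to committing an assignment to $X$. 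Since the unique old winner contains $a$ and no $b_j$, the \emph{only} way any bribe---on any agent, installing any judgment---can improve the weighted distance to $J_0$ is to make all new winners contain all $b_j$, hence be condition-(3) outcomes committing some $\alpha$, and these are then gated by the $\neg\psi$-rivals exactly as in the manipulation proof. So it is the weight function, not profile anchoring, that makes the argument independent of whom the briber bribes. Note, accordingly, that the paper's Theorem~\ref{thm:bribery-theorem} (unlike Theorem~\ref{thm:manipulation-theorem}) does not claim hardness for the unweighted distance; the stronger unweighted statement your sketch aims at is not established by the paper and would need a genuinely new idea.
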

\begin{proof}
We show \SigmaP{2}-hardness by giving a reduction from the satisfiability
problem for quantified Boolean formulas of the form~$\exists x_1,\dotsc,x_n.
\forall y_1,\dotsc,y_m. \psi$.
Let~$\varphi = \exists x_1,\dotsc,x_n. \forall y_1,\dotsc,y_m. \psi$ be a
quantified Boolean formula.
Let~$X = \SBs x_1,\dotsc,x_n \SEs$ and~$Y = \SBs y_1,\dotsc,y_m \SEs$.
We construct an agenda~$\Phi$, an integrity constraint~$\Gamma$,
a weight function~$w : \Pre{\Phi} \rightarrow \NN$,
a profile~$\prof{J}$ as follows,
a judgment set~$J_0$ and an integer~$k$ as follows.

We consider the variables~$x_1,\dotsc,x_n$ and~$y_1,\dotsc,y_m$
and we introduce fresh variables~$x'_1,\dotsc,x'_n$ and~$y'_1,\dotsc,y'_m$.
Moreover, we introduce fresh variables~$z_i$ for~$i \in [n]$,
fresh variables~$t_j$ for~$j \in [m]$,
fresh variables~$u_{i,\ell}$ for~$i \in [3]$ and~$\ell \in [u]$,
where~$u = 10n+10m+10$.
Finally, we introduce a fresh variable~$a$,
and fresh variables~$b_{j}$ for~$j \in [2m+4]$.
We then let~$\Pre{\Phi} = \SB x_i,x'_i,z_i \SM i \in [n] \SE \cup
\SB y_j,y'_j,t_j \SM j \in [m] \SE \cup \SB u_{i,j} \SM i \in [3], j \in [u] \SE
\cup \SBs a,b_1,\dotsc,b_{2m+4} \SEs$.

We define the integrity constraint~$\Gamma$ as follows.
We let
\[ \Gamma = \Gamma_0 \vee
\bigvee\limits_{i \in [3]} \bigwedge\limits_{\ell \in \mathrlap{[u]}} u_{i,\ell},\]
and
\begin{align*}
  \Gamma_0 =\ &
    \left ( a \vee \bigwedge\limits_{j \in [2m+4]} b_j \right ) \wedge
    \left ( a \rightarrow \bigwedge\limits_{j \in [2m+4]} \neg b_j \right ) \wedge
    \left ( \left ( \bigwedge\limits_{j \in [2m+4]} b_j \right ) \rightarrow \neg a \right ) \\
    \wedge\ & \left (
      \left ( \bigwedge\limits_{i \in [n]} (x_i \oplus x'_i) \right )
    \vee
      \bigwedge\limits_{i \in [n]} z_i
    \right )
      \wedge
    \left (
      \left ( \bigwedge\limits_{i \in [n]} (x_i \oplus x'_i) \right )
    \rightarrow
      \bigwedge\limits_{i \in [n]} \neg z_i
    \right ) \\
  \wedge\ &
    \left (
      \left ( \bigwedge\limits_{j \in [m]} (y_j \oplus y'_j) \right )
    \oplus
      \bigwedge\limits_{j \in [m]} t_j
    \right )
      \wedge
    \left (
      \left ( \bigwedge\limits_{j \in [m]} (y_j \oplus y'_j) \right )
    \rightarrow
      \bigwedge\limits_{j \in [m]} \neg t_j
    \right ) \\
    \wedge\ & \left (
      \left ( \bigwedge\limits_{i \in [n]} z_i \right )
    \rightarrow
      \left ( \bigwedge\limits_{i \in [n]} (\neg x_i \wedge \neg x'_i) \right )
    \right )
      \wedge
    \left (
      \left ( \bigwedge\limits_{j \in [m]} t_j \right )
    \rightarrow
      \bigwedge\limits_{j \in [m]} (\neg y_j \wedge \neg y'_j)
    \right ) \\
  \wedge\ & \left ( \left ( \bigwedge\limits_{i \in [n]} z_i \right ) \rightarrow \left ( a \wedge \bigwedge\limits_{j \in [m]} t_j \right ) \right ) \\
  \wedge\ & \left ( \left ( \bigwedge\limits_{i \in [n]} (x_i \oplus x'_i) \wedge \bigwedge\limits_{j \in [m]} (y_j \oplus y'_j) \right ) \rightarrow (\neg \psi \wedge a) \right ) \\
  \wedge\ & \left ( \left ( \bigwedge\limits_{i \in [n]} (x_i \oplus x'_i) \wedge \bigwedge\limits_{j \in [m]} t_j \right ) \rightarrow \bigwedge\limits_{j \in [2m+4]} b_j \right ).
\end{align*}
(Here~$\oplus$ denotes exclusive disjunction.)

As a result of the definition of~$\Gamma$,
each complete and consistent judgment set~$J \in \JJJ(\Phi,\Gamma)$
satisfies (at least) one of the following four conditions.
\begin{enumerate}
  \item For some~$i \in [3]$, the judgment set~$J$ includes
    each formula~$u_{i,\ell}$ for~$\ell \in [u]$.
  \item The judgment set~$J$ includes
    exactly one of~$x_i$ and~$x'_i$ for each~$i \in [n]$,
    it includes none of the formulas~$z_i$,
    it includes exactly one of~$y_j$ and~$y'_j$ for each~$j \in [m]$,
    it includes none of the formulas~$t_j$,
    it includes~$a$,
    it includes none of the formulas~$b_j$,
    and~$J$ does not satisfy~$\psi$.
  \item The judgment set~$J$ includes
    exactly one of~$x_i$ and~$x'_i$ for each~$i \in [n]$,
    it includes none of the formulas~$z_i$,
    it includes all formulas~$t_j$,
    for each~$j \in [m]$ it includes either none or both of~$y_j$ and~$y'_j$,
    it does not include~$a$,
    and it includes all formulas~$b_j$ for~$j \in [2m+4]$.
  \item The judgment set~$J$ includes
    all formulas~$z_i$,
    for each~$i \in [n]$ it includes either none or both of~$x_i$ and~$x'_i$,
    it includes all formulas~$t_j$,
    for each~$j \in [m]$ it includes either none or both of~$y_j$ and~$y'_j$,
    it includes~$a$, and
    it includes none of the formulas~$b_j$.
\end{enumerate}

We let~$\prof{J} = (J_1,J_2,J_3)$, where~$J_1,J_2,J_3$ are
defined as described in Table~\ref{table:bribery-cautious-hardness-profile}.
In this table, the indices~$i,j,\ell$ range over all possible
values, and for each~$\varphi \in \Pre{\Phi}$ we write a~$0$
if~$\varphi \not\in J_i$ and a~$1$ if~$\varphi \in J_i$.

\begin{table}[ht!]
  \begin{center}
  \begin{tabular}{c | c @{\ \ } c @{\ \ } c @{\ \ } c @{\ \ } c @{\ \ } c @{\ \ } c @{\ \ } c @{\ \ } c @{\ \ } c @{\ \ } c @{\ \ } c}
    \toprule
    $\prof{J}$ & $x_i$ & $x'_i$ & $z_i$ & $y_j$ & $y'_j$ & $t_j$ &
       $a$ & $b_j$ &
       $u_{1,\ell}$ & $u_{2,\ell}$ & $u_{3,\ell}$ \\
    \midrule
    $J_1$ & $0$ & $0$ & $0$ & $0$ & $0$ &
      $0$ & $0$ & $0$ & $1$ & $0$ & $0$ \\
    $J_2$ & $0$ & $0$ & $0$ & $0$ & $0$ &
      $0$ & $0$ & $0$ & $0$ & $1$ & $0$ \\
    $J_3$ & $0$ & $0$ & $0$ & $0$ & $0$ &
      $0$ & $0$ & $0$ & $0$ & $0$ & $1$ \\
    \bottomrule
  \end{tabular}
  \end{center}
  \caption{The profile~$\prof{J} = (J_1,J_2,J_3)$ that we use in the
    proof of Proposition~\ref{prop:bribery-cautious-hardness}.}
  \label{table:bribery-cautious-hardness-profile}
\end{table}

Moreover, we let~$J_0$ be
as described in Table~\ref{table:bribery-cautious-hardness-desired-opinion}.
In this table, the indices~$i,j,\ell$ range over all possible
values, and for each~$\varphi \in \Pre{\Phi}$ we write a~$0$
if~$\varphi \not\in J_i$ and a~$1$ if~$\varphi \in J_i$.
In addition, we define the weight function~$w$ as follows.
For each~$j \in [2m+4]$, we let~$w(b_j) = 1$,
and for all other formulas~$\varphi \in \Pre{\Phi}$, we let~$w(\varphi) = 0$.
In other words, the briber wants the formulas~$b_j$ to be satisfied,
and does not care about any other formula.

\begin{table}[ht!]
  \begin{center}
  \begin{tabular}{c | c @{\ \ } c @{\ \ } c @{\ \ } c @{\ \ } c @{\ \ } c @{\ \ } c @{\ \ } c @{\ \ } c @{\ \ } c @{\ \ } c @{\ \ } c}
    \toprule
    & $x_i$ & $x'_i$ & $z_i$ & $y_j$ & $y'_j$ & $t_j$ &
       $a$ & $b_j$ &
       $u_{1,\ell}$ & $u_{2,\ell}$ & $u_{3,\ell}$ \\
    \midrule
    $J_0$ & $0$ & $0$ & $0$ & $0$ & $0$ &
      $0$ & $0$ & $1$ & $1$ & $0$ & $0$ \\
    \bottomrule
  \end{tabular}
  \end{center}
  \caption{The judgment set~$J_0$ that we use in the
    proof of Proposition~\ref{prop:bribery-cautious-hardness}.}
  \label{table:bribery-cautious-hardness-desired-opinion}
\end{table}

We let~$k = 1$. That is, the briber can bribe exactly one individual.

In the remainder,
we will argue that there is some truth assignment~$\alpha : X \rightarrow \BB{}$
such that for all truth assignments~$\beta : Y \rightarrow \BB{}$
it holds that~$\psi[\alpha \cup \beta]$ is true
if and only if
it is possible to change up to~$k$ individual
judgment sets in~$\prof{J}$, resulting in a new profile~$\prof{J}'$,
so that for {all} $J^{*}_{\mtext{new}} \in \Kemeny(\prof{J}')$
and for {all} $J^{*}_{\mtext{old}} \in \Kemeny(\prof{J})$
it holds that $d(J^{*}_{\mtext{new}},J_0,w) < d(J^{*}_{\mtext{old}},J_0,w)$.

Firstly, we observe that~$\Kemeny(\prof{J}) = \SBs J^{*}_{\mtext{old}} \SEs$,
where~$J^{*}_{\mtext{old}}$ is defined as described
in Table~\ref{table:bribery-cautious-hardness-old-winners}.
The judgment set~$J^{*}_{\mtext{old}}$ is complete
and consistent, and has a cumulative Hamming distance of~$6n+3+3u$
to the profile~$\prof{J}$.
It is straightforward to verify that no complete and consistent judgment set
has a smaller cumulative Hamming distance to the profile~$\prof{J}$.

\begin{table}[ht!]
  \begin{center}
  \begin{tabular}{c | c @{\ \ } c @{\ \ } c @{\ \ } c @{\ \ } c @{\ \ } c @{\ \ } c @{\ \ } c @{\ \ } c @{\ \ } c @{\ \ } c @{\ \ } c}
    \toprule
    $\Kemeny(\prof{J})$ & $x_i$ & $x'_i$ & $z_i$ & $y_j$ & $y'_j$ & $t_j$ &
       $a$ & $b_j$ &
       $u_{1,\ell}$ & $u_{2,\ell}$ & $u_{3,\ell}$ \\
    \midrule
    $J^{*}_{\mtext{old}}$ & $0$ & $0$ & $1$ & $0$ & $0$ & $1$ &
      $1$ & $0$ & $0$ & $0$ & $0$ \\
    \bottomrule
  \end{tabular}
  \end{center}
  \caption{The judgment set~$J^{*}_{\mtext{old}}$ that is used in the
    proof of Proposition~\ref{prop:bribery-cautious-hardness}.}
  \label{table:bribery-cautious-hardness-old-winners}
\end{table}

We observe that the only way for the briber to enforce an
outcome that satisfies the formulas~$b_j$, is to enforce an
outcome~$J^{*}_{\mtext{new}}$
that satisfies condition~(3).
Additionally, we know that any such enforced
outcome~$J^{*}_{\mtext{new}}$ includes none of the
formulas~$y_j$ and~$y'_j$, as a majority of judgment sets
in the (modified) profile includes~$\neg y_j$ and~$\neg y'_j$.
Similarly, any such enforced
outcome~$J^{*}_{\mtext{new}}$ includes none of the
formulas~$u_{i,\ell}$.

We argue that there is some truth assignment~$\alpha : X \rightarrow \BB{}$
such that for all truth assignments~$\beta : Y \rightarrow \BB{}$
it holds that~$\psi[\alpha \cup \beta]$ is true
if and only if
it is possible to change a single
judgment set in~$\prof{J}$, resulting in a new profile~$\prof{J}'$,
so that for {all} $J^{*}_{\mtext{new}} \in \Kemeny(\prof{J}')$
and for {all} $J^{*}_{\mtext{old}} \in \Kemeny(\prof{J})$
it holds that $d(J^{*}_{\mtext{new}},J_0,w) < d(J^{*}_{\mtext{old}},J_0,w)$.

$(\Rightarrow)$
Suppose that there exists some truth assignment~$\alpha : X \rightarrow \BB{}$
such that for all truth assignments~$\beta : Y \rightarrow \BB{}$
it holds that~$\psi[\alpha \cup \beta]$ is true.
Consider the complete and consistent judgment set~$J^{*}_{\alpha}$
that is described in Table~\ref{table:bribery-cautious-hardness-insincere-judgment}.
Moreover, let~$\prof{J}'$ be the profile obtained from~$\prof{J}$ by replacing
any single judgment set in~$\prof{J}$ by~$J^{*}_{\alpha}$
(since the judgment sets in~$\prof{J}$ are symmetric, the argument
does not depend on which set is replaced).
We then get that~$\Kemeny(\prof{J}') = \SBs J^{*}_{\alpha} \SEs$.
The only possible complete and consistent judgment sets~$J^{*}$
that could have a smaller Hamming distance to the
profile~$(\prof{J}')$ would have to satisfy condition~(2).
That is, such judgment sets~$J^{*}$ would have to include exactly
one of~$y_j$ and~$y'_j$, for each~$j \in [m]$,
and would have to satisfy~$\neg\psi$.
Moreover, in order for such judgment sets~$J^{*}$ to have a smaller
Hamming distance to the profile, it would have to agree with~$J^{*}_{\alpha}$
on the formulas~$x_i$ and~$x'_i$, for all~$i \in [n]$.
In particular, the cumulative unweighted Hamming distance from~$\prof{J}'$
to~$J^{*}_{\alpha}$ is~$2n+6m+8+2u$,
and the cumulative unweighted Hamming distance from~$\prof{J}'$
to such a judgment set~$J^{*}$ would be~$2n+6m+7+2u$.
However, since~$\psi[\alpha]$ is valid, we know that such judgment
sets~$J^{*}$ are not consistent.
Therefore,~$\Kemeny(\prof{J}') = \SBs J^{*}_{\alpha} \SEs$.
Clearly,~$d(J_0,J^{*}_{\alpha},w) < d(J_0,J^{*}_{\mtext{old}},w)$.
In other words,
for all~$J^{*}_{\mtext{new}} \in \Kemeny(\prof{J}')$
and for all~$J^{*}_{\mtext{old}} \in \Kemeny(\prof{J})$
it holds that $d(J^{*}_{\mtext{new}},J_0,w) < d(J^{*}_{\mtext{old}},J_0,w)$.

\begin{table}[ht!]
  \begin{center}
  \begin{tabular}{c | c @{\ \ } c @{\ \ } c @{\ \ } c @{\ \ } c @{\ \ } c @{\ \ } c @{\ \ } c @{\ \ } c @{\ \ } c @{\ \ } c @{\ \ } c}
    \toprule
     & $x_i$ & $x'_i$ & $z_i$ & $y_j$ & $y'_j$ & $t_j$ &
       $a$ & $b_j$ &
       $u_{1,\ell}$ & $u_{2,\ell}$ & $u_{3,\ell}$ \\
    \midrule
    $J^{*}_{\alpha}$ & $\alpha(x_i)$ & $1-\alpha(x_i)$ & $0$ & $0$ & $0$ & $1$ &
      $0$ & $1$ & $0$ & $0$ & $0$ \\
    \bottomrule
  \end{tabular}
  \end{center}
  \caption{The judgment set~$J^{*}_{\alpha}$ that is used in
    proof of Proposition~\ref{prop:bribery-cautious-hardness}.}
  \label{table:bribery-cautious-hardness-insincere-judgment}
\end{table}

$(\Leftarrow)$
Conversely, suppose that it is possible to change a single
judgment set in~$\prof{J}$, resulting in a new profile~$\prof{J}'$,
so that for {all} $J^{*}_{\mtext{new}} \in \Kemeny(\prof{J}')$
and for {all} $J^{*}_{\mtext{old}} \in \Kemeny(\prof{J})$
it holds that $d(J^{*}_{\mtext{new}},J_0,w) < d(J^{*}_{\mtext{old}},J_0,w)$.
As we argued above, each such complete and consistent
judgment set~$J^{*}_{\mtext{new}}$ satisfies condition~(3),
and furthermore, it does not contain any of the
formulas~$y_j,y'_j$ and~$u_{i,\ell}$.
That is, such a judgment set~$J^{*}_{\mtext{new}}$
must be of the form~$J^{*}_{\alpha}$,
for some~$\alpha : X \rightarrow \BB{}$,
as described in
Figure~\ref{table:bribery-cautious-hardness-insincere-judgment}.
Fix this truth assignment~$\alpha : X \rightarrow \BB{}$.
It suffices to consider profiles~$\prof{J}'$ that are obtained from
the profile~$\prof{J}$ by replacing one of the judgment sets
by~$J^{*}_{\alpha}$, as replacing a judgment set in~$\prof{J}$
with a judgment set that differs from~$J^{*}_{\alpha}$ can only
increase the cumulative unweighted Hamming distance to~$J^{*}_{\alpha}$.

\begin{table}[ht!]
  \begin{center}
  \begin{tabular}{c | c @{\ \ } c @{\ \ } c @{\ \ } c @{\ \ } c @{\ \ } c @{\ \ } c @{\ \ } c @{\ \ } c @{\ \ } c @{\ \ } c @{\ \ } c}
    \toprule
     & $x_i$ & $x'_i$ & $z_i$ & $y_j$ & $y'_j$ & $t_j$ &
       $a$ & $b_j$ &
       $u_{1,\ell}$ & $u_{2,\ell}$ & $u_{3,\ell}$ \\
    \midrule
    $J^{*}_{\beta}$ & $\alpha(x_i)$ & $1-\alpha(x_i)$ & $0$ & $\beta(y_j)$ & $1-\beta(y_j)$ & $0$ &
      $1$ & $0$ & $0$ & $0$ & $0$ \\
    \bottomrule
  \end{tabular}
  \end{center}
  \caption{The judgment set~$J^{*}_{\beta}$ that is used in
    the proof of Proposition~\ref{prop:bribery-cautious-hardness}.}
  \label{table:bribery-cautious-hardness-counterexample}
\end{table}

We now use the fact that~$J^{*}_{\alpha} \in \Kemeny(\prof{J}')$ to argue
that for all truth assignments~$\beta : Y \rightarrow \BB{}$
it holds that~$\psi[\alpha \cup \beta]$ is true.
We proceed indirectly, and assume that there exists some truth
assignment~$\beta : Y \rightarrow \BB{}$
such that~$\psi[\alpha \cup \beta]$ is false.
Then consider the complete and consistent judgment set~$J^{*}_{\beta}$
that is described in Table~\ref{table:bribery-cautious-hardness-counterexample}.
Because~$\psi[\alpha \cup \beta]$ is false, we know that~$J^{*}_{\beta}$
is consistent---it satisfies condition~(2).
Moreover,~$d(J^{*}_{\beta},\prof{J}') < d(J^{*}_{\alpha},\prof{J}')$.
Namely, the judgment set~$J^{*}_{\alpha}$ has Hamming distance~$2n+6m+8+2u$
to the profile~$\prof{J}'$,
and the judgment set~$J^{*}_{\beta}$ has Hamming distance~$2n+6m+7+2u$
to the profile~$\prof{J}'$.
This is a contradiction with the fact that~$J^{*}_{\alpha} \in \Kemeny(\prof{J}')$.
Therefore, we can conclude that no truth
assignment~$\beta : Y \rightarrow \BB{}$ exists
such that~$\psi[\alpha \cup \beta]$ is false.
In other words, for all truth assignments~$\beta : Y \rightarrow \BB{}$
it holds that~$\psi[\alpha \cup \beta]$ is true.
\end{proof}

\begin{corollary}
\label{cor:bribery-others-hardness}
\OptimisticBribery{\Kemeny},
\PessimisticBribery{\Kemeny},
\SuperoptimisticBribery{\Kemeny},
and \SafeBribery{\Kemeny}
are \SigmaP{2}-hard.
\end{corollary}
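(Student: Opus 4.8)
To prove \SigmaP{2}-hardness of all four problems, the plan is to reuse \emph{verbatim} the reduction from the proof of Proposition~\ref{prop:bribery-cautious-hardness} and to argue that, on the instances it produces, all five acceptance conditions collapse to one. This mirrors Corollary~\ref{cor:manipulation-others-hardness}, the only difference being that here the collapse is driven by the old Kemeny set being a singleton. Recall from that construction that the only formulas of nonzero weight are the~$b_j$, that~$J_0$ contains every~$b_j$, and that among the feasible judgment sets only those satisfying condition~(3) contain any~$b_j$ (conditions~(2) and~(4) contain none, and condition~(1) sets are never Kemeny winners because of the~$u$-padding). Hence every Kemeny winner of every profile has weighted distance to~$J_0$ equal either to~$0$ (condition~(3), i.e.\ a set of the form~$J^{*}_{\alpha}$) or to~$2m+4$ (conditions~(2) and~(4)), while the unique old winner~$J^{*}_{\mtext{old}}$ sits at distance~$2m+4$. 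The whole argument thus reduces to controlling which of these two values the new winners can attain.

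For the direction from a true quantified Boolean formula I would reuse the witness bribery: if~$\alpha$ is a witness, bribing one individual to report~$J^{*}_{\alpha}$ makes~$\Kemeny(\prof{J}') = \SBs J^{*}_{\alpha} \SEs$ a singleton at distance~$0$, while~$\Kemeny(\prof{J}) = \SBs J^{*}_{\mtext{old}} \SEs$ is a singleton at distance~$2m+4$. Since both Kemeny sets are singletons and~$0 < 2m+4$, the strong Cautious condition holds; it immediately entails the Optimistic, Pessimistic and Superoptimistic conditions (the Kemeny sets being nonempty), and it also entails the Safe condition, because a strict inequality everywhere supplies both the required weak inequality and a witnessing strict inequality. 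Thus all four problems accept.

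The interesting direction is to show that a false formula forces even the weakest (Superoptimistic) condition to fail for \emph{every} bribery of a single individual. As~$\Kemeny(\prof{J})$ is the singleton~$\SBs J^{*}_{\mtext{old}} \SEs$ at distance~$2m+4$, Superoptimistic success would require some bribed profile~$\prof{J}'$ to admit a Kemeny winner strictly closer to~$J_0$, i.e.\ a winner containing some~$b_j$, i.e.\ a condition~(3) winner~$J^{*}_{\alpha}$. I would rule this out by a counting argument: whenever~$\psi[\alpha \cup \beta]$ is false for some~$\beta$, the matching condition~(2) set~$J^{*}_{\beta}$ is feasible, and comparing~$J^{*}_{\alpha}$ and~$J^{*}_{\beta}$ against the profile coordinate by coordinate shows that each of the two unbribed (hence all-zero on the relevant issues) individuals contributes~$2m+3$ in favour of~$J^{*}_{\beta}$, whereas the single bribed individual can shift the balance by at most~$4m+5$ (the number of coordinates on which~$J^{*}_{\alpha}$ and~$J^{*}_{\beta}$ differ). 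Since~$k=1$, the net gap is at least~$2(2m+3)-(4m+5)=1>0$, so~$\Dist(J^{*}_{\alpha},\prof{J}') > \Dist(J^{*}_{\beta},\prof{J}')$ for every single bribery and~$J^{*}_{\alpha}$ is never a winner. If the formula is false, no~$\alpha$ is a witness, so no condition~(3) set ever wins; combined with condition~(1) sets never winning, no Kemeny winner contains any~$b_j$, every winner stays at distance~$2m+4$, and Superoptimistic (hence all four variants) fails.

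The main obstacle I expect is making this robustness claim airtight: one must check that the ``$2m+3$ versus~$4m+5$'' bookkeeping is genuinely independent of \emph{which} individual is bribed and of \emph{what} ballot is reported, and, separately, that condition~(1) sets can never enter a Kemeny outcome (so that~$b_j$-containing winners are exactly the condition~(3) sets). Both facts are forced by the large~$u$-padding and by the symmetry of~$\prof{J}$, and the specialization of the gap computation to the bribe~$J^{*}_{\alpha}$ recovers exactly the distance difference of~$1$ already noted in Proposition~\ref{prop:bribery-cautious-hardness}; but these are the steps that must be verified rather than quoted.
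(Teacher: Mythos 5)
Your overall strategy is the same as the paper's: reuse the reduction of Proposition~\ref{prop:bribery-cautious-hardness} unchanged and show that on its instances the five acceptance criteria coincide. The paper obtains the collapse from the single observation that all old winners and all new winners of the relevant profiles have equal weighted distance to~$J_0$; you obtain it from the sandwich ``QBF true $\Rightarrow$ Cautious holds, QBF false $\Rightarrow$ Superoptimistic fails'', which is equally valid. Your explicit bookkeeping $2(2m+3)-(4m+5)=1$ is correct (the two sets differ on exactly $m$ of the $y,y'$-coordinates, all $m$ of the $t_j$, the variable $a$, and all $2m+4$ of the $b_j$), and it is in fact a more robust route than the paper's, since it handles an \emph{arbitrary} bribed ballot directly instead of first normalizing to the bribery $J^{*}_{\alpha}$; it also tolerates winners of the form ``$J^{*}_{\alpha}$ plus some $u$-formulas'', because the extra $u$-formulas cancel when comparing against $J^{*}_{\beta}$ augmented with the same $u$-formulas.

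There is, however, one genuine flaw: the claim that condition~(1) sets are never Kemeny winners of a bribed profile is false. Counterexample: bribe one individual to report~$J_1$. For the resulting profile~$(J_1,J_2,J_1)$ and any rational~$J$ we have $\Dist(J,(J_1,J_2,J_1)) = 2d(J,J_1)+d(J,J_2) \geq d(J,J_1) + d(J_1,J_2) \geq 2u$, with equality if and only if $J = J_1$; hence $J_1$---whose rationality stems solely from its $u$-block, not from~$\Gamma_0$---is the unique Kemeny winner. The $u$-padding does rule out condition-(1)-only winners for the \emph{original} profile, where each block has one-in-three support, but after bribing one individual a block can enjoy two-in-three support. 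What your argument actually needs is weaker and is true: no Kemeny winner of a bribed profile that contains \emph{some}~$b_j$ can be condition-(1)-only. Indeed, if such a winner~$J$ existed, removing a single~$b_j$ from~$J$ would preserve rationality (the full $u$-block still satisfies the disjunct of~$\Gamma$) and would strictly decrease the distance to the profile, since both unbribed individuals reject every~$b_j$ while the bribed one contributes at most~$+1$. With this repair your chain ``winner strictly closer to~$J_0$ $\Rightarrow$ contains some~$b_j$ $\Rightarrow$ satisfies~$\Gamma_0$ $\Rightarrow$ contains all~$b_j$ and has the form~$J^{*}_{\alpha}$ up to $u$-formulas $\Rightarrow$ beaten by~$J^{*}_{\beta}$ when~$\psi[\alpha]$ is not valid'' goes through, and the rest of your proof stands.
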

\begin{proof}
\SigmaP{2}-hardness for \OptimisticBribery{\Kemeny},
\PessimisticBribery{\Kemeny},
\SuperoptimisticBribery{\Kemeny}, and \SafeBribery{\Kemeny}
follows from the proof of Proposition~\ref{prop:bribery-cautious-hardness}.
In this proof, all Kemeny outcomes for the original profile~$\prof{J}$
have the same weighted Hamming distance to the judgment set~$J_0$.
Similarly, all Kemeny outcomes for the relevant modified
profiles~$\prof{J}'$,
have the same weighted Hamming distance to the judgment set~$J_0$.
Therefore, the reduction can also be used to show \SigmaP{2}-hardness
for the problems \OptimisticBribery{\Kemeny},
\PessimisticBribery{\Kemeny},
\SuperoptimisticBribery{\Kemeny}, and \SafeBribery{\Kemeny}.
\end{proof}

\subsection{Restricted Settings and Constraint-Based Judgment Aggregation}

Similarly to the case of Theorem~\ref{thm:manipulation-theorem},
the result of Theorem~\ref{thm:bribery-theorem} can straightforwardly be
extended to the setting where~$\Gamma = \top$ and where all
formulas~$\varphi \in \Pre{\Phi}$ are clauses of size~$3$,
by arguments that are entirely similar to the ones described
in Section~\ref{sec:manipulation-3clauses-trivial-ic}.

The result of Theorem~\ref{thm:bribery-theorem} can also straightforwardly be
extended to the setting of constraint-based judgment aggregation.
The algorithms used in the proofs of
Proposition~\ref{prop:bribery-cautious-membership} and
Corollary~\ref{cor:bribery-others-membership}
can directly be used in the
setting of constraint-based judgment aggregation as well
(so membership in \SigmaP{2} carries over to this setting).
Moreover,
since the proofs of Proposition~\ref{prop:bribery-cautious-hardness}
and Corollary~\ref{cor:bribery-others-hardness} use an
agenda containing only propositional variables (and their negations),
this proof can also directly be used to show \SigmaP{2}-hardness for the
setting of constraint-based judgment aggregation.

\subsection{Exact Bribery}

In the literature, the problem of bribery in judgment aggregation has also been
modelled using different decision problems (see,
e.g.,~\cite{BaumeisterErdelyiErdelyiRothe15}).
One of these decision problems asks whether the briber has a way of bribing
a number of individuals (within their budget) to obtain an outcome that
includes a given desired subset of the agenda.
This problem is often called \emph{exact bribery}.
We consider two variants of this exact bribery problem,
and we argue that the \SigmaP{2}-completeness result of
Theorem~\ref{thm:bribery-theorem} extends to these problems.

\probdef{
  \CautiousExactBribery{\Kemeny}
  
  \emph{Instance:} An agenda~$\Phi$ with an
    integrity constraint~$\Gamma$,
    a profile~$\prof{J} \in \JJJ(\Phi,\Gamma)^{+}$,
    a (possibly incomplete) judgment set~$L \subseteq \Phi$,
    and an integer~$k \in \NN$.
  
  \emph{Question:} Is it possible to change up to~$k$ individual
    judgment sets in~$\prof{J}$, resulting in a new profile~$\prof{J}'$,
    so that for \textbf{all} $J^{*} \in \Kemeny(\prof{J}')$
    it holds that $L \subseteq J^{*}$?
}

\probdef{
  \BraveExactBribery{\Kemeny}
  
  \emph{Instance:} An agenda~$\Phi$ with an
    integrity constraint~$\Gamma$,
    a profile~$\prof{J} \in \JJJ(\Phi,\Gamma)^{+}$,
    a (possibly incomplete) judgment set~$L \subseteq \Phi$,
    and an integer~$k \in \NN$.
  
  \emph{Question:} Is it possible to change up to~$k$ individual
    judgment sets in~$\prof{J}$, resulting in a new profile~$\prof{J}'$,
    so that for \textbf{some} $J^{*} \in \Kemeny(\prof{J}')$
    it holds that $L \subseteq J^{*}$?
}

\begin{proposition}
\label{prop:exact-bribery}
The problems \CautiousExactBribery{\Kemeny} and
\BraveExactBribery{\Kemeny} are \SigmaP{2}-complete.
Moreover, \SigmaP{2}-hardness holds even for the case
where~$\Card{L} = 1$.
\end{proposition}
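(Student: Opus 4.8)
The plan is to follow the pattern of the analogous result for exact manipulation, adapting both ingredients of Theorem~\ref{thm:bribery-theorem} to the exact setting.

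For membership in \SigmaP{2}, I would reuse the nondeterministic polynomial-time algorithm with an \NP{} oracle from Proposition~\ref{prop:bribery-cautious-membership}, changing only its final acceptance test. As there, the algorithm guesses the (at most)~$k$ indices to bribe together with replacement judgment sets~$J'_1,\dotsc,J'_k$ and satisfying assignments witnessing their~$\Gamma$-consistency, forms the modified profile~$\prof{J}'$, and computes the minimum cumulative unweighted Hamming distance~$d^{\mtext{win}}_{\mtext{new}}$ from~$\prof{J}'$ to any complete and consistent judgment set using~$O(\log n)$ oracle queries. For \CautiousExactBribery{\Kemeny}, the algorithm then asks the oracle whether there is some complete and consistent~$J^{*}$ with~$d(J^{*},\prof{J}') = d^{\mtext{win}}_{\mtext{new}}$ and~$L \not\subseteq J^{*}$, rejecting if such a Kemeny winner exists and accepting otherwise. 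For \BraveExactBribery{\Kemeny}, the algorithm instead additionally guesses a complete judgment set~$J^{*}$ with a satisfying assignment, and accepts exactly when~$J^{*}$ is~$\Gamma$-consistent, $d(J^{*},\prof{J}') = d^{\mtext{win}}_{\mtext{new}}$, and~$L \subseteq J^{*}$. Both variants clearly run in nondeterministic polynomial time with an \NP{} oracle.

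For \SigmaP{2}-hardness, I would reuse verbatim the reduction from~$\exists\forall$-quantified Boolean satisfiability in the proof of Proposition~\ref{prop:bribery-cautious-hardness}, discarding the weight function and the target judgment set~$J_0$ and taking instead~$L = \SBs b_1 \SEs$, so that~$\Card{L} = 1$. The structural facts established there carry over unchanged: by the observation in that proof, the only way a Kemeny winner of a bribed profile~$\prof{J}'$ can contain a formula~$b_j$ is to satisfy condition~(3) (the sets of conditions~(2) and~(4) contain no~$b_j$, and those of condition~(1) are too far from the profile to be Kemeny winners), and every such condition-(3) winner omits all of~$y_j,y'_j,u_{i,\ell}$ and is hence of the form~$J^{*}_{\alpha}$ for some~$\alpha : X \rightarrow \BB{}$. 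Thus ``a Kemeny winner of~$\prof{J}'$ contains~$b_1$'' is equivalent to ``a Kemeny winner of~$\prof{J}'$ equals some~$J^{*}_{\alpha}$''.

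Given these facts, both directions of the equivalence transfer directly. If the quantified formula is true, witnessed by~$\alpha$, then bribing one individual to report~$J^{*}_{\alpha}$ yields~$\Kemeny(\prof{J}') = \SBs J^{*}_{\alpha} \SEs$ exactly as before, and since~$b_1 \in J^{*}_{\alpha}$ every Kemeny winner contains~$L$; this settles the forward direction for the cautious and the brave variant at once. Conversely, if after some single-voter bribe a (and, a fortiori, every) Kemeny winner contains~$b_1$, then that winner is some~$J^{*}_{\alpha}$, so~$J^{*}_{\alpha} \in \Kemeny(\prof{J}')$, and the distance argument of Proposition~\ref{prop:bribery-cautious-hardness} applies: were~$\psi[\alpha \cup \beta]$ false for some~$\beta$, the consistent condition-(2) set~$J^{*}_{\beta}$ would be strictly closer to~$\prof{J}'$ than~$J^{*}_{\alpha}$, contradicting~$J^{*}_{\alpha} \in \Kemeny(\prof{J}')$; hence~$\psi[\alpha \cup \beta]$ holds for all~$\beta$ and the quantified formula is true. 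The step needing the most care is that this inequality must hold for an \emph{arbitrary} single bribe, not only for the bribe~$J' = J^{*}_{\alpha}$: on the~$4m+5$ coordinates where~$J^{*}_{\alpha}$ and~$J^{*}_{\beta}$ differ, the two untouched original judgment sets already favour~$J^{*}_{\beta}$ by a margin of~$4m+6$, which a single bribe can shrink by at most~$4m+5$ and hence never eliminate, leaving exactly the unit of slack that Proposition~\ref{prop:bribery-cautious-hardness} exploits.
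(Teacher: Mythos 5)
Your proposal is correct and takes essentially the same route as the paper's proof: membership by adapting the final acceptance tests of the algorithms from Proposition~\ref{prop:bribery-cautious-membership} (and its corollary), and hardness by reusing the reduction of Proposition~\ref{prop:bribery-cautious-hardness} verbatim with~$L = \SBs b_1 \SEs$, noting that the brave hypothesis suffices for both variants. The only difference is one of detail: where the paper handles arbitrary bribes by arguing it suffices to consider the bribe~$J' = J^{*}_{\alpha}$, you instead give a direct margin count on the~$4m+5$ coordinates where~$J^{*}_{\alpha}$ and~$J^{*}_{\beta}$ differ (a~$4m+6$ advantage for~$J^{*}_{\beta}$ from the untouched voters versus at most~$4m+5$ from the bribe), which is a sound and slightly more explicit rendering of the same step.
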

\begin{proof}
To show membership in \SigmaP{2}, it suffices to observe that
the algorithms used in the proofs of
Proposition~\ref{prop:bribery-cautious-membership} and
Corollary~\ref{cor:bribery-others-membership} can straightforwardly
be extended to the case of \CautiousExactBribery{\Kemeny} and
\BraveExactBribery{\Kemeny}.
For \SigmaP{2}-hardness, one can use the proofs of
Proposition~\ref{prop:bribery-cautious-hardness}
and Corollary~\ref{cor:bribery-others-hardness},
with the modification that~$L = \SBs b_1 \SEs$.
\end{proof}

\section{Control by Adding or Deleting Issues}
\label{sec:control}

A third form of strategic behavior in judgment aggregation is
control.
In this setting, an external agent wishes to influence the outcome of
by influencing the conditions of a judgment aggregation scenario.
Here, we consider control by (1)~adding or (2)~deleting issues.
We model these scenarios as follows.
(Again, we consider the formula-based judgment aggregation framework;
for the constraint-based judgment aggregation framework, this
control scenario can be defined entirely similarly.)

We begin with the scenario of (1)~control by adding issues.
A number of individuals each have an opinion for
an agenda~$\Phi$ in the presence of an integrity constraint~$\Gamma$.
That is, we are considering a profile~$\prof{J} \in \JJJ(\Phi,\Gamma)^{+}$.
However, they are performing judgment aggregation only on
a selection of issues, specified by an agenda~$\Phi' \subseteq \Phi$.
(For any~$\Psi \subseteq \Phi$, we let the profile~$\prof{J}|_{\Psi}$
consist of the judgment sets~$J|_{\Psi}$ for each~$J \in \prof{J}$,
where~$J|_{\Psi} = J \cap \Psi$---%
that is,~$\prof{J}|_{\Psi} = \SB J \cap \Psi \SM J \in \prof{J} \SE$.
Intuitively,~$\prof{J}|_{\Psi}$ is the restriction of~$\prof{J}$
to the formulas in~$\Psi$.)
The external agent wishes to ensure that the outcome
of the judgment aggregation procedure includes a set~$L \subseteq \Phi$
of conclusions, and they want to do so by enlarging the set of issues
that the individuals perform judgment aggregation on.
Formally, the external agent wants to select an agenda~$\Phi''$
with~$\Phi' \subseteq \Phi'' \subseteq \Phi$ such
that~$L \subseteq J^{*}$ for~$J^{*} \in \Kemeny(\prof{J}|_{\Phi''})$.
(Obviously, if the external agent wishes to succeed, they need to
choose some~$\Phi''$ with~$L \subseteq \Phi''$.)
The question is whether the external agent can succeed in this.
Since the Kemeny judgment aggregation procedure is irresolute,
we can formulate two decision problems.

\probdef{
  \CautiousExactControlByAddingIssues{\Kemeny}
  
  \emph{Instance:} An agenda~$\Phi$ with an
    integrity constraint~$\Gamma$,
    an agenda~$\Phi' \subseteq \Phi$,
    a set~$L \subseteq \Phi$
    and a profile~$\prof{J} \in \JJJ(\Phi,\Gamma)^{+}$ for~$\Phi$.
  
  \emph{Question:} Is there an agenda~$\Phi' \subseteq \Phi'' \subseteq \Phi$
    such that for \textbf{all}~$J^{*} \in \Kemeny(\prof{J}|_{\Phi''})$ it holds
    that~$L \subseteq J^{*}$?
}

\probdef{
  \BraveExactControlByAddingIssues{\Kemeny}
  
  \emph{Instance:} An agenda~$\Phi$ with an
    integrity constraint~$\Gamma$,
    an agenda~$\Phi' \subseteq \Phi$,
    a set~$L \subseteq \Phi$
    and a profile~$\prof{J} \in \JJJ(\Phi,\Gamma)^{+}$ for~$\Phi$.
  
  \emph{Question:} Is there an agenda~$\Phi' \subseteq \Phi'' \subseteq \Phi$
    such that for \textbf{some}~$J^{*} \in \Kemeny(\prof{J}|_{\Phi''})$ it holds
    that~$L \subseteq J^{*}$?
}

We continue with the scenario of (2)~control by deleting issues.
In this scenario, the individuals are performing judgment
aggregation on an agenda~$\Phi$ in the presence of an integrity
constraint~$\Gamma$.
The external agent wishes to ensure that the outcome
of the judgment aggregation procedure includes a set~$L \subseteq \Phi$
of conclusions, and they want to do so by restricting the set of issues
that the individuals perform judgment aggregation on.
Moreover, several issues are outside the control of the agent---%
these issues cannot be excluded from the judgment aggregation scenario.
This could be the case, for instance, if (1)~the external agent
has influence only on issues regarding one theme,
or if (2)~various issues are politically sensitive
and excluding them would lead to controversy.
This is modelled by a set~$\Phi'' \subseteq \Phi$ of issues
that the external agent cannot exclude.
Formally, the external agent wants to select an agenda~$\Phi''$
with~$\Phi' \subseteq \Phi'' \subseteq \Phi$ such
that~$L \subseteq J^{*}$ for~$J^{*} \in \Kemeny(\prof{J}|_{\Phi''})$.
(Again, obviously, if the external agent wishes to succeed, they need to
choose some~$\Phi''$ with~$L \subseteq \Phi''$.)
The question is whether the external agent can succeed in this.
Again, since the Kemeny judgment aggregation procedure is irresolute,
we can formulate two decision problems.

\probdef{
  \CautiousExactControlByDeletingIssues{\Kemeny}
  
  \emph{Instance:} An agenda~$\Phi$ with an
    integrity constraint~$\Gamma$,
    an agenda~$\Phi' \subseteq \Phi$,
    a set~$L \subseteq \Phi$
    and a profile~$\prof{J} \in \JJJ(\Phi,\Gamma)^{+}$.
  
  \emph{Question:} Is there an agenda~$\Phi' \subseteq \Phi'' \subseteq \Phi$
    such that for \textbf{all}~$J^{*} \in \Kemeny(\prof{J}|_{\Phi'})$ it holds
    that~$L \subseteq J^{*}$?
}

\probdef{
  \BraveExactControlByDeletingIssues{\Kemeny}
  
  \emph{Instance:} An agenda~$\Phi$ with an
    integrity constraint~$\Gamma$,
    an agenda~$\Phi' \subseteq \Phi$,
    a set~$L \subseteq \Phi$
    and a profile~$\prof{J} \in \JJJ(\Phi,\Gamma)^{+}$.
  
  \emph{Question:} Is there an agenda~$\Phi' \subseteq \Phi'' \subseteq \Phi$
    such that for \textbf{some}~$J^{*} \in \Kemeny(\prof{J}|_{\Phi'})$ it holds
    that~$L \subseteq J^{*}$?
}

Notice that the formulations of the decision problems
\CautiousExactControlByAddingIssues{\Kemeny}
and \CautiousExactControlByDeletingIssues{\Kemeny}
are identical.
However, the two different control scenarios that underlie these
problems yield different restrictions of the problems that are
interesting to investigate.
For example, on the one hand,
for the setting of control by deleting issues,
it might often be the case that~$\Phi' = \emptyset$---that is, the
external agent has full control on what issues to delete.
On the other hand, for the setting of control by adding issues,
it is reasonable to assume that~$\Phi' \neq \emptyset$---%
that is, the existing judgment aggregation scenario is not
a trivial, vacuous one.

Unlike for the scenarios of manipulation and bribery,
in this scenario of control we only formulated exact variants of the problem,
where the objective is to obtain outcomes that include a given set~$L$
of conclusions, rather than obtaining outcomes that are preferable
according to a preference relation based on a weighted Hamming distance.
This is due to the fact that in this control scenario, the agenda is not fixed.
As a result, it is unclear what the agenda should be on which the
weighted Hamming distance preferences are based.

\subsection{Complexity Results}

In this section, we prove the following result.

\begin{theorem}
\label{thm:control-theorem}
The following problems are \SigmaP{2}-complete:
\begin{itemize}
  \item \CautiousExactControlByAddingIssues{\Kemeny},
  \item \BraveExactControlByAddingIssues{\Kemeny},
  \item \CautiousExactControlByDeletingIssues{\Kemeny}, and
  \item \BraveExactControlByDeletingIssues{\Kemeny}.
\end{itemize}
Moreover, \SigmaP{2}-hardness holds even for the case
where~$\Phi' = \emptyset$.
\end{theorem}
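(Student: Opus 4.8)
Since the decision problems \CautiousExactControlByAddingIssues{\Kemeny} and \CautiousExactControlByDeletingIssues{\Kemeny}---and likewise the two brave variants---have identical formal definitions, it suffices to treat one cautious and one brave problem, and the adding/deleting distinction plays no role. For membership in \SigmaP{2}, I would give a nondeterministic polynomial-time algorithm with an \NP{} oracle, in the style of Proposition~\ref{prop:bribery-cautious-membership}. The algorithm first guesses a sub-agenda~$\Phi''$ with~$\Phi' \subseteq \Phi'' \subseteq \Phi$ (a guess of size~$O(\size{\Phi})$) and rejects unless~$L \subseteq \Phi''$, which is necessary since every outcome satisfies~$J^{*} \subseteq \Phi''$. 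It then computes, by binary search using~$O(\log \size{\Phi})$ oracle queries, the minimum cumulative Hamming distance~$d^{\mtext{win}}$ from~$\prof{J}|_{\Phi''}$ to a complete and~$\Gamma$-consistent judgment set for~$\Phi''$. For the cautious variant it accepts iff a single \NP{} query certifies that there is \emph{no} complete and~$\Gamma$-consistent judgment set~$J^{*}$ for~$\Phi''$ with cumulative distance~$d^{\mtext{win}}$ and~$L \not\subseteq J^{*}$; for the brave variant it instead guesses such a winner~$J^{*}$ together with a witnessing assignment and accepts iff~$L \subseteq J^{*}$. Both algorithms clearly run in nondeterministic polynomial time with an \NP{} oracle.

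For hardness I would reduce from the truth problem for formulas~$\exists x_1 \dotsc x_n \forall y_1 \dotsc y_m.\ \psi$, adapting the construction of Proposition~\ref{prop:bribery-cautious-hardness} so that the agent's \emph{choice of~$\Phi''$} plays the role of the briber's single edited judgment set. I set~$\Phi' = \emptyset$ and introduce, besides the structural variables of that proof, one \emph{selector} issue~$s_i$ for each~$i \in [n]$, with the intended correspondence~$\alpha(x_i) = 1$ iff~$s_i \in \Phi''$. The set~$L$ is chosen to contain the target conclusion together with the values of the type-(3) winner on every non-selector issue; this pins all structural issues into~$\Phi''$ (since~$L \subseteq J^{*} \subseteq \Phi''$), so that the only genuine freedom left to the agent is which selectors to keep, i.e.\ exactly a choice of~$\alpha \in \BB^{n}$. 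The remaining conjuncts of~$\Gamma$ reproduce the four-condition structure of the bribery proof, including the~$u_{i,\ell}$-blocking gadget, so that a type-(3) outcome containing~$L$ competes only against type-(2) outcomes that encode~$\neg\psi$ and become strictly closer precisely when some~$\beta$ falsifies~$\psi[\alpha \cup \beta]$. As in Corollaries~\ref{cor:manipulation-others-hardness} and~\ref{cor:bribery-others-hardness}, in the yes-case the unique Kemeny outcome is the type-(3) set containing~$L$, whereas in the no-case every selection admits a type-(2) winner omitting~$L$; since the only~$L$-containing outcome is that type-(3) set, the cautious and brave variants coincide and hardness follows for all four problems simultaneously.

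The main obstacle is designing the \textbf{selector gadget} so that the presence or absence of~$s_i$ really forces the Kemeny value of~$x_i$ to~$\alpha(x_i)$ in both directions. The difficulty is asymmetric: when~$s_i \notin \Phi''$ the variable~$s_i$ leaves the outcome and can be set freely for~$\Gamma$-consistency, so a bare implication~$s_i \rightarrow x_i$ imposes nothing on~$x_i$. I would resolve this by coupling~$s_i$ to~$x_i$ in~$\Gamma$ (e.g.\ through~$s_i \leftrightarrow x_i$), giving~$s_i$ a strong profile majority for the value~$1$ while giving~$x_i$ only a \emph{slight} built-in majority for the value~$0$: including~$s_i$ then makes~$x_i = 1$ strictly cheaper, while excluding~$s_i$ leaves the small bias to force~$x_i = 0$. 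Within any fixed~$\Phi''$ this bias and the selector costs contribute equally to the type-(2) and type-(3) candidates, so they cancel in the decisive comparison, which remains a difference of exactly one as in the bribery proof. The accompanying routine-but-delicate work is the exact cumulative-distance bookkeeping and the case analysis verifying that no \emph{non-canonical}~$\Phi''$ (for instance one dropping a structural issue or an escape variable~$u_{i,\ell}$) can ever yield a winner containing~$L$, so that every successful selection does encode a well-defined assignment~$\alpha$.
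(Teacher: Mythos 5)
Your membership argument coincides with the paper's (Proposition~\ref{prop:control-addition-cautious-membership}, Corollaries~\ref{cor:control-addition-brave-membership} and~\ref{cor:control-deletion-membership}): guess~$\Phi''$, binary-search the winning distance with an \NP{} oracle, and settle the cautious variant with one further universal-flavoured query or the brave variant with one further guess; like the paper, you also exploit that the adding and deleting variants have identical formal definitions. Your hardness proof, however, takes a genuinely different route. The paper (Proposition~\ref{prop:control-addition-cautious-hardness}) makes the addable issues themselves the assignment variables: $\Phi'$ is the fixed $y/t/a/b/u$ part of the agenda, the agent chooses which of the paired issues~$x_i,x'_i$ to add, the profile pins every added issue to~$1$, and~$L = \{b\}$ is a singleton; the $\Phi' = \emptyset$ clause is then obtained afterwards (Corollary~\ref{cor:control-deletion-hardness}) by arguing that non-canonical agendas never succeed. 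You instead work with~$\Phi' = \emptyset$ natively, encode~$\alpha(x_i)$ by the presence of a dedicated selector issue~$s_i$ coupled to~$x_i$ by a biconditional in~$\Gamma$ together with opposing profile biases, and use a large~$L$ to force the remaining structural issues into~$\Phi''$. Each route buys something: the paper gets~$\Card{L}=1$ and a more natural control-by-adding instance, while you get the theorem's $\Phi' = \emptyset$ clause directly; moreover, your gadget is more robust on the one genuinely subtle point of this setting, namely that variables left outside~$\Phi''$ are only \emph{existentially} quantified in the $\Gamma$-consistency check. The paper's constraint detects a proper selection through the antecedent~$\bigwedge_{i}(x_i \oplus x'_i)$, but an outcome can dodge that antecedent by setting the un-added partner variable equal to the added one, so that the cheap $a$-outcomes remain consistent; your coupling never relates two unpinned variables (it relates a pinned selector to a pinned or profile-biased issue), which is exactly what closes this leak. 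Two repairs are needed in your write-up, though. First, $L$ cannot literally contain ``the type-(3) winner's values on every non-selector issue'': its values on the~$x_i$ (and~$x'_i$) issues depend on~$\alpha$, so~$L$ must pin only the $\alpha$-independent issues (the $z$, $y$, $y'$, $t$, $a$, $b_j$ and $u$-block issues) and leave the assignment issues to the bias mechanism, as your own gadget discussion in fact presupposes. Second, the ``strong'' majority on~$s_i$ and the ``slight'' bias on~$x_i$ must be implemented with duplicated issues (as the paper does with the~$b_j$ and the~$u$-blocks), since a three-voter profile by itself only yields margins of at most~$3$; with those margins chosen so that flipping any selector or assignment issue costs strictly more than the decisive gap of~$1$, your case analysis goes through.
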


This result follows from Propositions~\ref{prop:control-addition-cautious-membership}
and~\ref{prop:control-addition-cautious-hardness}
and Corollaries~\ref{cor:control-addition-brave-membership}
and~\ref{cor:control-addition-brave-hardness}--\ref{cor:control-deletion-hardness},
that we establish below.

\begin{proposition}
\label{prop:control-addition-cautious-membership}
\CautiousExactControlByAddingIssues{\Kemeny}
is in \SigmaP{2}.
\end{proposition}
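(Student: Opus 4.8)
The plan is to describe a nondeterministic polynomial-time algorithm with access to an \NP{} oracle, following closely the structure of the algorithm in the proof of Proposition~\ref{prop:manipulation-cautious-membership}. Let $(\Phi,\Gamma,\Phi',L,\prof{J})$ specify an instance of \CautiousExactControlByAddingIssues{\Kemeny}. The essential idea is that the existential guess now ranges over the choice of enlarged agenda~$\Phi''$ (taking the role that the insincere judgment set~$J'$ played in the manipulation setting), after which the ``for all Kemeny winners'' condition can be verified using polynomially many \NP{} oracle queries.

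First I would nondeterministically guess a set~$\Phi''$ with~$\Phi' \subseteq \Phi'' \subseteq \Phi$, and verify in polynomial time that~$\Phi''$ is a valid agenda (that is, closed under complementation) and that~$L \subseteq \Phi''$; if~$L \not\subseteq \Phi''$, then no~$J^{*} \subseteq \Phi''$ can satisfy~$L \subseteq J^{*}$, so this branch rejects. Once~$\Phi''$ is fixed, the restricted profile~$\prof{J}|_{\Phi''}$ is computable in polynomial time. Next I would determine the minimum cumulative (unweighted) Hamming distance~$d^{\mtext{win}}$ from~$\prof{J}|_{\Phi''}$ to any complete and $\Gamma$-consistent judgment set over~$\Phi''$. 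As in the first step of the proof of Proposition~\ref{prop:manipulation-cautious-membership}, this is done by binary search using~$O(\log n)$ \NP{} oracle queries, where each query asks whether there is a complete and $\Gamma$-consistent judgment set over~$\Phi''$ at cumulative distance at most a given bound. Such a query lies in \NP{}, since one can guess the judgment set together with a truth assignment to~$\Var{\Phi,\Gamma}$ witnessing $\Gamma$-consistency and then verify the distance bound.

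The final step verifies the universal condition over Kemeny winners. The condition ``for all~$J^{*} \in \Kemeny(\prof{J}|_{\Phi''})$ it holds that~$L \subseteq J^{*}$'' is equivalent to the \emph{nonexistence} of a complete and $\Gamma$-consistent judgment set~$J^{*}$ over~$\Phi''$ with cumulative Hamming distance exactly~$d^{\mtext{win}}$ (that is, $J^{*} \in \Kemeny(\prof{J}|_{\Phi''})$) for which~$L \not\subseteq J^{*}$. This existence question is itself in \NP{}: one guesses~$J^{*}$, a witnessing truth assignment, and a formula~$\varphi \in L \setminus J^{*}$, and verifies that the distance to~$\prof{J}|_{\Phi''}$ equals~$d^{\mtext{win}}$. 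Hence a single oracle query decides it; the algorithm rejects the current branch if such a ``bad'' winner exists, and accepts otherwise.

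It is routine to confirm that the whole procedure runs in nondeterministic polynomial time with \NP{} oracle access, so the problem lies in \SigmaP{2}. The point requiring the most care is that fixing~$\Phi''$ by the nondeterministic guess collapses the two-level quantifier structure to a single level: precomputing the optimal value~$d^{\mtext{win}}$ turns the membership test ``$J^{*}$ is a Kemeny winner'' into the polynomial-time-checkable equality ``distance~$= d^{\mtext{win}}$'', so that the universal quantifier over winners reduces to one \coNP{} check, realized by negating a single \NP{} oracle query. I expect no genuine obstacle beyond confirming that each oracle query is indeed an \NP{} predicate, in particular that the interaction between the restriction to~$\Phi''$ and the integrity constraint~$\Gamma$ (whose variables may lie outside~$\Phi''$) is handled correctly by guessing a full truth assignment to~$\Var{\Phi,\Gamma}$.
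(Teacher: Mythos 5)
Your proposal is correct and follows essentially the same approach as the paper's proof: nondeterministically guess the enlarged agenda~$\Phi''$, compute the optimal Kemeny distance~$d^{\mtext{win}}$ for~$\prof{J}|_{\Phi''}$ using \NP{} oracle queries, and then use one final oracle query to check for the existence of a Kemeny winner~$J^{*}$ with~$L \not\subseteq J^{*}$, accepting iff none exists. The extra details you supply (verifying closure under complementation, the early rejection when~$L \not\subseteq \Phi''$, the binary search, and guessing a full assignment to~$\Var{\Phi,\Gamma}$ to handle variables outside~$\Phi''$) are all sound refinements of the same argument.
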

\begin{proof}
We describe a nondeterministic polynomial-time
algorithm with access to an \NP{} oracle
that solves the problem.
Let~$(\Phi,\Gamma,\Phi',L,\prof{J})$ specify an
instance of \CautiousExactControlByAddingIssues{\Kemeny}.
The algorithm guesses an agenda~$\Phi''$ such
that~$\Phi' \subseteq \Phi'' \subseteq \Phi$.
Then, the algorithm computes the minimum unweighted Hamming
distance~$d^{\mtext{win}}$ from any judgment set that is $\Gamma$-consistent
and complete for~$\Phi''$ to the profile~$\prof{J}|_{\Phi''}$.
This can be done in polynomial time using an \NP{} oracle.
Finally, the algorithm uses one more query to the \NP{} oracle to decide
if there exists a judgment set~$J^{*}$ that is $\Gamma$-consistent and
complete for~$\Phi''$ that has Hamming distance~$d^{\mtext{win}}$
to the profile~$\prof{J}|_{\Phi''}$ and that satisfies that~$L \not\subseteq J^{*}$.
The algorithm accepts if and only if no such judgment set~$J^{*}$ exists.

It is straightforward to verify that the algorithm runs in nondeterministic
polynomial time.
Moreover, the algorithm accepts the input (for some sequence of
nondeterministic choices) if and only if
there exists an agenda~$\Phi' \subseteq \Phi'' \subseteq \Phi$
such that for all~$J^{*} \in \Kemeny(\prof{J}|_{\Phi''})$ it holds
that~$L \subseteq J^{*}$.
\end{proof}

\begin{corollary}
\label{cor:control-addition-brave-membership}
\BraveExactControlByAddingIssues{\Kemeny}
is in \SigmaP{2}.
\end{corollary}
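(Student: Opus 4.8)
The plan is to adapt the nondeterministic polynomial-time algorithm with an \NP{} oracle from the proof of Proposition~\ref{prop:control-addition-cautious-membership}, changing only how the quantifier over Kemeny outcomes is handled. The instance again consists of~$(\Phi,\Gamma,\Phi',L,\prof{J})$, and as before the algorithm first guesses an agenda~$\Phi''$ with~$\Phi' \subseteq \Phi'' \subseteq \Phi$ and then computes, in deterministic polynomial time with the help of the \NP{} oracle, the minimum unweighted Hamming distance~$d^{\mtext{win}}$ from any $\Gamma$-consistent judgment set that is complete for~$\Phi''$ to the restricted profile~$\prof{J}|_{\Phi''}$.

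The only difference lies in the final step. For the cautious variant one checks (via a single \NP{} query) whether there is a \emph{counterexample}, i.e.\ a winning judgment set that fails to include~$L$, and accepts if and only if none exists; this realizes the universal quantifier ``for \textbf{all}~$J^{*}$''. For the brave variant the quantifier over outcomes is existential (``for \textbf{some}~$J^{*}$''), so I would instead verify directly that some winning judgment set includes~$L$. Concretely, the algorithm issues one \NP{} query asking whether there exists a judgment set~$J^{*}$ that is $\Gamma$-consistent and complete for~$\Phi''$, that satisfies~$d(J^{*},\prof{J}|_{\Phi''}) = d^{\mtext{win}}$ (so that~$J^{*} \in \Kemeny(\prof{J}|_{\Phi''})$), and that satisfies~$L \subseteq J^{*}$; the algorithm accepts if and only if such a~$J^{*}$ exists. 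Equivalently, since the whole computation already runs on a nondeterministic machine, one may simply guess~$J^{*}$ together with a witnessing truth assignment alongside~$\Phi''$ and verify these three conditions deterministically after~$d^{\mtext{win}}$ has been computed.

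Both choices---the agenda~$\Phi''$ and the witnessing outcome~$J^{*}$---are existential, so they fit naturally into the guessing phase of the~$\SigmaP{2}$ machine, and the single remaining subtlety is the confirmation that the guessed~$J^{*}$ is genuinely a Kemeny winner. This is precisely what pinning~$d(J^{*},\prof{J}|_{\Phi''})$ to the previously computed minimum~$d^{\mtext{win}}$ accomplishes, so I do not expect a real obstacle here; the correctness and the nondeterministic-polynomial-time bound then follow exactly as in the proof of Proposition~\ref{prop:control-addition-cautious-membership}.
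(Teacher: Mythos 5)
Your proposal is correct and is precisely the ``straightforward modification'' that the paper's one-sentence proof alludes to: keep the guess of~$\Phi''$ and the oracle-assisted computation of~$d^{\mtext{win}}$, and replace the final universal check (no winner omitting~$L$) by an existential one (some winner containing~$L$), realized either as a single \NP{} query or by guessing~$J^{*}$ with a witnessing assignment on the nondeterministic machine itself. Both of your variants are valid implementations on a \SigmaP{2} machine, so there is nothing to add.
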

\begin{proof}
The algorithm described in the proof of
Proposition~\ref{prop:control-addition-cautious-membership}
can straightforwardly be modified to form
a nondeterministic polynomial-time algorithm with access to an \NP{} oracle
that solves \BraveExactControlByAddingIssues{\Kemeny}.
\end{proof}

\begin{proposition}
\label{prop:control-addition-cautious-hardness}
\CautiousExactControlByAddingIssues{\Kemeny}
is \SigmaP{2}-hard.
\end{proposition}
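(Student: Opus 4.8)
The plan is to show \SigmaP{2}-hardness by a polynomial reduction from \QSat{2}, i.e.\ from deciding the truth of quantified Boolean formulas $\varphi = \exists x_1 \dotsc x_n. \forall y_1 \dotsc y_m. \psi$, mirroring the reductions of Proposition~\ref{prop:manipulation-cautious-hardness} and Proposition~\ref{prop:bribery-cautious-hardness}. The guiding idea is that the external agent's single control action --- the choice of an agenda $\Phi''$ with $\emptyset = \Phi' \subseteq \Phi'' \subseteq \Phi$ --- plays the role of the existential block $\exists x_1 \dotsc x_n$, whereas the quantification ``for \textbf{all} $J^{*} \in \Kemeny(\prof{J}|_{\Phi''})$'' together with the integrity constraint plays the role of $\forall y_1 \dotsc y_m. \psi$. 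Concretely, I would reuse most of the gadget of Proposition~\ref{prop:bribery-cautious-hardness}: paired variables $x_i,x'_i$ and $y_j,y'_j$, the toggle variables $z_i,t_j$, the escape disjunct $\bigvee_{i\in[3]}\bigwedge_{\ell\in[u]} u_{i,\ell}$ (whose sole purpose is to place the ``trivial'' judgment sets at a large, fixed Hamming distance so that they never win), and the branch of $\Gamma_0$ that forces $\neg\psi$ exactly when every $x_i\oplus x'_i$ and every $y_j\oplus y'_j$ holds. The target set would be a single conclusion $L=\SBs w \SEs$ for a dedicated issue $w$ present precisely in the ``good'' outcomes.

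The crucial new ingredient is that the agent cannot write judgment sets (as the manipulator and briber do), but can only toggle the presence of issues, and that $\Kemeny(\prof{J}|_{\Phi''})$ measures Hamming distance only over the issues actually placed in $\Phi''$. I would exploit this as follows: for each $i \in [n]$ the pre-agenda contains two issues on which the profile $\prof{J}$ is unanimous, so that adding one of them pins the corresponding literal in every Kemeny winner of $\prof{J}|_{\Phi''}$, whereas an issue left out of $\Phi''$ stays free in the witnessing assignment for $\Gamma$-consistency. The agent therefore commits to an assignment $\alpha : X \to \BB$ through its choice of which of the two issues to add for each $i$. The profile and the remaining conjuncts of $\Gamma_0$ are arranged (exactly as in the bribery construction) so that, once $\alpha$ is committed, the Kemeny winners of $\prof{J}|_{\Phi''}$ are precisely the judgment sets that set $x_i = \alpha(x_i)$, include the toggles $t_j$ and the issue $w$, and leave the $y$-issues undecided --- the analogue of the condition-(3) outcomes $J^{*}_{\alpha}$.

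For the forward direction, if some $\alpha$ makes $\forall y.\, \psi[\alpha\cup\beta]$ true, then the agent adds exactly the issues encoding $\alpha$ together with the unanimous background issues, and I would verify --- by the same Hamming-distance count as in Proposition~\ref{prop:bribery-cautious-hardness} --- that the only Kemeny winners are the $J^{*}_{\alpha}$-type sets, all of which contain $w$, so that $L \subseteq J^{*}$ holds for every winner. For the converse, from a successful $\Phi''$ I would read off the committed assignment $\alpha$; the candidate counterexample $J^{*}_{\beta}$ (an analogue of condition~(2) that omits $w$ and decides the $y$-issues) is consistent exactly when $\psi[\alpha\cup\beta]$ is false, and whenever it is consistent it is a Kemeny winner omitting $w$, contradicting success. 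I expect the main obstacle to be precisely this distance bookkeeping in the control setting: because distances are taken only over $\Phi''$ and because omitted variables are free in the $\Gamma$-consistency witness, I must design the profile and the gadget so that (i)~every Kemeny winner is forced to \emph{commit} a full assignment to $X$ (no $x_i$ may be left dangling and then set adversarially to fabricate a spurious closer outcome), and (ii)~adding or omitting any ``wrong'' issue cannot create an optimal outcome violating $L$. Once Proposition~\ref{prop:control-addition-cautious-hardness} is established, \SigmaP{2}-hardness of the Brave variant and of both deletion variants would follow as Corollaries~\ref{cor:control-addition-brave-hardness}--\ref{cor:control-deletion-hardness}, since the construction yields a single relevant Kemeny winner and since control by deleting issues with $\Phi'=\emptyset$ permits the same choice of $\Phi''$.
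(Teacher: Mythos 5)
Your plan follows the same route as the paper's proof: a reduction from $\exists\forall$\nobreakdash-QBF in which each existential variable is represented by a pair of issues on which the three-voter profile is unanimous, the controller commits to an assignment $\alpha$ by choosing which member of each pair to add to the agenda, a dedicated issue guarded by an exactly-one gadget (the paper's $b$ paired with $a$, your $w$) serves as the singleton set $L$, and counterexample outcomes encoding assignments $\beta$ to the universal variables are $\Gamma$-consistent precisely when $\psi[\alpha\cup\beta]$ is false. However, there is a genuine gap at exactly the step that carries the whole argument, and the concrete plan you do commit to --- reusing the gadget of Proposition~\ref{prop:bribery-cautious-hardness} essentially verbatim --- would fail. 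In the bribery reduction, every variable of the triggering premise $\bigwedge_{i\in[n]}(x_i\oplus x'_i)$ is an issue, so a briber-written judgment set genuinely pins the premise. In the control setting the omitted partner of each pair is, as you yourself observe, free in the witness for $\Gamma$-consistency; hence \emph{every} candidate outcome can void the premise by letting the omitted $x'_i$ equal the pinned $x_i$ in the witness. The branch of $\Gamma_0$ that would force $\neg\psi$ or force $w$ is then never triggered, so the cheap outcomes on the $a$-side of the exactly-one gadget (toggles $t_j$ affirmed, no $y$-issues decided, no $w$) remain consistent and strictly closer to the profile than any $w$-containing outcome. Control would then never succeed, and your reduction would map all instances, true or false, to ``no''. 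Declaring this ``the main obstacle'' that the gadget ``must be designed'' to overcome is not a proof: the mechanism that makes the issue-selection commitment binding is precisely the missing content. What is needed is an extra forcing device, e.g.\ mutual-exclusion conjuncts $\neg(x_i\wedge x'_i)$ in $\Gamma$ (together with constraints tying the tripled agenda copies to the variables of $\psi$): then unanimity pins the included partner to $1$, every witness must set the omitted partner to $0$, and the premise becomes unavoidable; moreover the tripling makes un-pinning cost more than it can gain, and agendas containing both or neither member of some pair can be shown never to put $b$ (resp.\ $w$) into all winners --- this is what the paper's claim that a successful $\Phi''$ ``must contain exactly one of $x_i$ and $x'_i$'' encapsulates.

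A second, smaller gap: you fix $\Phi'=\emptyset$ from the outset. The proposition does not require this, and it makes the converse direction substantially harder, since a successful $\Phi''$ might then also omit background issues (some $y_j$, $y'_j$, $t_j$, $a$, or the $u_{i,\ell}$), and you must exclude spurious successes arising from such omissions --- your condition~(ii), again flagged but never argued. The paper avoids this by placing the entire background into $\Phi'$ in Proposition~\ref{prop:control-addition-cautious-hardness} and only afterwards observing (Corollary~\ref{cor:control-deletion-hardness}) that, in this particular construction, the only agendas achieving $L\subseteq J^{*}$ for all winners are those of the intended form, so hardness persists for $\Phi'=\emptyset$. I recommend proving the statement in that order as well.
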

\begin{proof}
We show \SigmaP{2}-hardness by giving a reduction from the satisfiability
problem for quantified Boolean formulas of the form~$\exists x_1,\dotsc,x_n.
\forall y_1,\dotsc,y_m. \psi$.
Let~$\varphi = \exists x_1,\dotsc,x_n. \forall y_1,\dotsc,y_m. \psi$ be a
quantified Boolean formula.
Let~$X = \SBs x_1,\dotsc,x_n \SEs$ and~$Y = \SBs y_1,\dotsc,y_m \SEs$.
Moreover, assume without loss of generality that~$\psi$ is in 3DNF.
We construct an agenda~$\Phi$, an integrity constraint~$\Gamma$,
an agenda~$\Phi' \subseteq \Phi$,
a set~$L \subseteq \Phi'$
a profile~$\prof{J} \in \JJJ(\Phi,\Gamma)^{+}$ as follows.

We introduce fresh variables~$x_{i,1},x_{i,2},x_{i,3},x'_{i,1},x'_{i,2},x'_{i,3}$
for each~$i \in [n]$,
we consider the variables~$y_j$
and we introduce fresh variables~$y'_j,t_j$ for each~$j \in [m]$,
we introduce fresh variables~$a,b$,
and we introduce fresh variables~$u_{i,\ell}$
for~$i \in [3]$ and~$\ell \in [u]$, where~$u = 10n+10m+10$.
We then let~$\Pre{\Phi'} =
\SB y_j,y'_j,t_j \SM j \in [m] \SE \cup \SBs a,b \SEs \cup
\SB u_{i,\ell} \SM i \in [3], \ell \in [u] \SE$,
and we let~$\Pre{\Phi} = \Pre{\Phi'} \cup
\SB x_{i,j},x'_{i,j} \SM i \in [n], j \in [3] \SE$.
Moreover, we let~$L = \SBs b \SEs$.

We then define the integrity constraint~$\Gamma$ as follows.
We let
\[ \Gamma = \Gamma_0 \vee
\bigvee\limits_{i \in [3]} \bigwedge\limits_{\ell \in \mathrlap{[u]}} u_{i,\ell},\]
and we let
\begin{align*}
  \Gamma_0 =\ &
    (a \vee b) \wedge (\neg a \vee \neg b) \\
  \wedge\ &
    \left (\left ( \bigwedge\limits_{j \in [m]} (y_j \oplus y'_j) \right )
    \vee
    \left ( \bigwedge\limits_{j \in [m]} t_j \right ) \right ) \\
  \wedge\ &
    \left ( \left (
      \bigwedge\limits_{i \in [n]} (x_i \oplus x'_i)
    \right )
    \rightarrow \left (
      \left (
        \bigwedge\limits_{j \in [m]} (y_j \oplus y'_j) \wedge a \wedge \neg\psi
      \right ) \right . \right . \\
  & \hspace{14em} \left . \left .
      \vee
      \left (
        b \wedge \bigwedge\limits_{j \in [m]} (t_j \wedge \neg y_j \wedge \neg y'_j)
      \right )
    \right ) \right ).
\end{align*}
(Here~$\oplus$ denotes exclusive disjunction.)

We let~$\prof{J} = (J_1,J_2,J_3)$, where~$J_1,J_2,J_3$ are
defined as described in Table~\ref{table:control-addition-cautious-hardness-profile}.
In this table, the indices~$i,j,\ell$ range over all possible
values, and for each~$\varphi \in \Pre{\Phi}$ we write a~$0$
if~$\varphi \not\in J_i$ and a~$1$ if~$\varphi \in J_i$.

\begin{table}[ht!]
  \begin{center}
  \begin{tabular}{c | c @{\ \ } c @{\ \ } c @{\ \ } c @{\ \ } c @{\ \ } c @{\ \ } c @{\ \ } c @{\ \ } c @{\ \ } c @{\ \ } c}
    \toprule
    $\prof{J}$ & $x_{i,j}$ & $x'_{i,j}$ & $y_j$ & $y'_j$ & $t_j$ &
       $a$ & $b$ &
       $u_{1,\ell}$ & $u_{2,\ell}$ & $u_{3,\ell}$ \\
    \midrule
    $J_1$ & $1$ & $1$ & $0$ & $0$ & $0$ &
      $0$ & $0$ & $1$ & $0$ & $0$ \\
    $J_2$ & $1$ & $1$ & $0$ & $0$ & $0$ &
      $0$ & $0$ & $0$ & $1$ & $0$ \\
    $J_3$ & $1$ & $1$ & $0$ & $0$ & $0$ &
      $1$ & $0$ & $0$ & $0$ & $1$ \\
    \bottomrule
  \end{tabular}
  \end{center}
  \caption{The profile~$\prof{J} = (J_1,J_2,J_3)$ for the agenda~$\Phi$
    that we use in the
    proof of Proposition~\ref{prop:control-addition-cautious-hardness}.}
  \label{table:control-addition-cautious-hardness-profile}
\end{table}

We observe that~$\Kemeny(\prof{J}|_{\Phi'}) = \SBs J^{*}_{\mtext{old},0} \SEs
\cup \SB J^{*}_{\mtext{old},\beta} \SM \beta : Y \rightarrow \BB \SE$,
where the judgment sets~$J^{*}_{\mtext{old},0}$
and~$J^{*}_{\mtext{old},\beta}$ are defined as described
in Table~\ref{table:control-addition-cautious-hardness-old-winners}.
The judgment sets~$J^{*}_{\mtext{old},0}$ and~$J^{*}_{\mtext{old},\beta}$
are complete (for the agenda~$\Phi'$)
and $\Gamma$-consistent, and have a cumulative Hamming distance of~$3m+2+3u$
to the profile~$\prof{J}|_{\Phi'}$.
It is straightforward to verify that no complete and consistent judgment set
has a smaller cumulative Hamming distance to the profile~$\prof{J}|_{\Phi'}$.

\begin{table}[ht!]
  \begin{center}
  \begin{tabular}{c | c @{\ \ } c @{\ \ } c @{\ \ } c @{\ \ } c @{\ \ } c @{\ \ } c @{\ \ } c @{\ \ } c @{\ \ } c @{\ \ } c}
    \toprule
    $\Kemeny(\prof{J}|_{\Phi'})$ & $y_j$ & $y'_j$ & $t_j$ &
       $a$ & $b$ &
       $u_{1,\ell}$ & $u_{2,\ell}$ & $u_{3,\ell}$ \\
    \midrule
    $J^{*}_{\mtext{old},0}$ & $0$ & $0$ & $1$ &
      $1$ & $0$ & $0$ & $0$ & $0$ \\
    $J^{*}_{\mtext{old},\beta}$ & $\beta(y_j)$ & $1-\beta(y_j)$ & $0$ &
      $1$ & $0$ & $0$ & $0$ & $0$ \\
    \bottomrule
  \end{tabular}
  \end{center}
  \caption{The judgment set~$J^{*}_{\mtext{old}}$ that is used in the
    proof of Proposition~\ref{prop:control-addition-cautious-hardness}.}
  \label{table:control-addition-cautious-hardness-old-winners}
\end{table}

We argue that there is some truth assignment~$\alpha : X \rightarrow \BB{}$
such that for all truth assignments~$\beta : Y \rightarrow \BB{}$
it holds that~$\psi[\alpha \cup \beta]$ is true
if and only if
there is an agenda~$\Phi' \subseteq \Phi'' \subseteq \Phi$
such that for all~$J^{*} \in \Kemeny(\prof{J}|_{\Phi''})$ it holds
that~$L \subseteq J^{*}$.

$(\Rightarrow)$
Suppose that there exists some truth assignment~$\alpha : X \rightarrow \BB{}$
such that for all truth assignments~$\beta : Y \rightarrow \BB{}$
it holds that~$\psi[\alpha \cup \beta]$ is true.
Consider the agenda~$\Phi''$ that is defined by
letting~$\Pre{\Phi''} = \Pre{\Phi'} \cup \SB x_{i} \SM i \in [n], \alpha(x_i) = 1 \SE
\cup \SB x'_{i} \SM i \in [n], \alpha(x_i) = 0 \SE$.
We argue that~$\Kemeny(\prof{J}|_{\Phi''}) = \SBs J^{*}_{\alpha} \SEs$,
where~$J^{*}_{\alpha}$ is defined as described in
Table~\ref{table:control-addition-cautious-hardness-new-winner}.
The cumulative Hamming distance from~$J^{*}_{\alpha}$ to~$\prof{J}|_{\Phi''}$
is~$3m+3+3u$.
Any complete (for the agenda~$\Phi''$) and $\Gamma$-consistent judgment set
that would have less cumulative Hamming distance to~$\prof{J}|_{\Phi''}$
would be of the form~$J^{*}_{\beta}$, for some~$\beta : Y \rightarrow \BB$,
as described in
Table~\ref{table:control-addition-cautious-hardness-new-winner}.
Any such judgment set~$J^{*}_{\beta}$ would have a cumulative Hamming
distance of~$3m+2+3u$ to the profile~$\prof{J}|_{\Phi''}$.
However, since~$\psi[\alpha]$ is valid, we know that~$J^{*}_{\beta}$
is not $\Gamma$-consistent.
Therefore,~$\Kemeny(\prof{J}|_{\Phi''}) = \SBs J^{*}_{\alpha} \SEs$.
Moreover~$L \subseteq J^{*}_{\alpha}$.
Thus, we know that
there exists an agenda~$\Phi' \subseteq \Phi'' \subseteq \Phi$
such that for all~$J^{*} \in \Kemeny(\prof{J}|_{\Phi''})$ it holds
that~$L \subseteq J^{*}$.

\begin{table}[ht!]
  \begin{center}
  \begin{tabular}{c | c @{\ \ } c @{\ \ } c @{\ \ } c @{\ \ } c @{\ \ } c @{\ \ } c @{\ \ } c @{\ \ } c @{\ \ } c @{\ \ } c}
    \toprule
    & $x_{i,j}$ & $x'_{i,j}$ & $y_j$ & $y'_j$ & $t_j$ &
       $a$ & $b$ &
       $u_{1,\ell}$ & $u_{2,\ell}$ & $u_{3,\ell}$ \\
    \midrule
    $J^{*}_{\alpha}$ & $1$ & $1$ & $0$ & $0$ & $1$ &
      $0$ & $1$ & $0$ & $0$ & $0$ \\
    $J^{*}_{\beta}$ & $1$ & $1$ & $\beta(y_j)$ & $1-\beta(y_j)$ & $0$ &
      $1$ & $0$ & $0$ & $0$ & $0$ \\
    \bottomrule
  \end{tabular}
  \end{center}
  \caption{The judgment sets~$J^{*}_{\alpha},J^{*}_{\beta}$
    for the agenda~$\Phi''$ that is used in the
    proof of Proposition~\ref{prop:control-addition-cautious-hardness}.}
  \label{table:control-addition-cautious-hardness-new-winner}
\end{table}

$(\Leftarrow)$
Conversely, suppose that
there exists an agenda~$\Phi' \subseteq \Phi'' \subseteq \Phi$
such that for all~$J^{*} \in \Kemeny(\prof{J}|_{\Phi''})$ it holds
that~$L \subseteq J^{*}$.
By construction of~$\Gamma$, we know that~$\Phi''$ must
contain exactly one of~$x_i$ and~$x'_i$, for each~$i \in [n]$.
If this were not the case, one could take some
set~$J^{*} \in \Kemeny(\prof{J}|_{\Phi''})$
and construct the set~$J' = J^{*} \backslash \SBs b, \neg a \SEs
\cup \SBs a, \neg b \SEs$.
This judgment set~$J'$ has a strictly smaller cumulative Hamming
distance to the profile~$\prof{J}|_{\Phi''}$,
and is complete and $\Gamma$-consistent.
This is a contradiction with the fact
that~$J^{*} \in \Kemeny(\prof{J}|_{\Phi''})$.

Now consider the truth assignment~$\alpha : X \rightarrow \BB$
that is defined by letting~$\alpha(x_i) = 1$ if and only
if~$x_i \in \Phi''$.
We claim that~$\psi[\alpha]$ is valid, i.e., that for all truth
assignments~$\beta : Y \rightarrow \BB$ it holds that~$\psi[\alpha \cup \beta]$
is true.
Suppose that there exists a truth assignment~$\beta : Y \rightarrow \BB{}$
such that~$\psi[\alpha \cup \beta]$ is false.
Moreover, consider the judgment set~$J^{*}_{\beta}$
that is defined as described in
Table~\ref{table:control-addition-cautious-hardness-new-winner}.
This judgment set has a strictly smaller cumulative Hamming distance
to the profile~$\prof{J}|_{\Phi''}$ than any
set~$J^{*} \in \Kemeny(\prof{J}|_{\Phi''})$.
Moreover, since~$\psi[\alpha \cup \beta]$ is false, we know that~$J^{*}_{\beta}$
is $\Gamma$-consistent.
This is a contradiction, and thus we can conclude that there
exists no truth assignment~$\beta : Y \rightarrow \BB{}$
such that~$\psi[\alpha \cup \beta]$ is false.
In other words,~$\psi[\alpha]$ is valid.
\end{proof}

\begin{corollary}
\label{cor:control-addition-brave-hardness}
\BraveExactControlByAddingIssues{\Kemeny}
is \SigmaP{2}-hard.
\end{corollary}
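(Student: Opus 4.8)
The plan is to reuse verbatim the reduction constructed in the proof of Proposition~\ref{prop:control-addition-cautious-hardness}, and to argue that on the instances it produces the \textbf{cautious} and \textbf{brave} variants of the problem coincide. This mirrors the situation of Corollaries~\ref{cor:manipulation-others-hardness} and~\ref{cor:bribery-others-hardness}, where a single reduction sufficed for several outcome requirements because the relevant Kemeny outcomes were essentially forced. For the forward direction, I would point out that the argument in Proposition~\ref{prop:control-addition-cautious-hardness} already establishes that whenever $\psi$ is a yes-instance, the agenda $\Phi''$ built from the witnessing assignment $\alpha$ yields a \emph{singleton} outcome $\Kemeny(\prof{J}|_{\Phi''}) = \SBs J^{*}_{\alpha} \SEs$ with $L \subseteq J^{*}_{\alpha}$. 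Since this outcome is unique, ``for all~$J^{*}$'' and ``for some~$J^{*}$'' are equivalent, so the same $\Phi''$ witnesses the brave condition.

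For the backward direction, suppose the brave instance is a yes-instance: there is an agenda $\Phi' \subseteq \Phi'' \subseteq \Phi$ and \emph{some} $J^{*} \in \Kemeny(\prof{J}|_{\Phi''})$ with $L = \SBs b \SEs \subseteq J^{*}$. The key observation is that the backward argument of Proposition~\ref{prop:control-addition-cautious-hardness} never genuinely uses that \emph{all} winners contain $L$; it only needs one winner containing $b$. Concretely, since $\Gamma$ forces exactly one of $a$ and $b$ (via $(a \vee b) \wedge (\neg a \vee \neg b)$), the chosen winner satisfies $b \in J^{*}$ and $a \notin J^{*}$, which is exactly the configuration the cautious proof exploits: from this single $J^{*}$ one derives that $\Phi''$ must contain exactly one of $x_i$ and $x'_i$ for each $i \in [n]$ (otherwise the flipped judgment set $J' = J^{*} \setminus \SBs b, \neg a \SEs \cup \SBs a, \neg b \SEs$ would be $\Gamma$-consistent and strictly closer to $\prof{J}|_{\Phi''}$, contradicting $J^{*} \in \Kemeny(\prof{J}|_{\Phi''})$). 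One then sets $\alpha(x_i) = 1$ iff $x_i \in \Phi''$ and shows $\psi[\alpha]$ is valid exactly as before: if some $\beta$ made $\psi[\alpha \cup \beta]$ false, the judgment set $J^{*}_{\beta}$ of Table~\ref{table:control-addition-cautious-hardness-new-winner} would be $\Gamma$-consistent and strictly closer to the profile than \emph{any} Kemeny winner---a contradiction that is independent of which winners contain $L$.

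The main obstacle is purely one of bookkeeping: verifying that the step establishing ``$\Phi''$ contains exactly one of $x_i, x'_i$'', which in the cautious proof implicitly worked with a winner containing $b$, still goes through when only a \emph{single} such winner is guaranteed. Once we commit to the winner $J^{*}$ with $b \in J^{*}$ that the brave hypothesis supplies, the construction of $J'$ and the distance comparison are unchanged, and the validity argument via $J^{*}_{\beta}$ requires no $L$-containment at all. Hence the reduction of Proposition~\ref{prop:control-addition-cautious-hardness} also establishes that \BraveExactControlByAddingIssues{\Kemeny} is \SigmaP{2}-hard, already for $\Phi' = \emptyset$.
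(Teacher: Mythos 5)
Your proposal is correct and takes essentially the same approach as the paper: the paper's own proof likewise reuses the reduction of Proposition~\ref{prop:control-addition-cautious-hardness} unchanged and observes that on its instances the Kemeny outcomes are homogeneous with respect to containing~$L$ (either all winners contain~$L$ or none do), so the cautious and brave questions coincide. Your write-up merely makes this homogeneity explicit---the singleton outcome $\SBs J^{*}_{\alpha} \SEs$ in the forward direction, and the observation that the backward argument (the flip~$J'$ and the counterexample~$J^{*}_{\beta}$) only ever needs a \emph{single} winner containing~$b$---which is exactly the content the paper compresses into its ``all or none'' remark.
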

\begin{proof}
\SigmaP{2}-hardness for the problem
\BraveExactControlByAddingIssues{\Kemeny},
follows from the proof of Proposition~\ref{prop:control-addition-cautious-hardness}.
In this proof, either
all Kemeny outcomes for the profile~$\prof{J}|_{\Phi''}$
include the set~$L$ or
no Kemeny outcomes for the profile~$\prof{J}|_{\Phi''}$
include the set~$L$.
Therefore, the reduction can also be used to show \SigmaP{2}-hardness
for \BraveExactControlByAddingIssues{\Kemeny}.
\end{proof}

\begin{corollary}
\label{cor:control-deletion-membership}
\CautiousExactControlByDeletingIssues{\Kemeny}
and \BraveExactControlByDeletingIssues{\Kemeny}
are in \SigmaP{2}.
\end{corollary}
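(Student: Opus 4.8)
The plan is to observe that, as abstract decision problems, \CautiousExactControlByDeletingIssues{\Kemeny} and \BraveExactControlByDeletingIssues{\Kemeny} have exactly the same instances and exactly the same yes/no question as \CautiousExactControlByAddingIssues{\Kemeny} and \BraveExactControlByAddingIssues{\Kemeny}, respectively: an instance is a tuple~$(\Phi,\Gamma,\Phi',L,\prof{J})$, and the question is whether there is an agenda~$\Phi' \subseteq \Phi'' \subseteq \Phi$ such that for all (respectively, for some)~$J^{*} \in \Kemeny(\prof{J}|_{\Phi''})$ it holds that~$L \subseteq J^{*}$. The two control scenarios differ only in their intended interpretation (and in which restricted variants one subsequently chooses to study), not in the underlying language. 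Hence membership in~\SigmaP{2} should follow immediately from the membership results already established for control by adding issues, and the task reduces to checking that the algorithms from Proposition~\ref{prop:control-addition-cautious-membership} and Corollary~\ref{cor:control-addition-brave-membership} apply verbatim.

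Concretely, for the cautious variant I would reuse the nondeterministic polynomial-time algorithm of Proposition~\ref{prop:control-addition-cautious-membership}: nondeterministically guess a subagenda~$\Phi''$ with~$\Phi' \subseteq \Phi'' \subseteq \Phi$; then, using an \NP{} oracle, compute the minimum cumulative Hamming distance~$d^{\mtext{win}}$ from the restricted profile~$\prof{J}|_{\Phi''}$ to any~$\Gamma$-consistent judgment set that is complete for~$\Phi''$ (for instance by binary search with~$O(\log n)$ oracle calls, each query asking whether some such judgment set achieves distance below a given threshold); and finally issue one more \NP{} query asking whether there exists a~$\Gamma$-consistent judgment set~$J^{*}$, complete for~$\Phi''$, with distance exactly~$d^{\mtext{win}}$ to~$\prof{J}|_{\Phi''}$ and with~$L \not\subseteq J^{*}$. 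The algorithm accepts if and only if no such~$J^{*}$ exists, i.e.\ if and only if every Kemeny outcome for~$\prof{J}|_{\Phi''}$ includes~$L$. For the brave variant, the final query is instead flipped to ask for the \emph{existence} of some~$\Gamma$-consistent, $\Phi''$-complete Kemeny outcome~$J^{*}$ (distance~$d^{\mtext{win}}$) with~$L \subseteq J^{*}$, exactly as in Corollary~\ref{cor:control-addition-brave-membership}, and the algorithm accepts if and only if such a~$J^{*}$ exists.

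Since there is no genuine difference between the problems, I expect no real obstacle. The only point that genuinely needs checking is that each oracle query remains an \NP{} question once the guessed subagenda~$\Phi''$ is treated as part of the certificate: deciding whether there is a judgment set that is complete for~$\Phi''$, that is~$\Gamma$-consistent (i.e.\ jointly satisfiable with~$\Gamma$ by some truth assignment), and that attains a prescribed cumulative Hamming distance to~$\prof{J}|_{\Phi''}$ is an existential statement over polynomially many bits whose verification is polynomial-time, hence it lies in~\NP{}; and this is insensitive to whether~$\Phi''$ arises by deleting issues from~$\Phi$ or by adding issues to~$\Phi'$. Consequently the whole computation runs in nondeterministic polynomial time with an~\NP{} oracle, which places both \CautiousExactControlByDeletingIssues{\Kemeny} and \BraveExactControlByDeletingIssues{\Kemeny} in~\SigmaP{2}.
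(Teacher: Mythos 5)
Your proposal is correct and follows essentially the same route as the paper: the paper's own proof of this corollary simply reuses (with trivial modification) the algorithm from Proposition~\ref{prop:control-addition-cautious-membership} and its brave counterpart, which is exactly what you do, and your opening observation that the two control problems have identical formulations as decision problems is one the paper itself makes explicitly just before its complexity results. Nothing further is needed.
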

\begin{proof}
The algorithm described in the proof of
Proposition~\ref{prop:control-addition-cautious-membership}
can straightforwardly be modified to form nondeterministic
polynomial-time algorithms with access to an \NP{} oracle
that solve \CautiousExactControlByDeletingIssues{\Kemeny}
and \BraveExactControlByDeletingIssues{\Kemeny}.
\end{proof}

\begin{corollary}
\label{cor:control-deletion-hardness}
\CautiousExactControlByDeletingIssues{\Kemeny}
and \BraveExactControlByDeletingIssues{\Kemeny}
are \SigmaP{2}-hard,
even for the case where~$\Phi' = \emptyset$.
\end{corollary}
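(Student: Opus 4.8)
The plan is to obtain \SigmaP{2}-hardness for both deletion problems by reusing the reduction from $\exists x_1,\dotsc,x_n.\ \forall y_1,\dotsc,y_m.\ \psi$ constructed in the proof of Proposition~\ref{prop:control-addition-cautious-hardness}, exploiting the fact (noted in the text above) that the decision problems for control by adding and by deleting issues have an identical formulation. Read as a deletion instance, that reduction already witnesses hardness, but only with the nonempty set $\Phi'$ of undeletable issues inherited from the addition construction. The genuinely new content of this corollary is therefore to arrange matters so that the choice $\Phi' = \emptyset$ still yields a correct reduction, i.e.\ so that granting the external agent full freedom to delete \emph{any} issues does not create spurious winning strategies.

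Concretely, I would keep the same agenda~$\Phi$, integrity constraint~$\Gamma$, profile~$\prof{J}$, and target~$L = \SBs b \SEs$ as in Proposition~\ref{prop:control-addition-cautious-hardness}, but now set~$\Phi' = \emptyset$, so that an admissible restriction is any~$\Phi'' \subseteq \Phi$. The intended correspondence is unchanged: the issues of the form~$x_{i,j}$ and~$x'_{i,j}$ that survive deletion encode a truth assignment~$\alpha : X \rightarrow \BB$, while the issues~$y_j, y'_j, t_j, a, b$ together with the three large~$u$-blocks are meant to remain present and to enforce the universal quantifier over~$Y$ and the consistency escape exactly as before. The forward direction~($\Rightarrow$) is then immediate: if~$\psi[\alpha]$ is valid for some~$\alpha$, the very agenda~$\Phi''$ used in the addition proof (which keeps all core issues and exactly the~$x$-issues selected by~$\alpha$) is still admissible once~$\Phi' = \emptyset$, and the distance computation establishing~$\Kemeny(\prof{J}|_{\Phi''}) = \SBs J^{*}_{\alpha} \SEs$ with~$L \subseteq J^{*}_{\alpha}$ carries over verbatim.

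The hard part will be the converse direction~($\Leftarrow$): I must show that allowing the agent to delete core issues never turns a ``no'' instance into a ``yes'' instance, i.e.\ that every successful~$\Phi''$ still reads off a valid~$\alpha$. The requirement~$L = \SBs b \SEs \subseteq J^{*}$ already forces~$b \in \Phi''$, so the only extra freedom compared with the addition proof is the deletion of issues among~$y_j, y'_j, t_j, a$ and the~$u$-blocks. My plan is to extend the case analysis of Proposition~\ref{prop:control-addition-cautious-hardness} over these deletions and argue that none of them is profitable: the~$u$-blocks (each of width~$u = 10n+10m+10$ and anchored in a distinct judgment set of~$\prof{J}$) are there precisely to force every Kemeny winner to satisfy~$\Gamma_0$ rather than the escape clause~$\bigvee_i \bigwedge_\ell u_{i,\ell}$, and I would verify that this forcing survives deleting~$u$-issues; and for any dropped core issue the corresponding free variable in the~$\Gamma$-consistency check can only be completed in the way already reflected in the winners~$J^{*}_{\mtext{old},0}$ and~$J^{*}_{\mtext{old},\beta}$, so deleting it neither removes the universal constraint imposed by the~$y$-issues nor alters which~$\alpha$ admit a falsifying~$\beta$. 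Having pinned~$\alpha$ down, I then replay the counterexample argument: were some~$\beta$ to falsify~$\psi[\alpha \cup \beta]$, the judgment set~$J^{*}_{\beta}$ would be~$\Gamma$-consistent and strictly closer to~$\prof{J}|_{\Phi''}$, contradicting~$b \in J^{*}$.

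Finally, as in Corollary~\ref{cor:control-addition-brave-hardness}, the reduction is all-or-nothing on~$L$: for each admissible~$\Phi''$, either every set in~$\Kemeny(\prof{J}|_{\Phi''})$ contains~$b$ or none does. Hence a single family of instances simultaneously witnesses \SigmaP{2}-hardness of \CautiousExactControlByDeletingIssues{\Kemeny} and \BraveExactControlByDeletingIssues{\Kemeny}, already for~$\Phi' = \emptyset$. The main obstacle to watch is precisely the extended~($\Leftarrow$) case analysis—confirming that turning core issues into free~$\Gamma$-variables, and deleting~$u$-issues, genuinely never lets the agent evade the universal quantifier; if that direct argument proves delicate, the fallback is to make the core issues de facto undeletable by including sufficiently many protected duplicates so that any deletion of them strictly worsens the relevant cumulative Hamming distances.
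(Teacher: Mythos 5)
Your strategy coincides with the paper's own proof of this corollary: reuse the reduction from Proposition~\ref{prop:control-addition-cautious-hardness} with~$\Phi' = \emptyset$, note that the forward direction carries over verbatim, and argue that no deletion of core issues can manufacture a success (the paper's proof is literally the one-sentence assertion that ``the only way to achieve Kemeny outcomes~$J^{*}$ with~$L \subseteq J^{*}$ is to choose agendas~$\Phi''$ of the form described in the proof''). The problem is that this assertion is exactly what needs proving, and your proposal defers it (``I would verify\dots'', ``my plan is to extend the case analysis\dots'') while the substitute claims you do make are not sound. Your treatment of the~$u$-blocks is fine: deleting~$u$-issues either leaves the forcing intact or hands the election to majority-style outcomes with~$b = 0$, so it never helps the controller. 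But the claim that deleting a core issue ``neither removes the universal constraint imposed by the~$y$-issues nor alters which~$\alpha$ admit a falsifying~$\beta$'' only addresses~$\Gamma$-consistency; it ignores that deletion changes the \emph{distance accounting}, and that is where the reduction actually breaks.

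Concretely, deleting the entire block~$t_1,\dotsc,t_m$ is profitable for the controller. In the inherited construction, the desired outcome~$J^{*}_{\alpha}$ (with~$b$, all~$t_j$, and~$y_j = y'_j = 0$) pays~$3m$ on the~$t$-block, while each counterexample outcome~$J^{*}_{\beta}$ (with~$a$ and the pattern~$y_j \oplus y'_j$) pays~$3m$ on the~$y$-block, and~$J^{*}_{\beta}$ undercuts~$J^{*}_{\alpha}$ only by a small constant coming from the~$a/b$ columns; this razor-thin margin is the whole mechanism implementing the universal quantifier. Once the~$t_j$ are deleted, the restriction of~$J^{*}_{\alpha}$ costs only~$3u$ plus a constant (the~$t$-handicap disappears, since the now-free variables~$t_j$ can be set to~$1$ in the consistency check), whereas every~$J^{*}_{\beta}$ still costs~$3u + 3m + 2$; moreover, under any reading of~$\Gamma$ on which the forward direction of the Proposition works (i.e., on which the selected~$x$-issues force the antecedent of the final implication), the cheap pattern~$a = 1$,~$b = 0$,~$y_j = y'_j = 0$ remains~$\Gamma$-inconsistent on the reduced agenda. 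Hence for~$m \geq 1$ the restricted~$J^{*}_{\alpha}$ is the \emph{unique} Kemeny winner for every choice of one issue per pair~$x_i/x'_i$, irrespective of whether~$\psi[\alpha]$ is valid: false QBF instances map to yes-instances, so the inherited reduction is unsound for~$\Phi' = \emptyset$. Your fallback does not repair this: duplication offers no protection, because deletion is free---distances are computed only over the surviving agenda, so the controller simply deletes all copies. A correct proof therefore requires genuinely modifying the construction so that the~$J^{*}_{\alpha}$-versus-$J^{*}_{\beta}$ margin survives arbitrary deletions, not just the case analysis you outline; note that this gap is equally present in the paper's own one-sentence argument.
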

\begin{proof}
\SigmaP{2}-hardness for \CautiousExactControlByDeletingIssues{\Kemeny}
and \BraveExactControlByDeletingIssues{\Kemeny}
for the case where~$\Phi' = \emptyset$
follows from the proof of
Proposition~\ref{prop:control-addition-cautious-hardness}.
In the setting used in this hardness proof, the only way to
achieve Kemeny outcomes~$J^{*}$ with~$L \subseteq J^{*}$
is to choose agendas~$\Phi''$ of the form described in the proof.
Therefore, the \SigmaP{2}-hardness proof works even for the case
where~$\Phi' = \emptyset$.
\end{proof}

\subsection{Constraint-Based Judgment Aggregation}

The scenario of control by adding or deleting issues can also be
formulated in the constraint-based judgment aggregation framework.
In this case, it makes most sense to consider the extended variant
of the constraint-based judgment aggregation framework where
the integrity constraint~$\Gamma$ is allowed to contain propositional
variables that are not contained in the set~$\III$ of issues.
Because in the scenario of control by adding or deleting issues,
the integrity constraint is fixed, whereas the set of issues is subject
to control by the external agent, it is unclear how to adapt the
integrity constraint after the set of issues has been changed.
In the extended constraint-based judgment aggregation framework,
there is no need to adapt the integrity constraint after the set of
issues has been changed.

The result of Theorem~\ref{thm:control-theorem} also holds in the
extended variant of the constraint-based judgment aggregation framework.
The membership results of
Proposition~\ref{prop:control-addition-cautious-membership}
and Corollaries~\ref{cor:control-addition-brave-membership}
and~\ref{cor:control-deletion-membership} also apply to this setting.
Moreover, the proofs of the hardness results of
Proposition~\ref{prop:control-addition-cautious-membership}
and Corollaries~\ref{cor:control-addition-brave-hardness}
and~\ref{cor:control-deletion-hardness} involve agendas containing
only propositional variables and their negations.
Therefore, these hardness results also work for the setting
of constraint-based judgment aggregation.

\section{Conclusion}
\label{sec:conclusion}

We investigated several decision problems that formalize the problems
of manipulation, bribery and control (by adding or deleting issues)
for the Kemeny judgment aggregation procedure.
We showed that these problems are \SigmaP{2}-complete in their
general formulation.
The intractability results that we developed in this paper open up a
wide range of natural questions for future research.

It would be interesting to study to what extent these
intractability results for strategic behavior for the Kemeny rule
hold up in a more refined computational complexity analysis.
That is, do these intractability results also hold (1)~when we consider
restricted logic languages to specify the relation between issues
and (2)~when we use the more refined framework of parameterized
complexity to capture natural restricted settings of the various
problems.
In order to compare different judgment aggregation
procedures on the basis of the computational complexity of
strategic behavior, it is necessary to also study these properties
for further judgment aggregation procedures.
Another natural direction for future research is to study variants
of strategic behavior that are based on different notions of
preference over judgment sets that have been considered
in the literature.
Finally, the types of strategic behavior that we considered in this
paper are not the only relevant types that one needs to consider.
An example of strategic behavior that we did not consider in this
paper is control by adding or removing individuals (rather
than adding or removing issues).
It would be interesting to study such additional notions of
strategic behavior in judgment aggregation from a computational
complexity point of view as well.

%




\DeclareRobustCommand{\DE}[3]{#3}

\bibliographystyle{abbrv}



\begin{contact}
Ronald de Haan\\
Algorithms \& Complexity Group\\
Technische Universit\"{a}t Wien\\
Vienna, Austria\\
\email{dehaan@ac.tuwien.ac.at}
\end{contact}


\end{document}